\documentclass{article}

\usepackage[english]{babel}

\usepackage[letterpaper,top=2cm,bottom=2cm,left=3cm,right=3cm,marginparwidth=1.75cm]{geometry}
\usepackage{parskip}

\usepackage{algorithm}
\usepackage{algorithmic}
\usepackage{booktabs}

\usepackage{amsmath}
\usepackage{amsfonts,amssymb}
\usepackage{graphicx}
\usepackage[colorlinks=true, allcolors=blue]{hyperref}
\usepackage{natbib}
\usepackage{amsthm}
\usepackage[T1]{fontenc}
\usepackage{xcolor}

\usepackage{makecell}
\usepackage{multirow}
\usepackage{booktabs}
\usepackage{threeparttable}
\usepackage{bm}
\usepackage{wrapfig}

\usepackage{subfigure}

\newtheorem{lemma}{Lemma}
\newtheorem{theorem}{Theorem}

\newtheorem{remark}{Remark}

\ifx\assumption\undefined
\newtheorem{assumption}{Assumption}

\fi
\theoremstyle{remark}

\newcommand{\Norm}[1]{\left\|#1\right\|}

\newcommand{\dotprod}[2]{\left\langle#1,#2\right\rangle}
\newcommand{\Abs}[1]{\left\vert #1 \right\vert}
\newcommand{\tr}[1]{{\rm tr}\left( #1 \right)}

\renewcommand{\vec}[1]{{\rm vec}\left( #1 \right)}
\newcommand{\argmax}[1]{\underset{ #1 }{\mathrm{argmax}}\, }
\newcommand{\argmin}[1]{\underset{ #1 }{\mathrm{argmin}}\, }

\newcommand{\EE}{\mathbb{E}}
\newcommand{\cO}{\mathcal{O}}

\newcommand{\RR}{\mathbb{R}}

\newcommand{\cA}{\mathcal{A}}
\newcommand{\cL}{\mathcal{L}}

\newcommand{\Lp}{\mathcal{L}_{\mathrm{pre}}}
\newcommand{\Lforget}{\mathcal{L}_{\mathrm{forget}}}
\newcommand{\Lforgeth}{\hat{\mathcal{L}}_{\mathrm{forget}}}
\newcommand{\Lsft}{\mathcal{L}_{\mathrm{SFT}}}

\allowdisplaybreaks[4]

\title{\textbf{Optimizer-Model Consistency: Full Finetuning with the Same Optimizer as Pretraining Forgets Less}}

\author{Yuxing Liu$^{1}$ \qquad Jianyu Wang$^{2}$ \qquad Tong Zhang$^{1}$ \\ \\
{
$^{1}$UIUC}\footnote{All experiments were conducted by the university.} 
\quad {$^{2}$Apple } 
}

\date{}

\begin{document}
\maketitle

\begin{abstract}
    \noindent
    Optimizers play an important role in both pretraining and finetuning stages when training large language models (LLMs). 
    In this paper, we present an observation that full finetuning with the same optimizer as in pretraining achieves a better learning-forgetting tradeoff, i.e., forgetting less while achieving the same or better performance on the new task, than other optimizers and, possibly surprisingly, LoRA, during the supervised finetuning (SFT) stage. We term this phenomenon optimizer-model consistency.
    To better understand it, through controlled experiments and theoretical analysis, we show that: 1) optimizers can shape the models by having regularization effects on the activations, leading to different landscapes around the pretrained checkpoints; 2) in response to this regularization effect, the weight update in SFT should follow some specific structures to lower forgetting of the knowledge learned in pretraining, which can be obtained by using the same optimizer.
    Moreover, we specifically compare Muon and AdamW when they are employed throughout the pretraining and SFT stages and find that Muon performs worse when finetuned for reasoning tasks. With a synthetic language modeling experiment, we demonstrate that this can come from Muon's strong tendency towards rote memorization, which may hurt pattern acquisition with a small amount of data, as for SFT.
\end{abstract}

\section{Introduction}

Large language models (LLMs) have achieved remarkable success under the pretrain-finetune training paradigm \citep{radford2019language,ouyang2022instructgpt,grattafiori2024llama,yang2025qwen3}. In the pretraining stage, the models are trained on an enormous mixed corpus to obtain general language ability.
With the pretrained checkpoint in hand, we typically start the finetuning stage with supervised finetuning (SFT), where models are trained upon the pretrained checkpoints with high-quality but relatively small amounts of data to learn the skills in specific downstream tasks, e.g., math and code.
SFT is a significantly different training stage from pretraining, since models not only need to learn novel knowledge and skills, but also preserve the general language ability learned in pretraining. We call balancing the two factors the learning-forgetting tradeoff.

Training algorithms, or optimizers, play a central role in the learning-forgetting tradeoff. Among the optimizers, the current workhorse for LLM training is undoubtedly AdamW~\citep{kingma2014adam,loshchilov2017decoupled} and its variants, with Muon~\citep{jordan2024muon} and other recently emerged matrix-structured optimizers as strong competitors applied in frontier model training~\citep{team2026kimi,zeng2025glm,deepseekai2026deepseekv4}. Specifically in SFT, parameter-efficient training methods are also powerful, with LoRA~\citep{hu2022lora} as the most popular and representative one. The rich diversity in training algorithms makes us wonder: 
\textit{Which training algorithms achieve the best learning-forgetting tradeoff in the SFT stage?}

In this paper, we provide an answer to the question: full finetuning with the same (family of) optimizer as pretraining actually leads to the best learning-forgetting tradeoff compared to other optimizers and LoRA. As presented in Figure~\ref{fig:llama_pareto_optimal_comparison}, we plot a Pareto frontier where each point corresponds to a learning rate choice and a training period for a specific training method. 
We can observe that the solid blue line presenting full finetuning with AdamW (the same optimizer as pretraining) is at the uppermost and rightmost position of the figure, which implies that it has the least forgetting while achieving the same or even more learning compared to LoRA and Muon.
We find this observation that full finetuning with the same optimizer as pretraining in the SFT stage gives the best learning-forgetting tradeoff consistently holds. We term this phenomenon \textbf{optimizer-model consistency}.

\begin{figure}
    \centering 
   \includegraphics[width=0.8\linewidth]{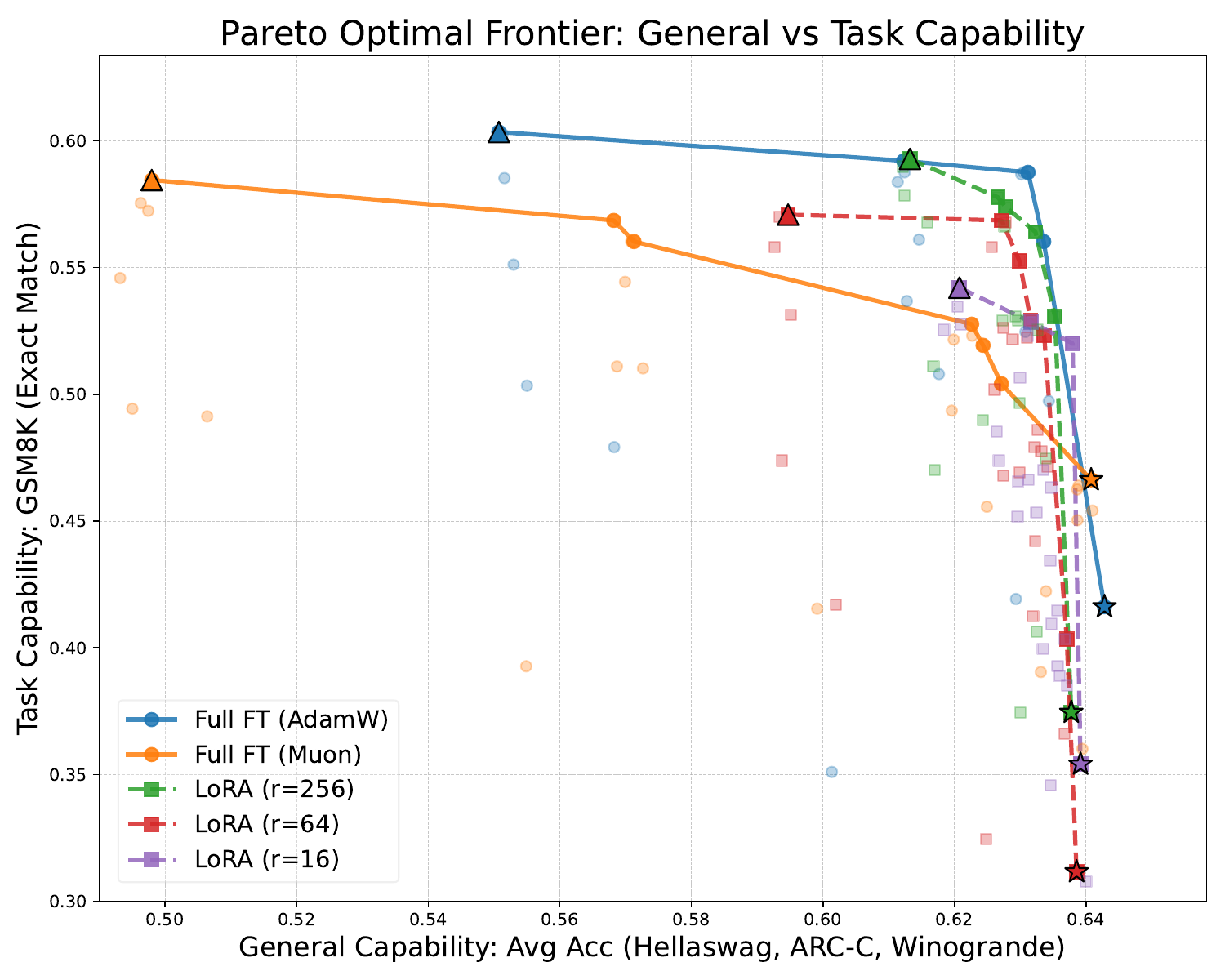}
  \vskip-0.2cm
  \caption{The Pareto frontier of different optimizers and LoRA finetuning Llama-2-7B with the MetaMathQA dataset for $1$ epoch. The $x$ and $y$ axes measure how much general knowledge the model forgets and how much math ability the model learns during SFT, respectively (higher is better). Full finetuning with the same optimizer as pretraining (AdamW) achieves the best learning-forgetting tradeoff, i.e., forgetting less when learning the same or better, compared to LoRA and other optimizers. Please refer to Section~\ref{sec:experiments_same_optimizer_forgets_less} to see detailed experiment settings.} \vspace{-0.5cm}
  \label{fig:llama_pareto_optimal_comparison}
\end{figure}

To understand why optimizer-model consistency occurs, we start by considering the difference in the models pretrained by different optimizers. 
Since LLM training is generally nonconvex, different (families of) optimizers consistently favor solutions with specific properties.
We discuss the properties in weights and activations, and consider the optimizer framework presented in \citet{pethick2025training} with matrix induced norms, showing that the activation properties of the model can originate from the optimizer's regularization effect. Then, based on the understanding of the activation regularization effect, we provide a theoretical explanation for optimizer-model consistency under specific assumptions modeling the observations.

From the understanding of optimizer-model consistency, we know that we should employ the same (family of) optimizers in SFT to have a better learning-forgetting tradeoff.
It also implies that the learning-forgetting tradeoff when an optimizer is employed throughout pretraining and SFT is a comprehensive metric for an optimizer in LLM training compared to only pretraining performance.
We conduct a case study for Muon and AdamW in this way and observe that though Muon generally provides a stronger pretrained checkpoint, its performance after SFT varies across different tasks, and can possibly be no better than AdamW if finetuned on reasoning tasks like math. 
We further conduct a synthetic experiment to suggest that a possible reason is that Muon may be better at memorization and potentially weaker at learning patterns due to its update formula, which is particularly significant in tasks with only a small amount of data, like SFT.
Therefore, a better finetuning method may be an interesting direction for improving Muon's overall performance in LLM training.

We summarize our contributions:
\vspace{-0.2cm}
\begin{itemize}
    \item We demonstrate the \textbf{optimizer-model consistency} phenomenon: full finetuning with the same optimizer as pretraining leads to a better learning-forgetting tradeoff than full finetuning with other optimizers or LoRA in the SFT stage. Specifically comparing with LoRA, we observe that full finetuning can surpass LoRA in both learning and forgetting when a learning rate sweep is properly considered (Section~\ref{sec:experiments_same_optimizer_forgets_less}).
    
    \item We show that an optimizer generally applies regularization to the activations related to the norm on which it is derived. We build a theoretical explanation for the optimizer-model consistency phenomenon under simplified assumptions by connecting the activation properties and how SFT update affects the forgetting of pretraining knowledge (Section~\ref{sec:understanding_phenomenon}).
    \item We specifically compare Muon and AdamW when they are employed throughout pretraining and SFT and find that Muon can lose its advantages when finetuning on reasoning tasks like math. We demonstrate that this may be because Muon tends to memorize rather than learn patterns through a synthetic language modeling experiment (Section~\ref{sec:muon_better_memorization}).
\end{itemize}

\section{Full Finetuning with the
Same Optimizer Forgets Less}\label{sec:experiments_same_optimizer_forgets_less}
In this section, we present the optimizer-model consistency phenomenon in two settings: finetuning GPT-2-small~\citep{ouyang2022instructgpt} and Llama-2-7B~\citep{touvron2023llama}. 
To better compare the SFT performance across different optimizers under different checkpoints, we conduct a full pretrain-finetune training pipeline on the GPT-2 model. We employ OpenWebText~\citep{gokaslan2019openWeb} as the pretraining dataset, and consider three different types of tasks for SFT: MetaMathQA~\citep{yu2023metamath} for math, Alpaca~\citep{alpaca} for instruction following, Magicoder-Evol-Instruct-110K~\citep{wei2023magicoder} for coding. 

For GPT-2 experiments, we evaluate the OpenWebText validation loss as a measure for forgetting, i.e., how much the models preserve the pretraining knowledge, and the loss on the new task validation loss as a measure for learning, i.e., how much the models learn in SFT (lower is better). 
For a comprehensive and fair comparison between algorithms, we plot the Pareto frontier of the learning-forgetting tradeoff, with $x$ and $y$-axis denoting the forgetting and learning metric, respectively. 
For each training algorithm, we consider all the runs in a learning rate grid search, which always contains the learning rate that achieves the best SFT performance, and we evaluate intermediate checkpoints for a single run. Then, we gather all the data points for each training algorithm to present the Pareto frontier to find the limit of this training algorithm in learning and forgetting.
The detailed experiment settings can be found in Appendix~\ref{appendix:experiment_settings_same_optimizer_forgets_less}.

\textbf{Muon achieves better pretraining performance compared to AdamW.} 
We present the results of the models pretrained by Adam and Muon in Table~\ref{tab:pretrain_results}. We can see that Muon outperforms AdamW across the pretraining dataset and different downstream tasks, validating the favorable performance of Muon in pretraining~\citep{jordan2024muon,liu2025muon,wen2025fantastic}.

\begin{table}[H]
    \centering
    \vspace{-0.5cm}
    \caption{The validation loss of different datasets after pretraining on GPT-2.}
    \vspace{0.2cm}
    \label{tab:pretrain_results}
    \begin{tabular}{ccccc}
    \toprule
    Optimizer for Pretraining & OpenWebText & Alpaca & MetaMathQA & Magicoder \\
    \midrule
    AdamW & 3.010 & 3.483 & 3.510 & 3.264 \\
    Muon & 2.974 & 3.235 & 3.200 & 3.000 \\
    \bottomrule
    \end{tabular}
    \vspace{-0.45cm}
\end{table}

\textbf{The same optimizer as pretraining leads to the best learning-forgetting tradeoff.}
With the pretrained checkpoints in hand, we start the SFT stage.
The Pareto frontier plots of full finetuning with AdamW and Muon to quantify the learning-forgetting tradeoff across different downstream tasks are summarized in Figure~\ref{fig:sft_optimizer_compare_same_ckpt}, where the $x$-axis denotes the OpenWebText validation loss and the $y$-axis denotes the task validation loss. One can observe that the same optimizer as pretraining always yields the best tradeoff, no matter whether it is AdamW or Muon, which is consistent with the observation in \citet{liu2025muon} on an instruction following task. It is also worth noting that SGD achieves a reasonable downstream performance, indicating its capability to learn from a small amount of data, but it forgets significantly more when the same downstream ability is learned.

\textbf{Full finetuning can forget less than LoRA with appropriate hyperparameters.}
We also evaluate the learning-forgetting tradeoff of LoRA, which is commonly believed to forget less because of its low-rank nature~\citep{biderman2024lora}. The results are presented in Figure~\ref{fig:pareto_frontier_comparison_with_lora}. We observe that LoRA generally tends to forget less when it achieves the best learning performance (the best $y$-axis value), which is consistent with the common intuition and existing results. 
However, when taking a full view of the tradeoff, we find that for any single LoRA checkpoint, there is always a full finetuning checkpoint that is absolutely better, i.e., with less forgetting while achieving the same or better learning.
Therefore, LoRA in this setting generally gives a worse learning-forgetting tradeoff compared to full finetuning with the same optimizer, and additionally, we can observe that a larger LoRA rank generally corresponds to a better tradeoff. 
While LoRA is an effective training algorithm with incredible parameter efficiency, the results suggest that it does not necessarily help in improving learning-forgetting tradeoff performance.

\begin{figure}[t]
\centering

\begin{tabular}{ccc}
	\includegraphics[width=0.3\linewidth]{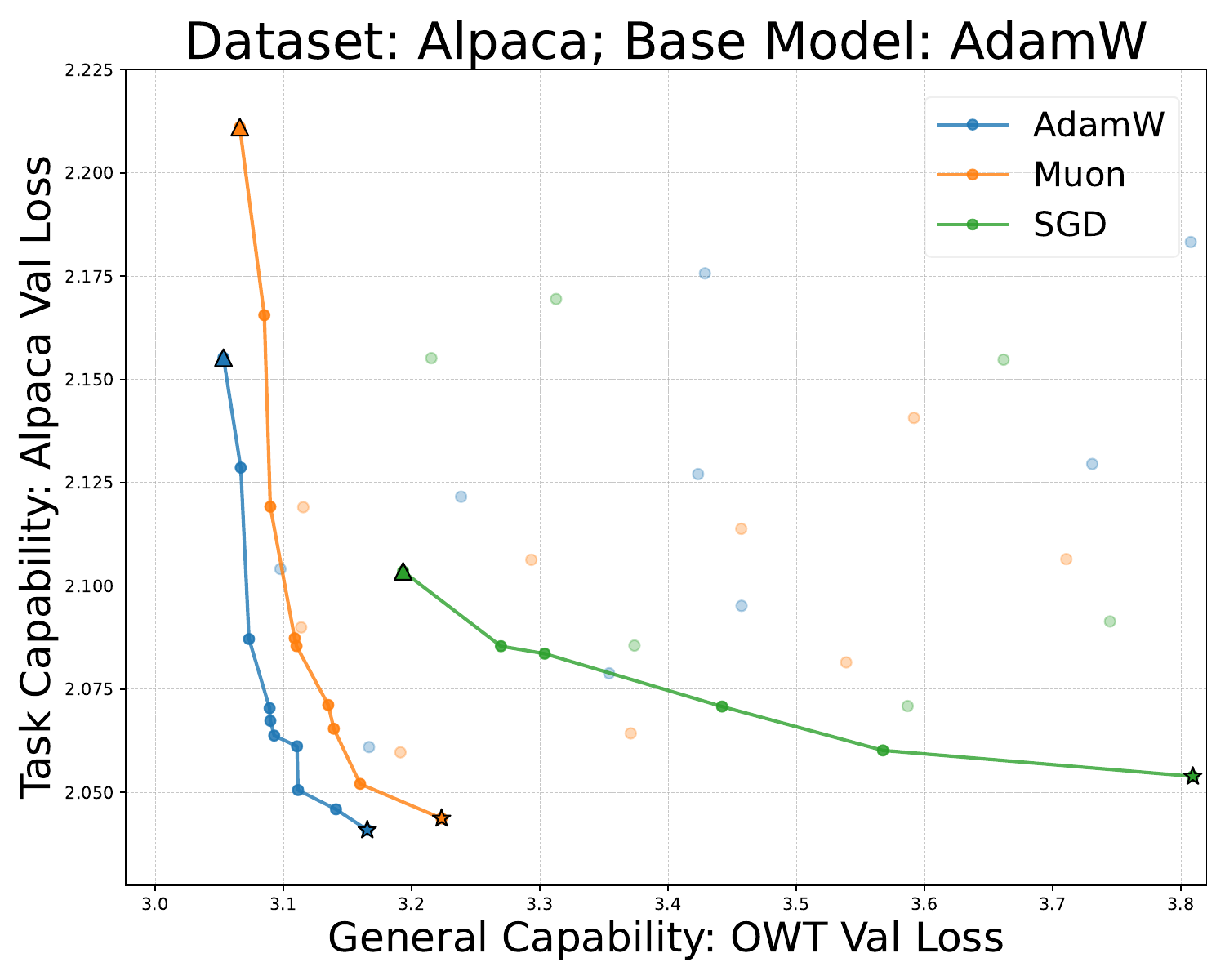}
	\!\!\!
	& \includegraphics[width=0.3\linewidth]{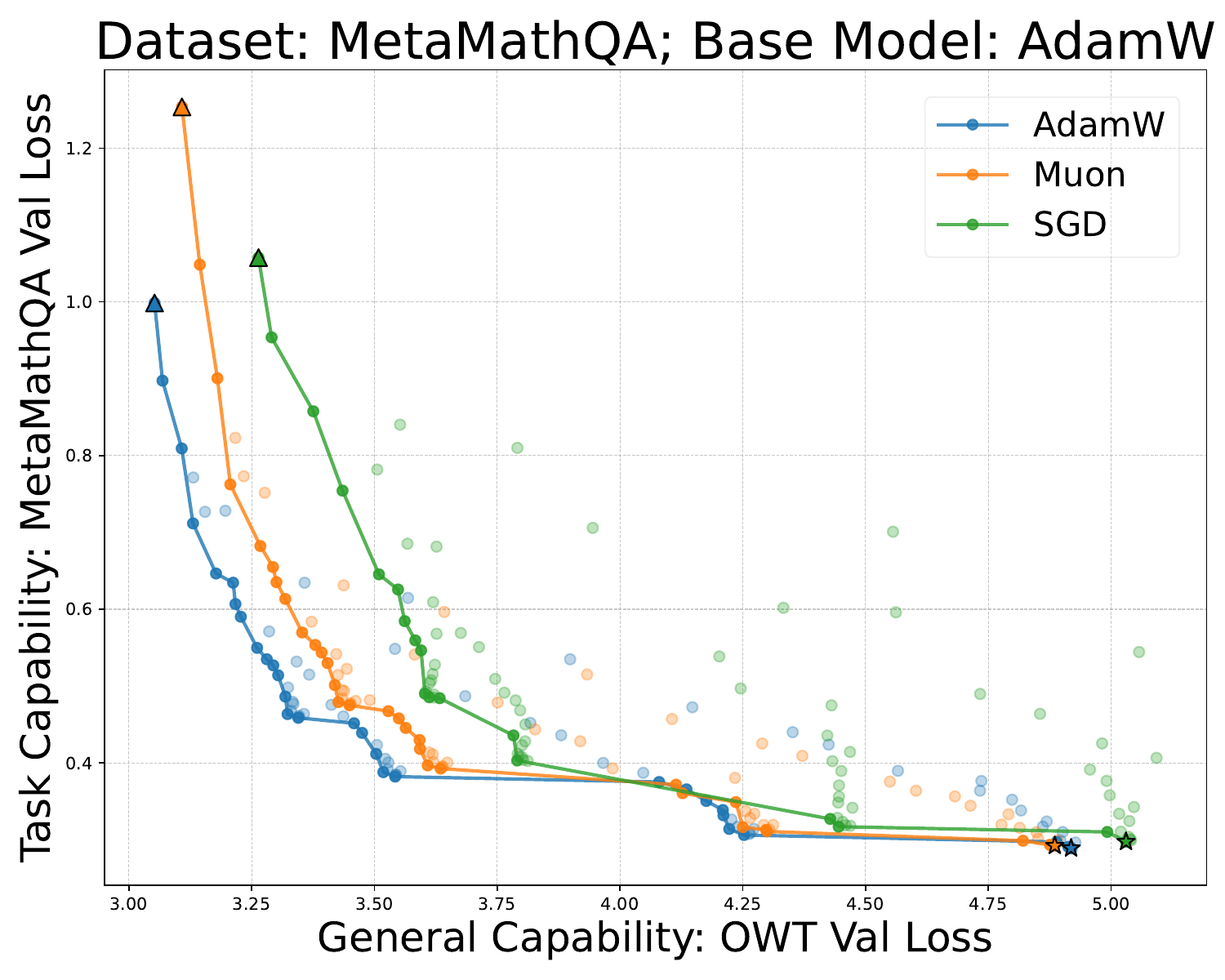} 
    \!\!\!
	& \includegraphics[width=0.3\linewidth]{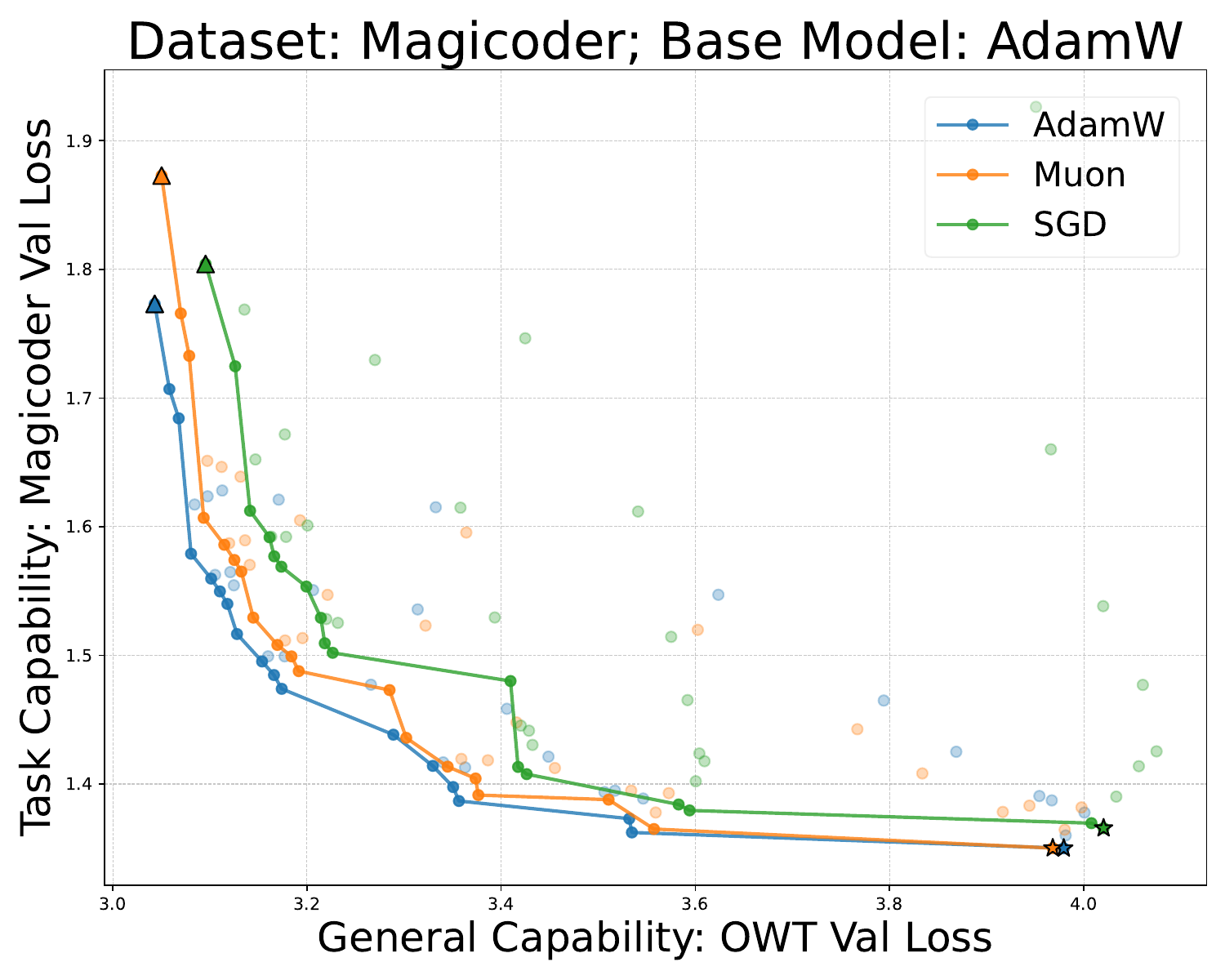} \\
    \includegraphics[width=0.3\linewidth]{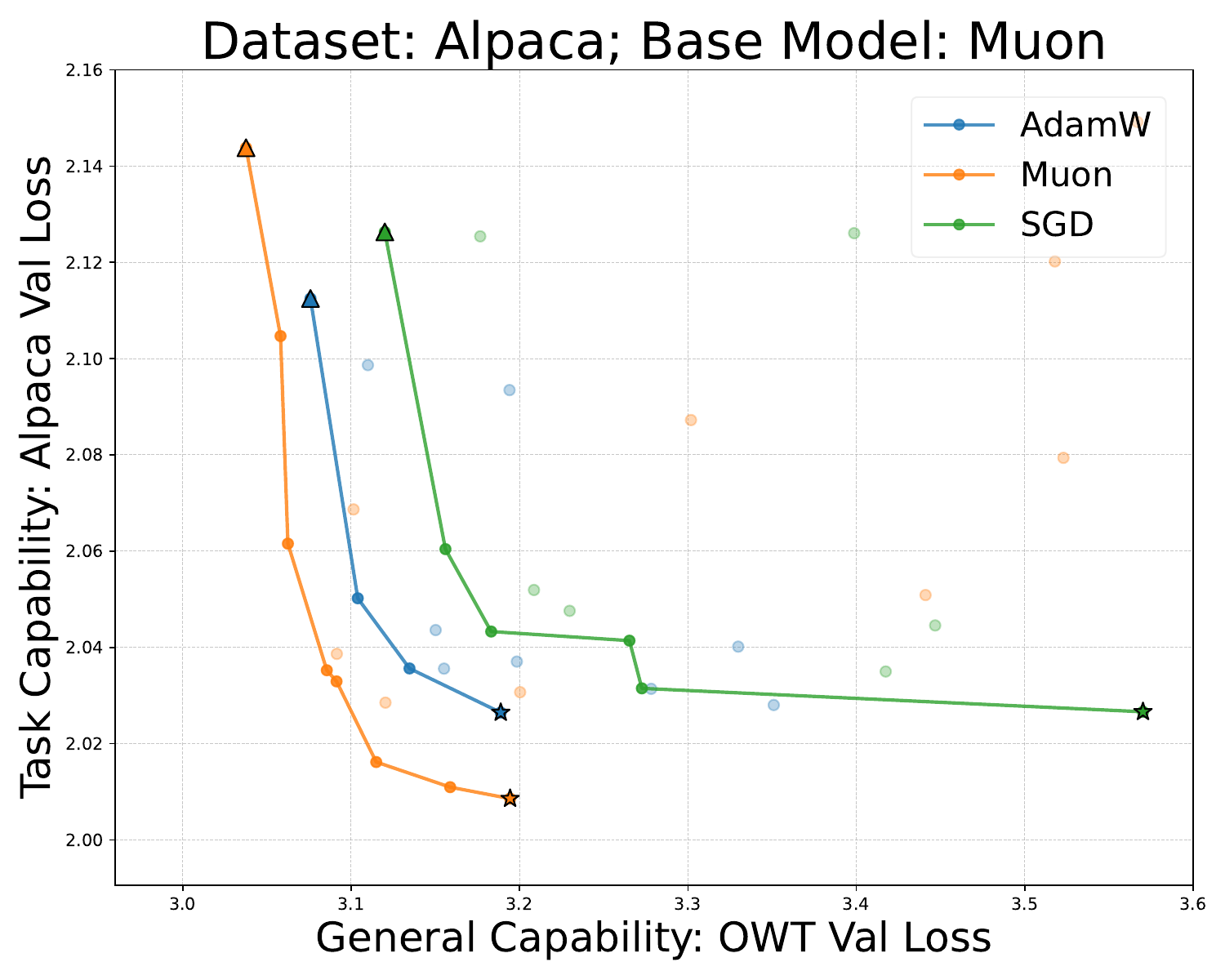}
	\!\!\!
	& \includegraphics[width=0.3\linewidth]{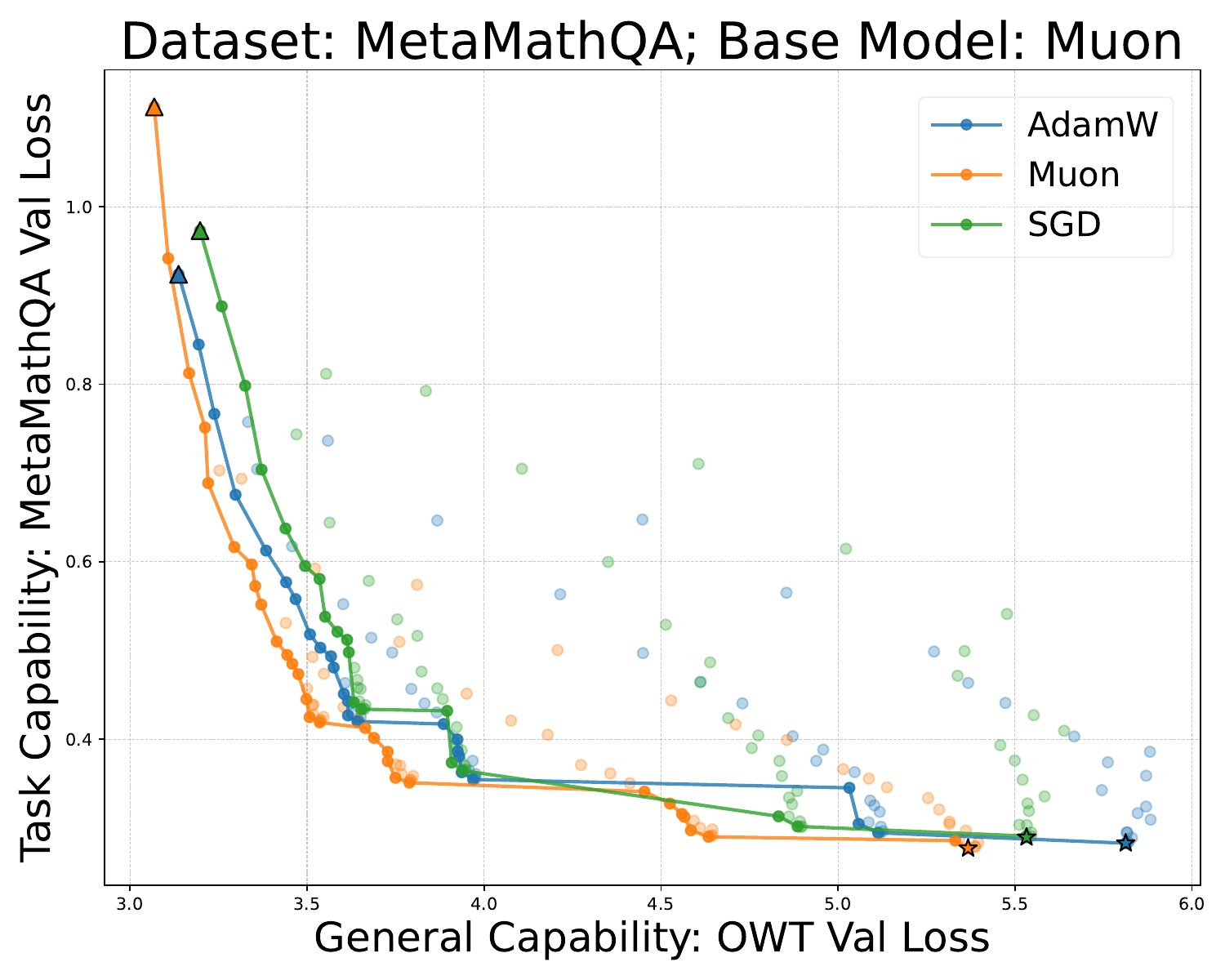} 
    \!\!\!
	& \includegraphics[width=0.3\linewidth]{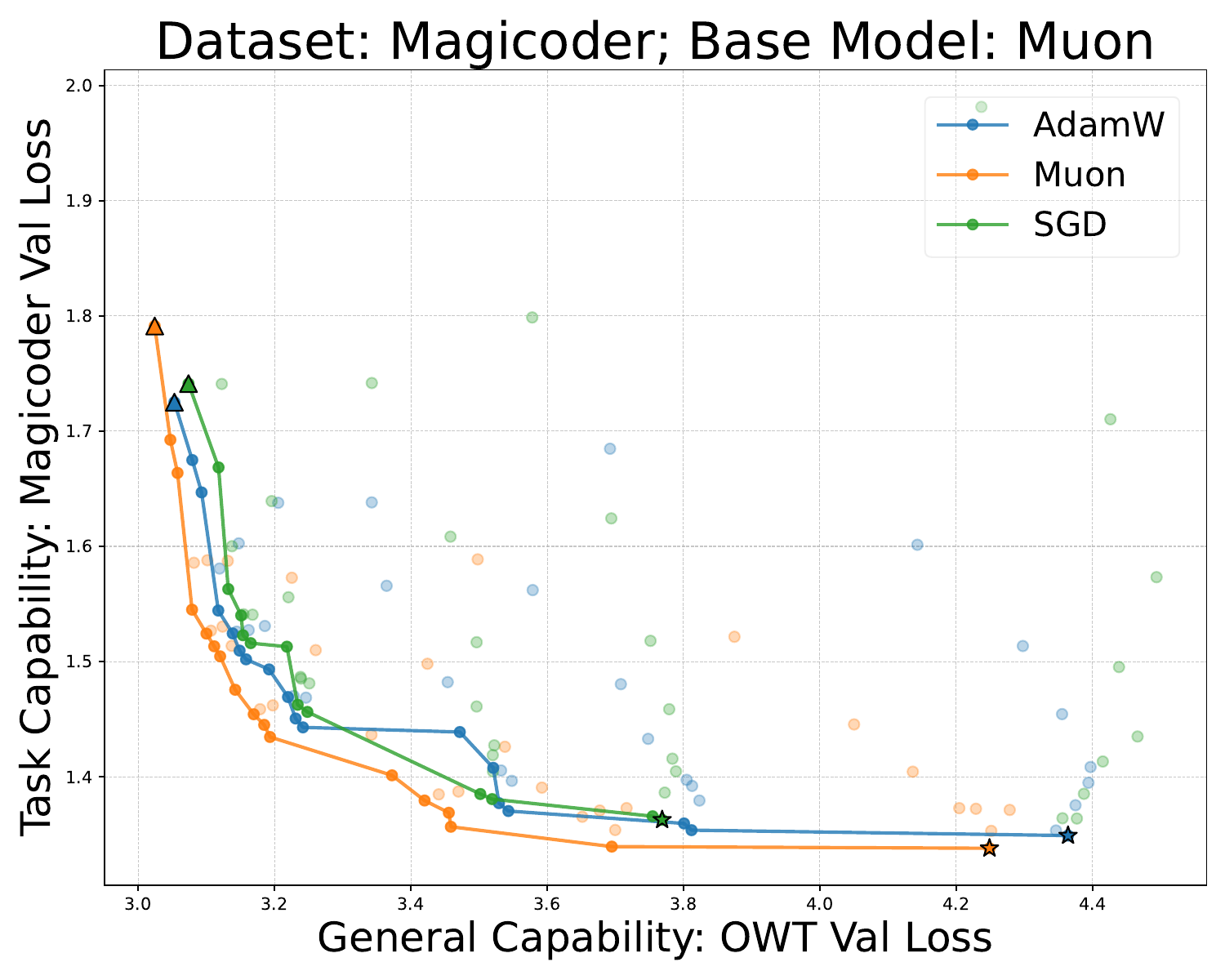}
\end{tabular} 

\vskip-0.2cm
\caption{Pareto
frontier plots of full finetuning with AdamW and Muon optimizers on GPT-2. The rows and columns present the results of different pretraining optimizers and SFT datasets, respectively. The measure for learning and forgetting is the corresponding validation loss (lower is better). We can observe that the same optimizer performs better in the forgetting-learning trade-off in SFT. } \vspace{-0.5cm}
\label{fig:sft_optimizer_compare_same_ckpt} 
\end{figure}

\textbf{Results hold on 7B model.} 
As shown in Figure~\ref{fig:llama_pareto_optimal_comparison}, our conclusions in small models hold still when finetuning Llama-2-7B with MetaMathQA, generally following the settings of \citet{biderman2024lora}. We can observe that full finetuning with Adam (the same optimizer as Llama-2-7B pretraining) yields a better learning-forgetting tradeoff, even without weight decay, compared to other full finetuning optimizers and LoRA. 
{We would be happy to also provide Muon results on this scale in the future due to the lack of Muon-pretrained open-source models on an appropriate scale currently.}

{For brevity, we name this phenomenon that full finetuning with the same optimizer as pretraining leads to the best learning-forgetting tradeoff as \textbf{optimizer-model consistency}.}

\begin{remark}[Discussion on Results in \citet{biderman2024lora}]\label{remark:discussion_lora_results}
    While our results seem to be contradictory to \citet{biderman2024lora}, where the authors suggest that LoRA learns less and forgets less under a similar training setting, we note that the results are actually compatible. 
    The major reason for the varying conclusions is that we consider different learning rates when modeling the learning-forgetting tradeoff, while \citet{biderman2024lora} considers only one optimal learning rate. 
    The importance of learning rate choice for forgetting in SFT is also discussed in a concurrent paper~\citep{rofin2026learning}.
    As shown in Figure~\ref{fig:reproduction_lora_forgets_less_llama}, we can also reproduce the observations of \citet{biderman2024lora} if we only consider one learning rate optimal in learning for each training setting. 
    However, when the learning rate choice is appropriately taken into account, full finetuning can also learn less and forget less simply by lowering the learning rate, leading to that we can always find a checkpoint trained with full finetuning that surpasses LoRA in both learning and forgetting.
    \vspace{-0.2cm}
\end{remark}

\begin{figure}[t]
\centering

\begin{tabular}{ccc}
	\includegraphics[width=0.45\linewidth]{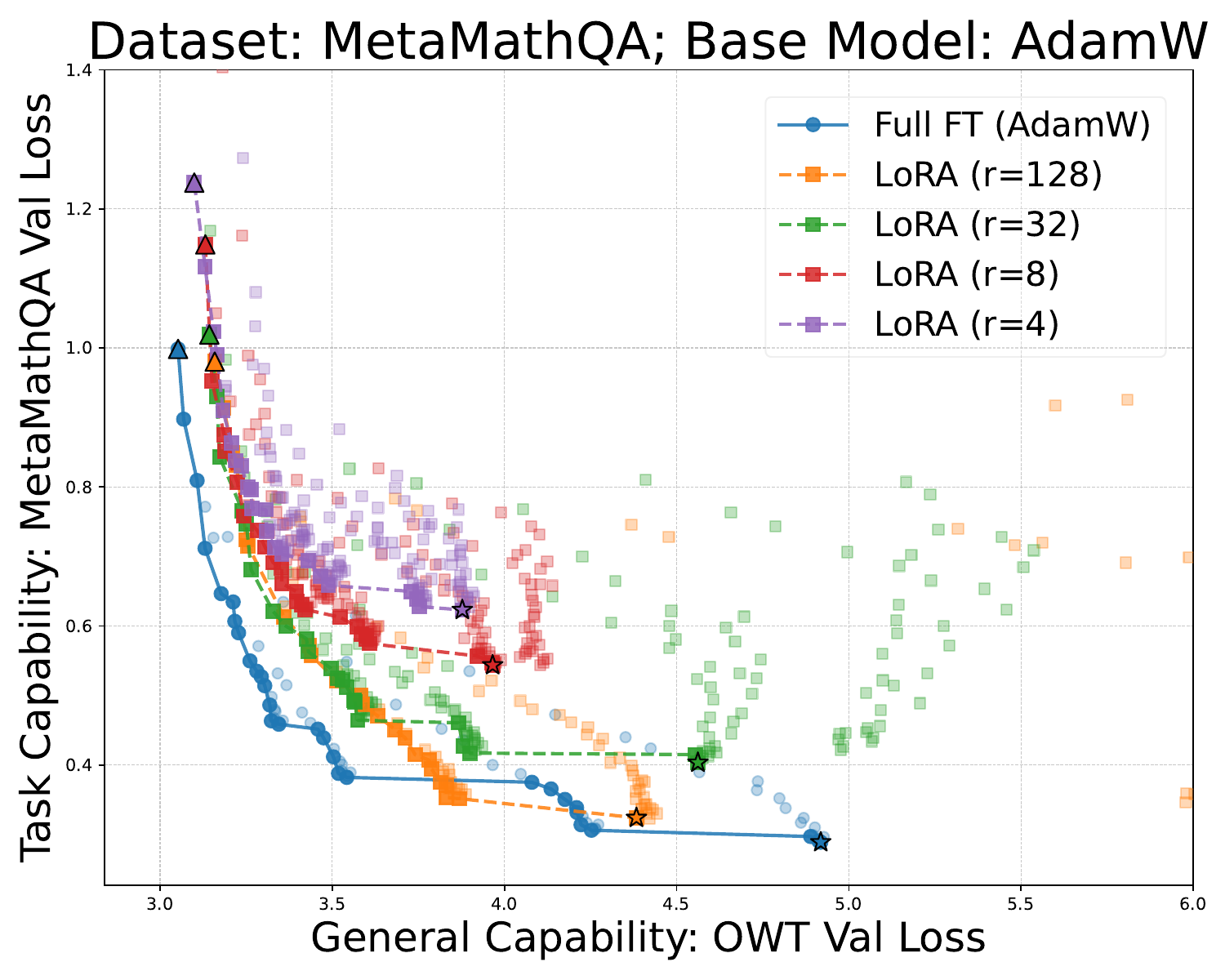}
	\!\!\!
	& \includegraphics[width=0.45\linewidth]{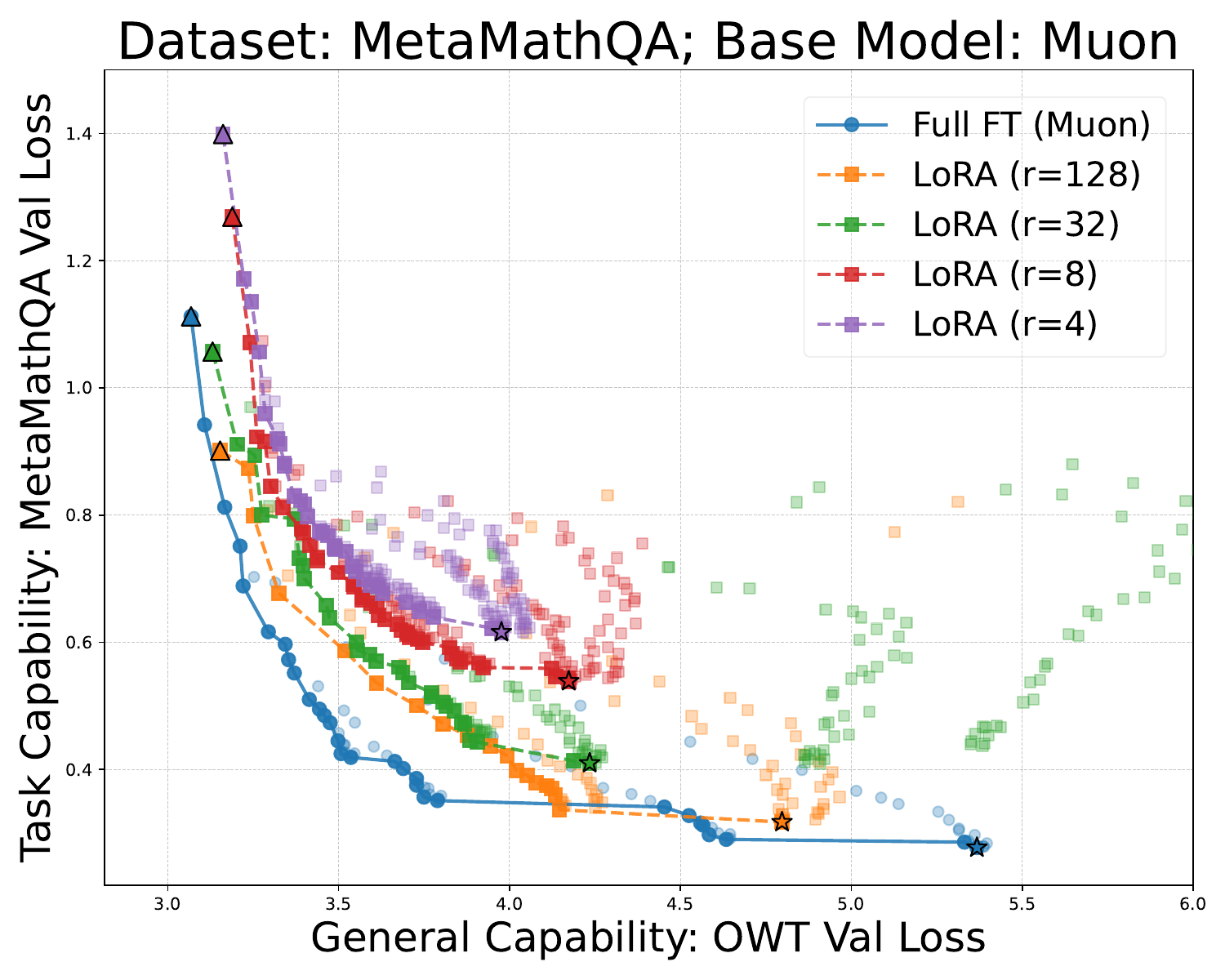} 
\end{tabular} 

\vskip-0.2cm
\caption{Pareto frontiers of full finetuning and LoRA on GPT-2 with the same optimizer as pretraining employed for both. Full finetuning generally achieves a better learning-forgetting tradeoff. } \vspace{-0.2cm}
\label{fig:pareto_frontier_comparison_with_lora} 
\end{figure}

\section{Why Optimizer-Model Consistency Happens}\label{sec:understanding_phenomenon}
The optimizer-model consistency phenomenon may seem surprising at first glance if we consider optimizers or training algorithms only as processes towards some fixed solutions.
However, optimizers actually have a much larger effect on the trained model than this common impression. In LLM training or other deep learning tasks, the losses are generally nonconvex, and different optimizers can lead to very different trajectories and convergence points. 
In LLM training or other deep learning tasks, the losses are generally nonconvex, and different optimizers can consistently lead to significantly different trajectories and convergence checkpoints. 
The difference in checkpoints is more than random noise, but consistent properties in the weights and activations of the model, which we will discuss in Section~\ref{sec:optimizers_shape_models}. 
Then, with these properties, we have the basis to better understand why optimizer-model consistency happens in Section~\ref{sec:theory_sft_forgets_less}.

\subsection{Optimizers Shape Models}\label{sec:optimizers_shape_models}

\begin{wraptable}{r}{0.5\textwidth}
    \centering
    \vspace{-0.3in}
    \caption{Singular spectrum comparison after pretraining. For a singular spectrum $\sigma_1 \ge \cdots \ge \sigma_n$, the stable rank is $\sum_{i=1}^n \sigma_i^2 / \sigma_1^2 $ and singular sparsity is $\sum_{i=1}^n \sigma_i / \sqrt{n\sum_{i=1}^n \sigma_i^2}$. We take the average of the values over all linear layers in GPT-2.}
    \vspace{0.1 in}
    \begin{tabular}{lcc}
    \toprule
    Optimizer & Stable Rank & Singular Sparsity \\ \midrule
    AdamW & $65.75$ & $0.837$ \\ \midrule
    Muon & $114.72$ & $0.875$ \\ 
    \bottomrule
    \end{tabular}
    \vspace{-0.1in}
    \label{tab:stable-rank-comparison}
\end{wraptable}

\textbf{Optimizers shape weights.}
We start with the observations on the difference in model weights trained by different optimizers through pretraining. 
As presented in Table~\ref{tab:stable-rank-comparison}, we can observe that the model trained by Muon shows a denser weight spectrum distribution (higher stable rank and singular sparsity value), which is consistent with the observations in \citet{liu2025muon,wang2025muon}. A straightforward intuition for this spectrum difference is that Muon updates are always approximately full-rank because of its formula, resulting in the final weights, an accumulation of all the updates, being closer to full-rank compared to AdamW.

\textbf{Optimizers regularize activations.}
In addition to explicit weight properties, optimizers also affect the activations of trained models on the pretraining dataset, which is closely related to how the models store pretraining knowledge. As presented in Figure~\ref{fig:adamw_muon_activation_sparsity}, we can observe that the activations, especially input activations, of models trained by AdamW are much sparser compared to those by Muon. 

\begin{figure}[th]
  \centering 
   \begin{tabular}{cc}
	\includegraphics[width=0.45\linewidth]{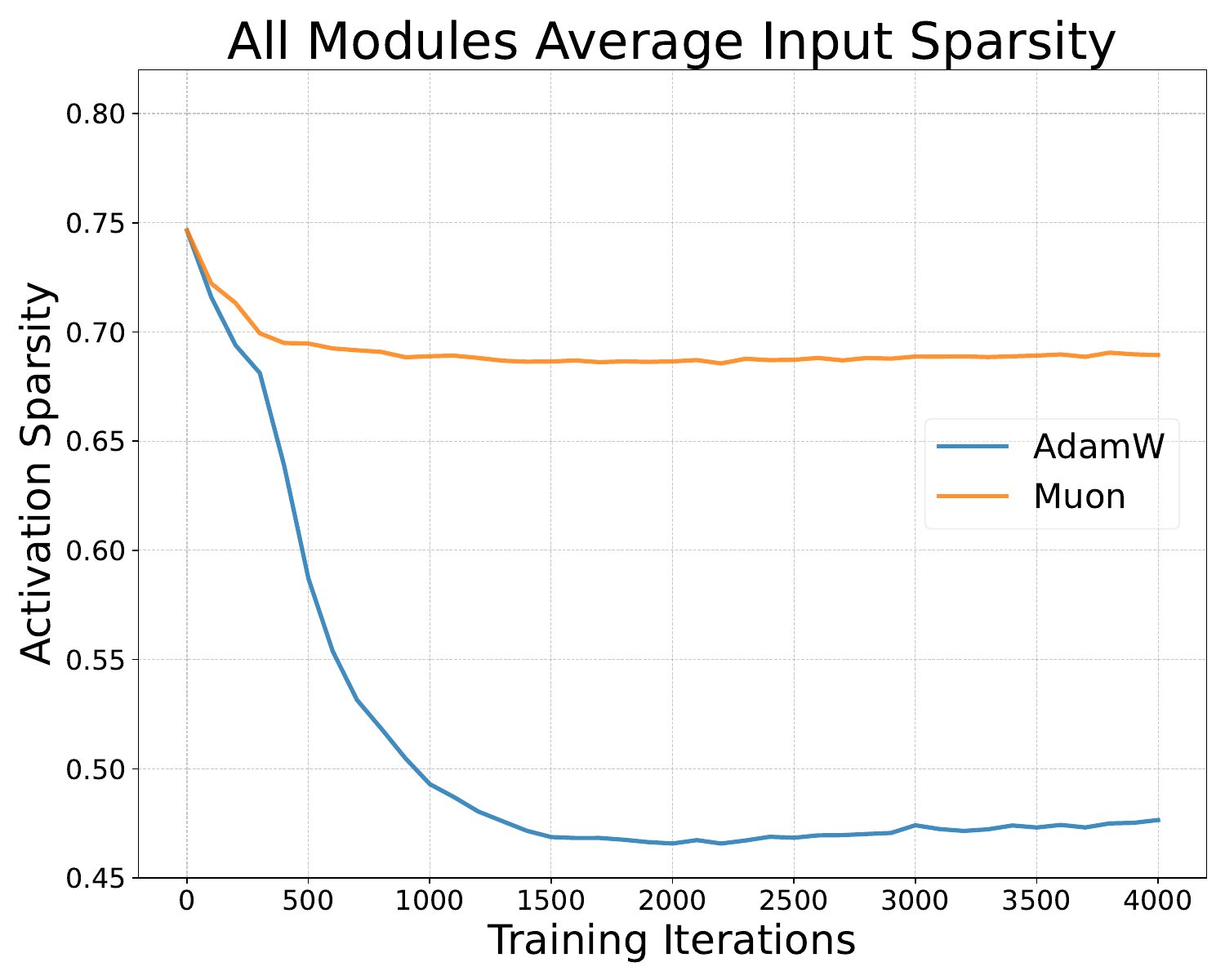}
	\!\!\!
	& \includegraphics[width=0.45\linewidth]{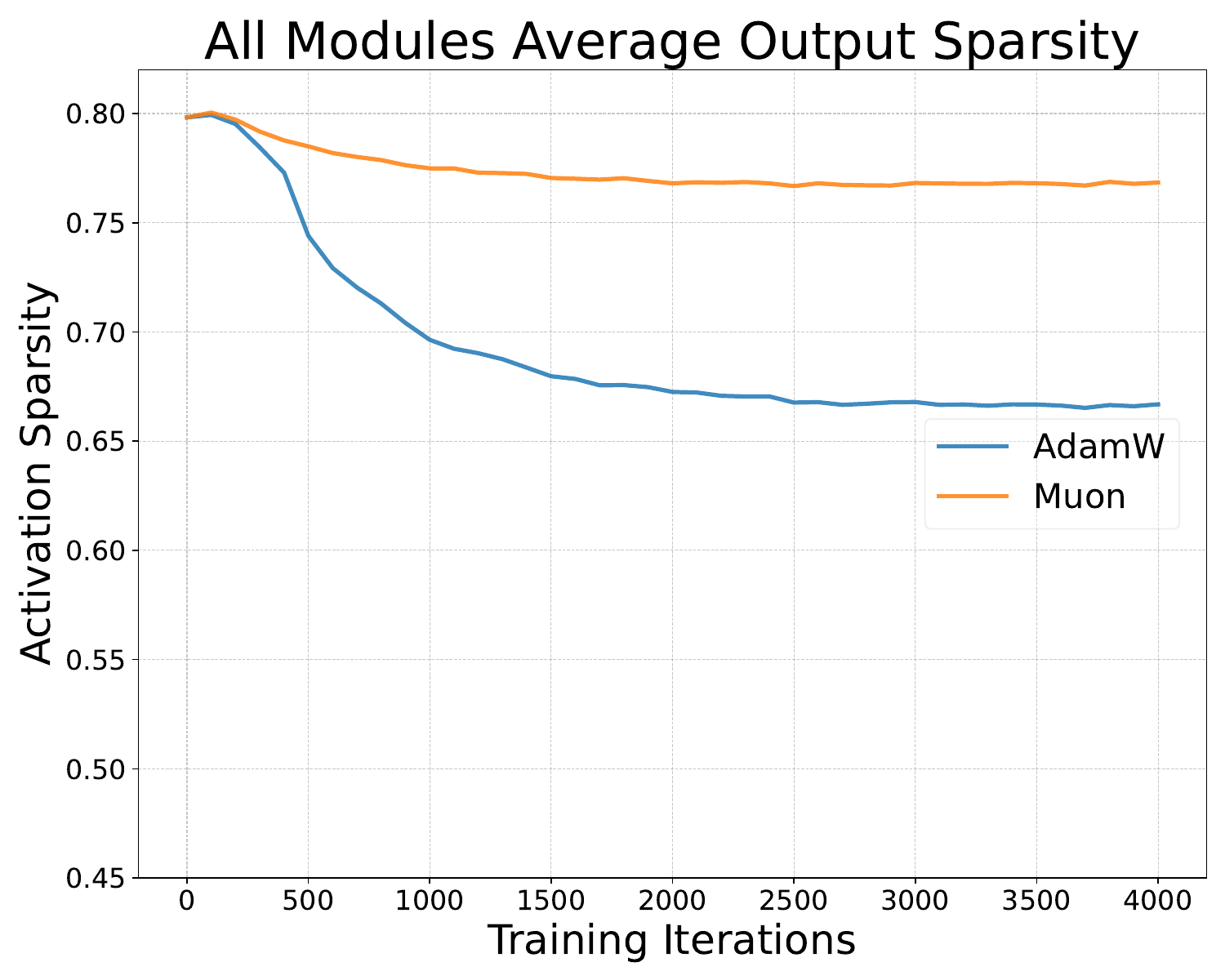} 
\end{tabular} 
  \vskip-0.2cm
  \caption{Activation sparsity of AdamW and Muon. We adopt basically the same pretraining settings as Section~\ref{sec:experiments_same_optimizer_forgets_less}, but for 4k iterations.
The $y$-axis represents the sparsity of activations, which is computed by $\Norm{x}_1 / (\sqrt{d} \Norm{x}_2)$ for an activation vector $x$ (lower means sparser). We take an average of the input and output sparsity over all the linear layers. }
  \vskip-0.2cm
  \label{fig:adamw_muon_activation_sparsity}
\end{figure}

To understand how this difference in activations originates, we start from the following framework for optimizer derivation~\citep{pethick2025training}: with $G_t$ as the gradient of weight $W_t \in \RR^{m\times n}$, we update the weight by
\begin{align}\label{eq:steepest_descent}
    W_{t+1} =& \underset{\Norm{W - W_t} \le R}{\text{argmin}} \dotprod{G_t}{W - W_t}  ,
\end{align}
where we consider a specific norm $\Norm{\cdot}$.
Since the weights of neural nets generally have a matrix structure and the activation computation is by matrix-vector multiplication, we focus on the matrix induced norms under the framework: for $\alpha, \beta \in [1,\infty]$, we denote
$
    \Norm{A}_{\alpha,\beta} \triangleq \max_{x \neq 0}\frac{\Norm{Ax}_{\beta}}{\Norm{x}_{\alpha}} 
$
and the optimizer derived from $\Norm{\cdot}_{\alpha,\beta}$ as $\cA_{\alpha,\beta}$.
Note that Muon is basically $\cA_{2,2}$ from its derivation~\citep{jordan2024muon,bernstein2024old} and SignSGD~\citep{bernstein2018signsgd} is $\cA_{1,\infty}$ by noticing that the vector infinite norm is equivalent to $\Norm{\cdot}_{1,\infty}$. Following~\citet{balles2018dissecting,balles2020geometry,kunstner2023noise}, we consider Adam as a powerful variant of SignSGD, thus it also falls in the family of $\cA_{1,\infty}$. \citet{pethick2025training,xu2026width} discuss additional optimizers derived from this framework with matrix induced norms, demonstrating its generality.

For an optimizer $\cA_{\alpha,\beta}$, the weight $W$ is updated with $\Norm{\cdot}_{\alpha,\beta}$ bounded. Besides a direct constraint on the weights, this can also be considered as some forms of regularization for the activations: regularizing the $\Norm{\cdot}_{\alpha}$ input activation $x$ and $\Norm{\cdot}_{\beta}$ for the output activation $y$, since we have $y = Wx$ and the matrix induced norm definition $\Norm{y}_\beta \le \Norm{W}_{\alpha,\beta} \Norm{x}_\alpha$. Specifically, we have:

\begin{itemize}
    \item Muon is $\cA_{2,2}$, thus regularizes input $x$ by $\Norm{\cdot}_2$ and output $y$ by $\Norm{\cdot}_2$.
    
    \item Adam is $\cA_{1,\infty}$, thus regularizes input $x$ by $\Norm{\cdot}_1$ and output $y$ by $\Norm{\cdot}_\infty$.
\end{itemize}

Different regularizations lead to different properties of activations. For example, $\Norm{\cdot}_1$ regularization corresponds to Lasso, which is widely known to result in a sparse solution. As presented in Figure~\ref{fig:optimizer_activation_sparsity}, empirical observations verify our intuition about activation properties. In addition to SignSGD ($\cA_{1,\infty}$), AdamW($\cA_{1,\infty}$), and Muon ($\cA_{2,2}$), we also consider algorithms $\cA_{1,1}$ and $\cA_{\infty,\infty}$ for a more comprehensive view of the regularization effect. Their details can be found in Appendix~\ref{appendix:experiment_setting}.
We can observe that for an algorithm $\cA_{\alpha,\beta}$, the input activation is consistently sparser when $\alpha$ is taken to be $1$, which holds for $\cA_{1,\infty}$ and $\cA_{1,1}$. The same argument also holds for the output activations when $\beta=1$, which verifies the regularization effect of activations and its relation to the optimizer choice.

\begin{figure}[t]
\centering

\begin{tabular}{ccc}
	\includegraphics[width=0.3\linewidth]{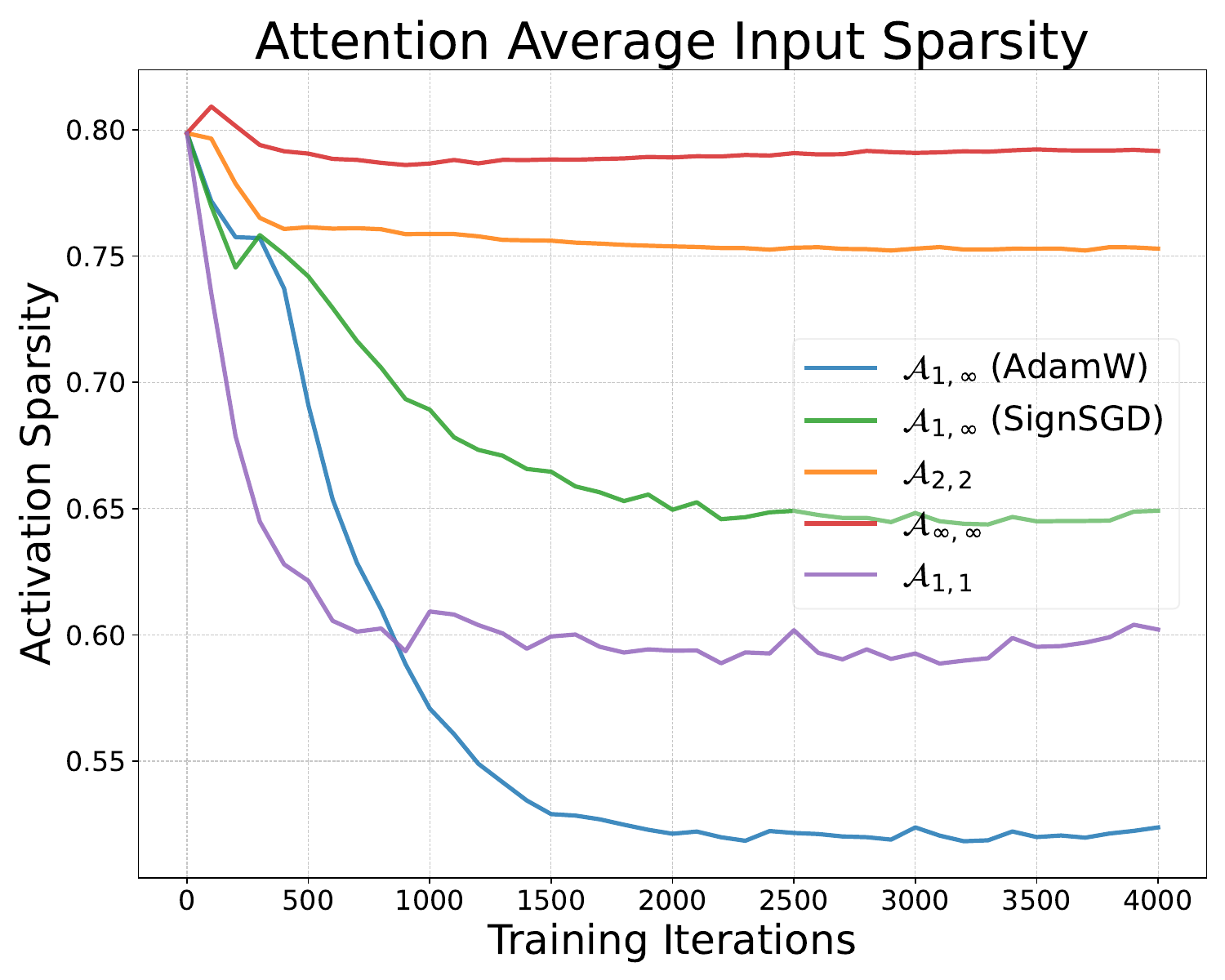}
	\!\!\!
	& \includegraphics[width=0.3\linewidth]{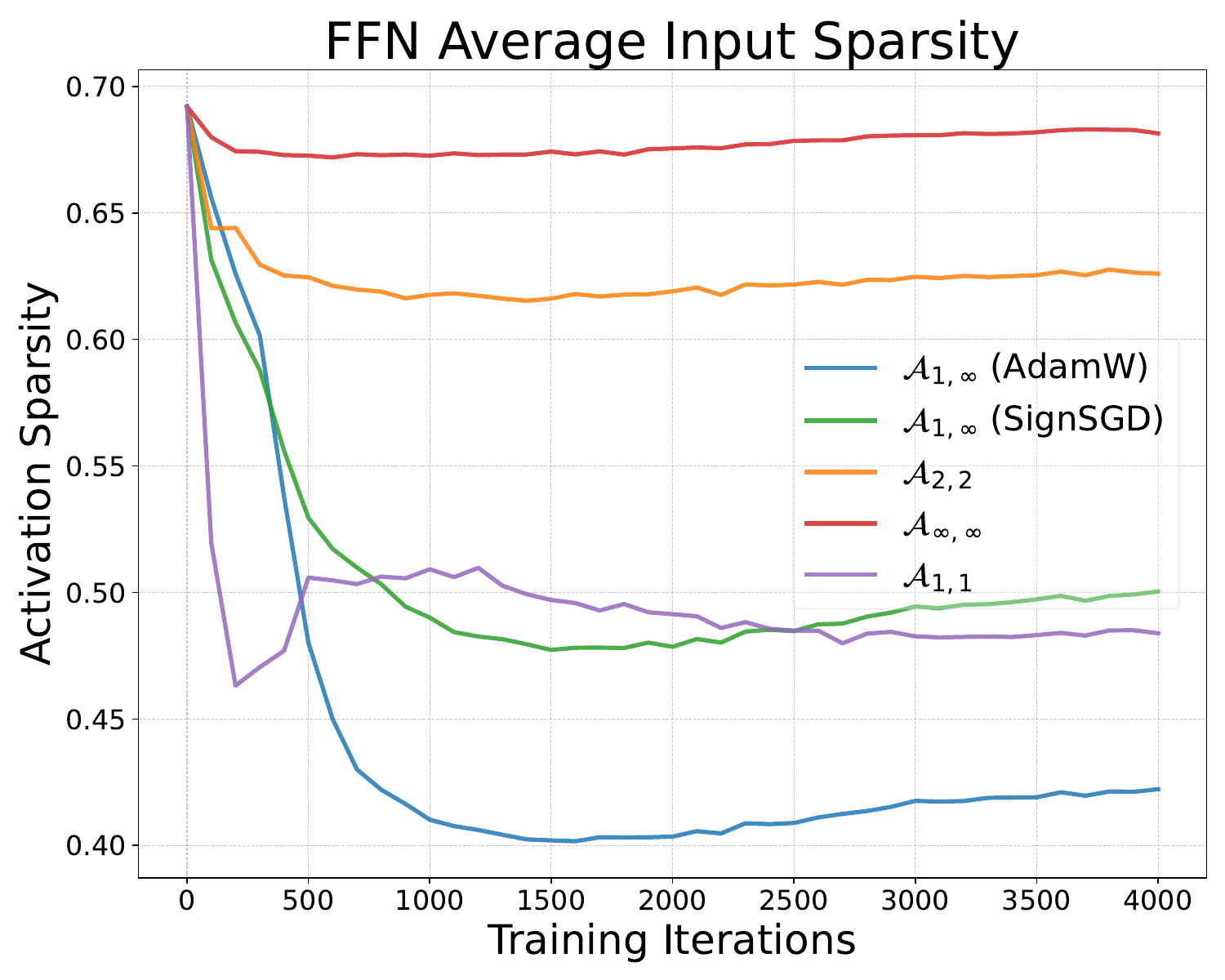} 
    \!\!\!
	& \includegraphics[width=0.3\linewidth]{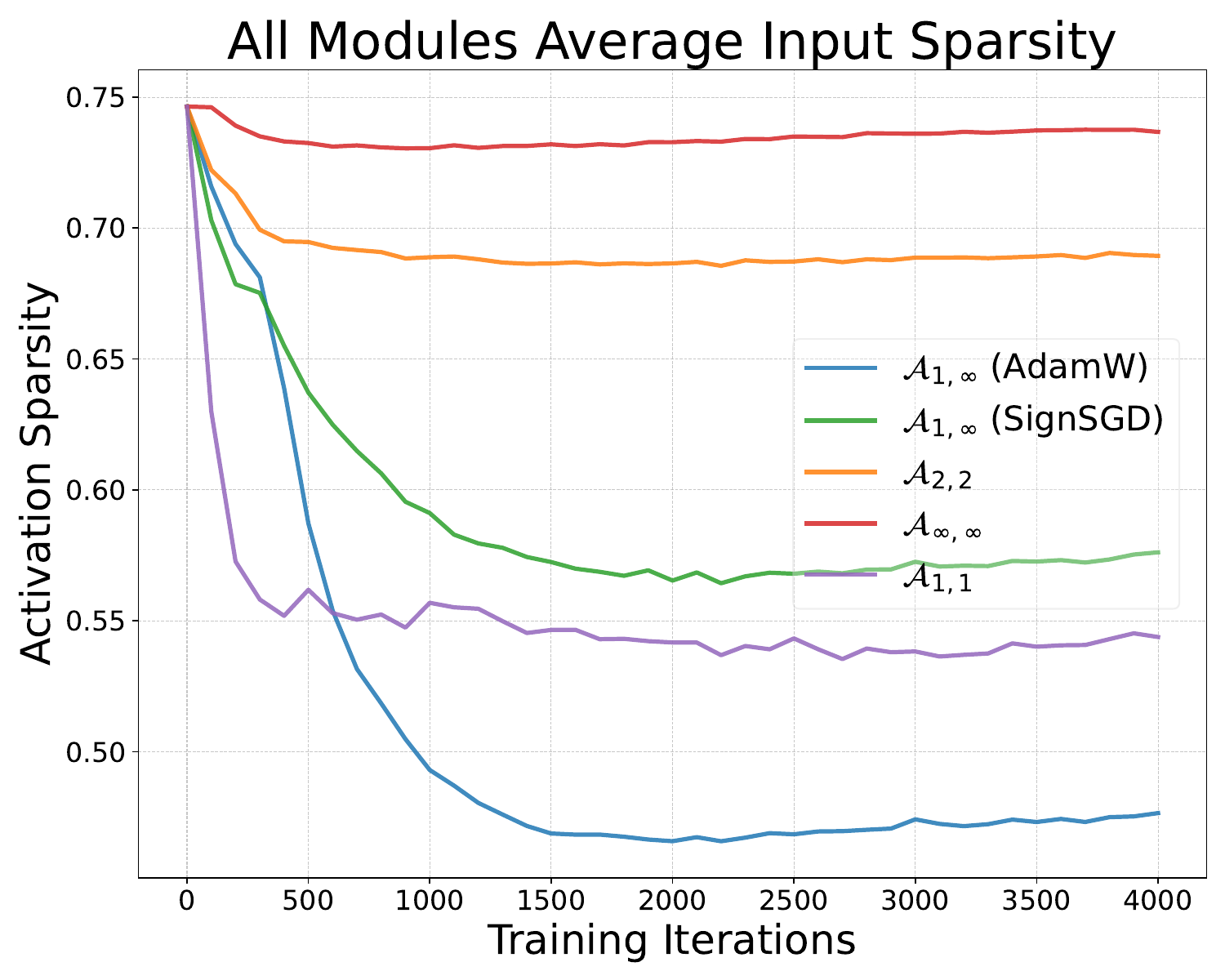} \\
    \includegraphics[width=0.3\linewidth]{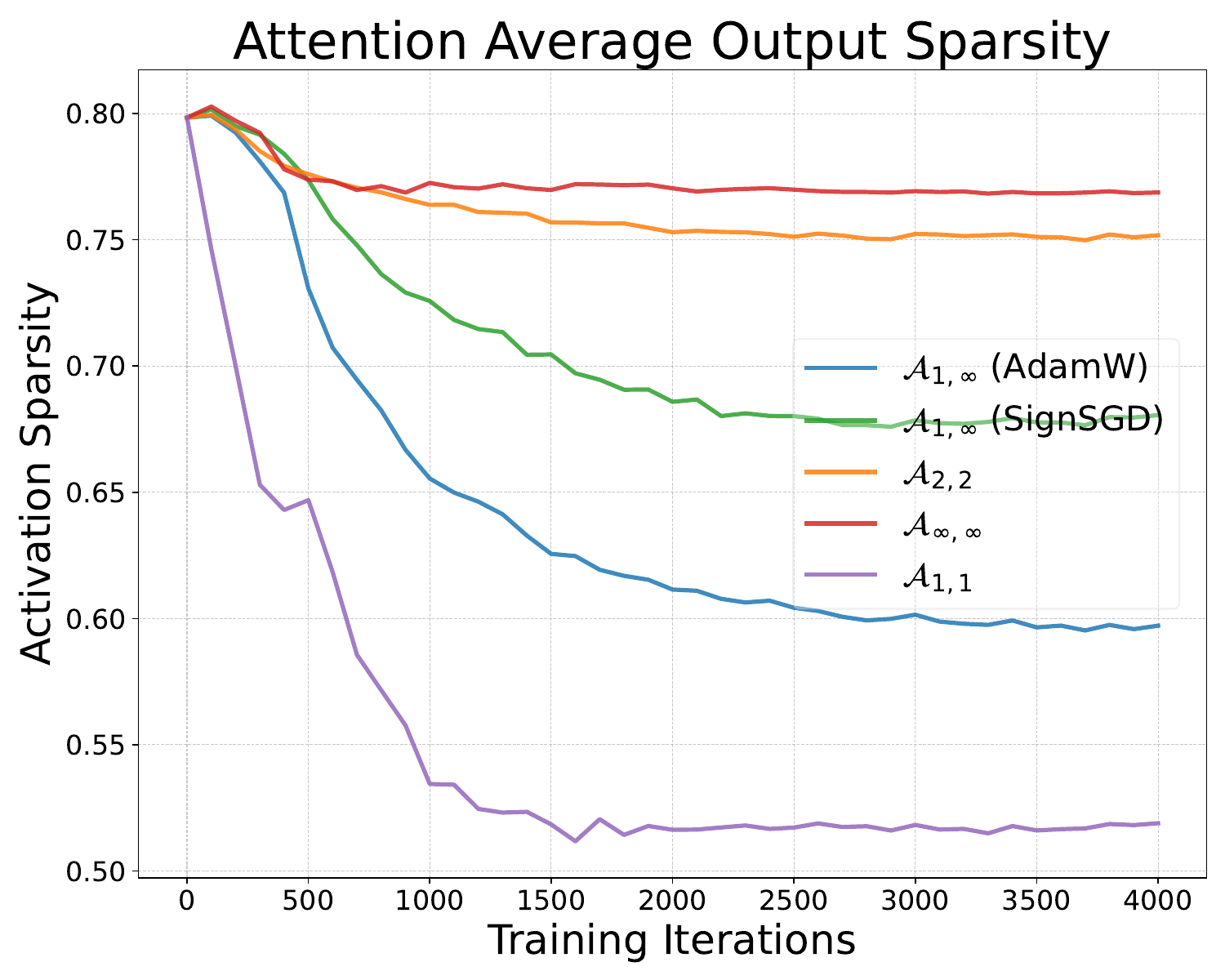}
	\!\!\!
	& \includegraphics[width=0.3\linewidth]{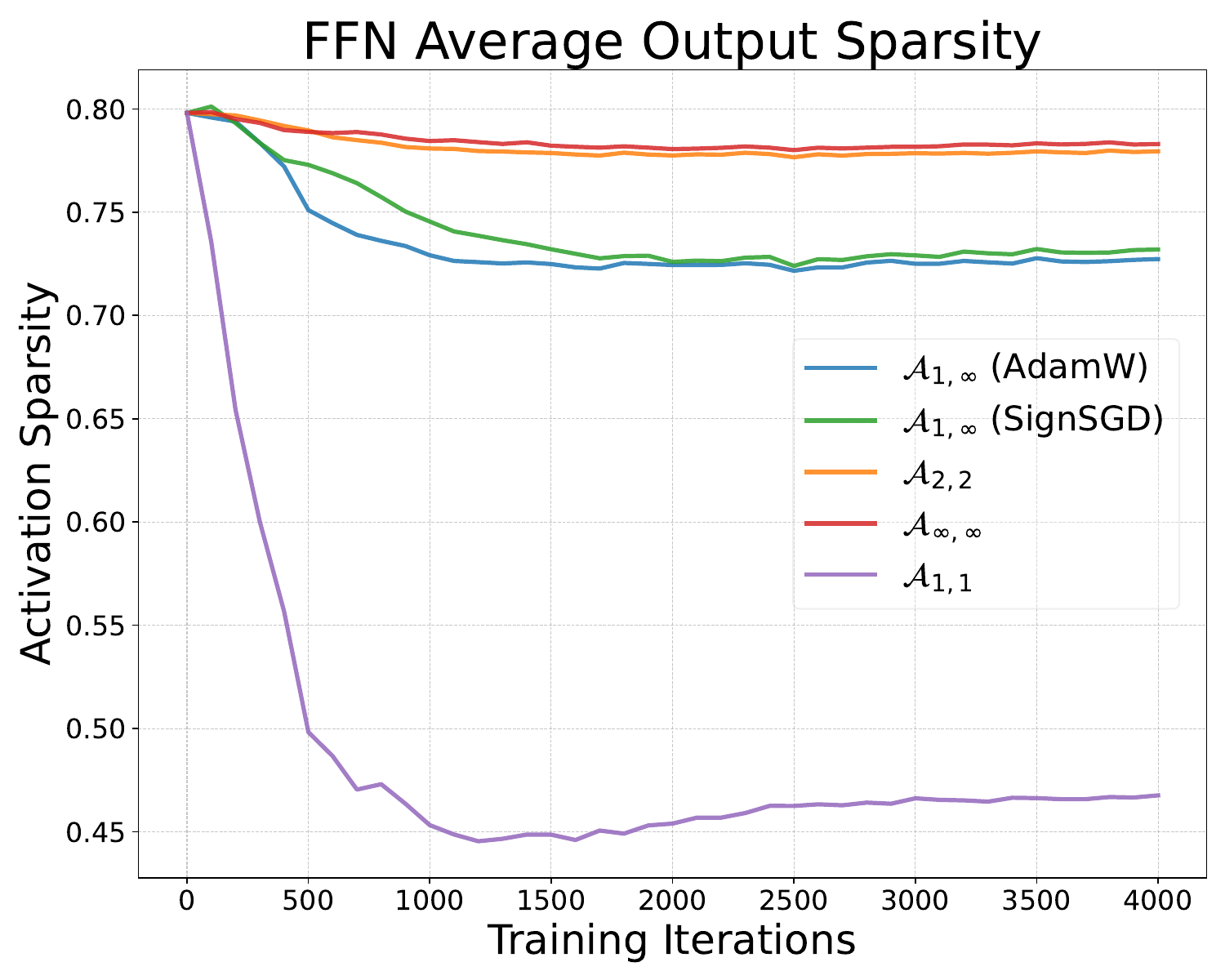} 
    \!\!\!
	& \includegraphics[width=0.3\linewidth]{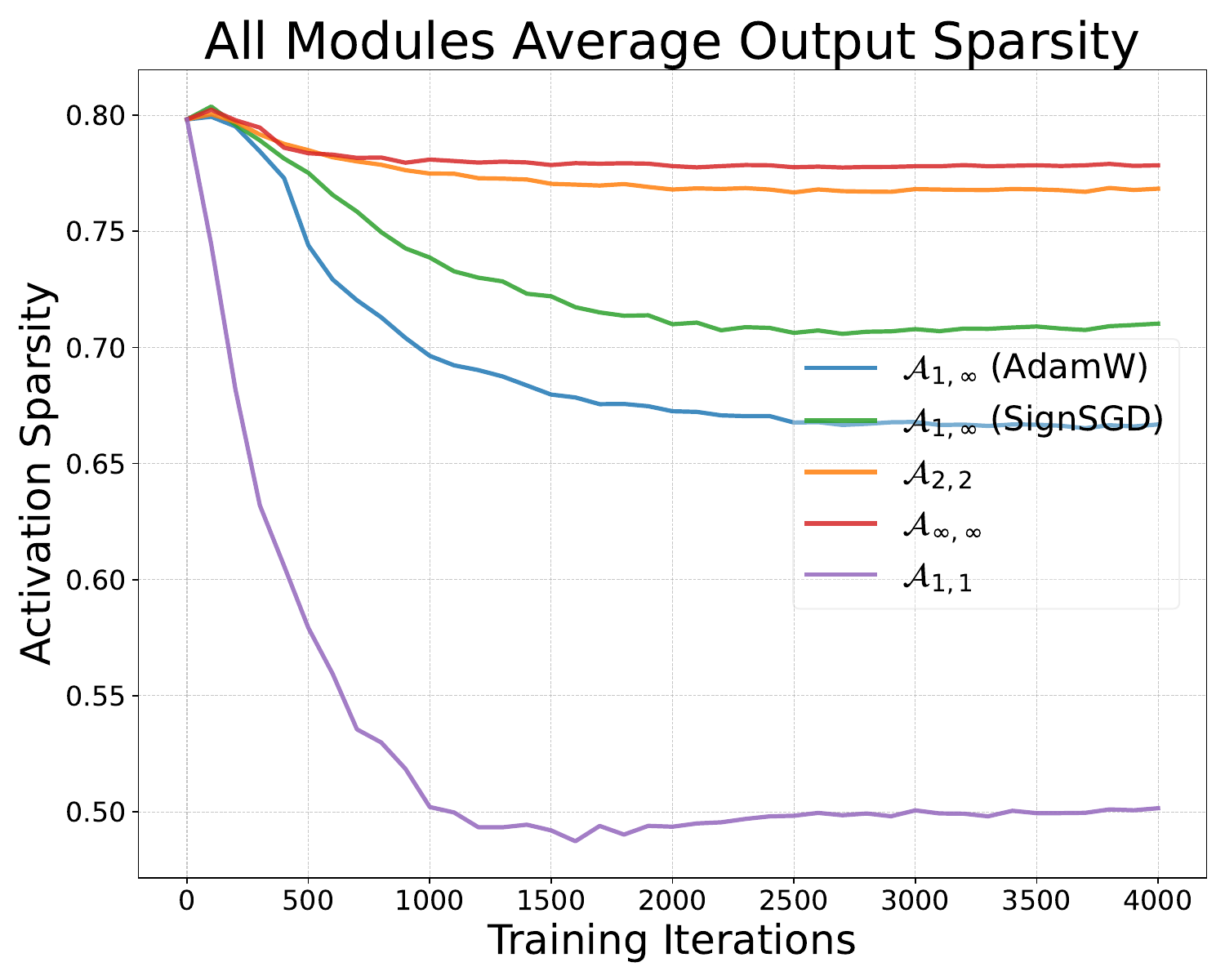}
\end{tabular} 

\vskip-0.2cm
\caption{Activation sparsity of different optimizers $\cA_{\alpha,\beta}$ derived from \eqref{eq:steepest_descent} with the same settings as Figure~\ref{fig:adamw_muon_activation_sparsity}. The columns present the average sparsity of different modules, and the columns present input and output activations of the modules. We only consider the linear layers for doing this average. } \vspace{-0.2cm}
\label{fig:optimizer_activation_sparsity} 
\end{figure}

\subsection{The SFT Optimizer should Fit the Activation Regularization}\label{sec:theory_sft_forgets_less}

Based on the observations that optimizers can shape models, we provide a theoretical explanation of the optimizer-model consistency phenomenon under a simplified setting. The intuition is straightforward: since different optimizers lead to pretrained checkpoints with different properties, forgetting gets less when the SFT update better matches the curvature around the checkpoints, which can be achieved with full finetuning using the same optimizer.
We start with approximations of the forgetting and SFT loss, based on the fact that only a small weight change is applied in the SFT stage.

\textbf{Approximating forgetting.}
We have the following Taylor expansion approximation of the pretraining loss increase after SFT:
\vspace{-0.2cm}
\begin{align*}
\begin{split}
    \Lp(\theta_0 + \Delta \theta) \approx& \Lp(\theta_0) + \dotprod{\nabla_\theta \Lp(\theta_0)}{\Delta \theta} + \frac{1}{2} \Delta \theta^\top \nabla_\theta^2 \Lp(\theta_0) \Delta \theta \\
    \approx& \Lp(\theta_0) + \frac{1}{2} \Delta \theta^\top \nabla_\theta^2 \Lp(\theta_0) \Delta \theta ,
\end{split}
\end{align*}
where we denote $\Lp$ as the pretraining loss, $\theta_0 $ as the model weight after pretraining, and $\Delta \theta$ as the weight change after SFT, whose scale should be small. The second line holds since the model should approximately arrive at a local minima after pretraining, which yields that $\nabla_\theta \Lp(\theta_0) \approx 0$.

Based on the observations that the Hessians of neural networks are typically block-wise diagonal with respect to layers~\citep{martens2015optimizing,singh2021analytic,nayak2025sculpting}, we then consider the forgetting loss in a layer-wise manner without loss of generality, since the total loss can be obtained by summing up all the layers.
For a linear layer $W \in \RR^{m\times n}$ of the model $\theta$, we define
\begin{align*}
    \Lforgeth(\Delta W) \triangleq \frac{1}{2} \vec{\Delta W}^\top \nabla_W^2 \Lp(\theta_0) \vec{\Delta W} ,
\end{align*}
where $\Delta W$ is the update of the layer $W$ after SFT and the $\vec{W}$ operation means vectorizing $W$ by stacking the rows, i.e., $\vec{W} \triangleq [w_1,\dots, w_m] \in \RR^{mn \times 1}$ for $W = [w_1^\top;\dots;w_m^\top] \in \RR^{m\times n}$.
We have the input and output activations of the layer $W$ to be $x,y$ such that $y=Wx$ when the pretraining data is fed to the model.
We further consider the Fisher Information matrix $\EE[gg^\top]$ as an effective approximation of the Hessian $\nabla_W^2 \Lp(\theta_0)$~\citep{martens2015optimizing}, where $g = \vec{\nabla_W \Lp(\theta_0)}$ is the vectorized gradient of layer $W$ at $\theta_0$. Based on the chain rule, it holds that $\nabla_W \Lp(\theta_0) = \delta x^\top$ with $\delta \triangleq \nabla_y \Lp(\theta_0)$. Then, by substituting this gradient form, we have
\begin{align*}
    \nabla_W^2 \Lp(\theta_0) \approx& \EE[gg^\top] = \EE[\vec{\delta x^\top} \vec{\delta x^\top}^\top] = \EE[(\delta\otimes x) (x \otimes \delta)^\top] \\
    =& \EE[(\delta\delta^\top) \otimes (xx^\top) ] \approx \EE[\delta \delta^\top] \otimes \EE[xx^\top],
\end{align*}
where $\otimes$ denotes the Kronecker product, and the expectation is taken over the input $x$. The second approximation applies the assumption in \citet{martens2015optimizing} that we can split the expectation for input activations and output gradients. 
Based on the observations that the Hessians are commonly block-wise diagonal with respect to rows of MLPs~\citep{collobert2004large,zhang2024transformers,zhang2024adam}, we further simplify the Hessian approximation to be mostly relevant to the input activation expectation, i.e., $\nabla_W^2 \Lp(\theta_0) \propto \EE[xx^\top]$.
Then, we can approximate the forgetting loss by
\begin{align*}
\begin{split}
    \Lforgeth(\Delta W) \triangleq& \frac{1}{2} \vec{\Delta W}^\top \nabla_W^2 \Lp(\theta_0) \vec{\Delta W} \\
    \propto & \frac{1}{2} \tr{\Delta W \EE[xx^\top] \Delta W^\top} 
    = \frac{1}{2} \EE\left[ \Norm{\Delta W x}_{2}^2 \right] ,
\end{split}
\end{align*}
where the last equality is based on the property of trace.
Therefore, we take Assumption~\ref{asm:forgetting_loss}.
\begin{assumption}[Approximation of forgetting loss increase]\label{asm:forgetting_loss}
    Consider the SFT update $\Delta W \in \RR^{m\times n}$. The increase in loss caused by SFT can be approximated by
    \begin{align}\label{eq:asm_loss_forget}
        \Lforget(\Delta W) \triangleq \frac{1}{2} \EE\left[ \Norm{\Delta W x}_{2}^2 \right] ,
    \end{align}
    where $x$ denotes the input activation of pretraining data at $\theta_0$ and the expectation is taken over $x$.
\end{assumption}

Moreover, to better quantify the multiplication of weights and activations, we consider their alignment and propose Assumption~\ref{asm:weight_activation_alignment}. Note that this alignment condition with $\alpha=\beta=2$ between weight and the corresponding activations in the pretraining stage has been employed and examined in~\citet{yang2023spectral}. We here consider a more general version with $\alpha,\beta$, and since we always have $\Norm{Wx}_\beta \le \Norm{W}_{\alpha,\beta} \Norm{x}_\alpha$ based on the definition of $\Norm{\cdot}_{\alpha,\beta}$, Assumption~\ref{asm:weight_activation_alignment} generally implies that the best pair $\alpha,\beta$ can achieve this lower bound. 
Throughout the paper, we use $\Theta(\cdot)$ and $\cO(\cdot)$ notation to omit absolute constants not related to dimensionality and the variable scales.

\begin{assumption}[Weight activation alignment]\label{asm:weight_activation_alignment}
    Consider the SFT update $\Delta W \in \RR^{m\times n}$ and the input activation of pretraining data $x \in \RR^{n}$ at $\theta_0$. Fixing $\beta_*\in [1,\infty]$, there exist $\alpha_* \in [1,\infty]$ such that
    \begin{align*}
        \EE\left[ \Norm{\Delta Wx}_{\beta_*}^2 \right] = \Theta\left( \Norm{\Delta W}_{\alpha_*,\beta_*}^2 \EE\left[ \Norm{x}_{\alpha_*}^2 \right] \right) .
    \end{align*}
\end{assumption}

\textbf{Approximating SFT loss.} Since the model moves only a small distance during SFT, we consider the following first-order Taylor expansion as an approximation of the SFT loss.
\begin{assumption}[SFT loss approximation]\label{asm:sft_loss}
    We approximate the SFT loss by
    \begin{align*}
        \cL_{\mathrm{SFT}}(\Delta W) \triangleq H_0 + \dotprod{\Delta W}{G} ,
    \end{align*}
    where $G \in \RR^{m\times n}$ is the gradient of the SFT loss at $\theta_0$ and $H_0$ is the SFT loss at $\theta_0$.
\end{assumption}

\textbf{Optimizer class.} We consider the optimizer class following framework~\eqref{eq:steepest_descent} with $\Norm{\cdot}_{\alpha,\beta}$, including Adam and Muon. We formalize the observations on the activation regularization effects of optimizers observed in Figures~\ref{fig:adamw_muon_activation_sparsity} and \ref{fig:optimizer_activation_sparsity} as in Assumption~\ref{asm:optimizer_choice_activation_property}. While this assumption may be slightly strong for the activations, it provides a straightforward alignment with the intuition that, when a vector is regularized by $\Norm{\cdot}_p$, it will be shaped to minimize $\Norm{\cdot}_p$ while other norms can be larger.

\begin{assumption}[Optimizers regularize activations]\label{asm:optimizer_choice_activation_property}
    We assume the pretraining and SFT optimizers are chosen based on the framework~\eqref{eq:steepest_descent}. For pretraining optimizer $\cA_{\alpha_1,\beta_1}$ and SFT optimizer $\cA_{\alpha_2,\beta_2}$ and the SFT update $\Delta W$ derived by $\cA_{\alpha_2,\beta_2}$ and $y\triangleq \Delta W x$, it holds that for all $\alpha,\beta \in [1,\infty]$, 
    \begin{small}
    \begin{align*}
        \EE\left[\Norm{x}_{\alpha_1}^2 \right] = \begin{cases}
            \Theta \left( \EE\left[ \Norm{x}_{\alpha}^2 \right] \right), & \alpha \ge \alpha_1, \\
            \Theta \left( n^{\frac{1}{\alpha_1} - \frac{1}{\alpha}} \EE\left[ \Norm{x}_{\alpha}^2 \right] \right), & \alpha < \alpha_1 ,
        \end{cases} \; \text{and} \; \;
        \EE\left[ \Norm{y}_{\beta_2}^2 \right] = \begin{cases}
            \Theta \left( \EE\left[ \Norm{y}_{\beta}^2 \right] \right), & \beta \ge \beta_2, \\
            \Theta \left( m^{\frac{1}{\beta_2} - \frac{1}{\beta}} \EE\left[ \Norm{y}_{\beta}^2 \right] \right), & \beta < \beta_2 .
        \end{cases}
    \end{align*}
    \end{small}

\end{assumption}

Under these assumptions, we show that full finetuning with the same family of optimizer outperforms other optimizer choices in Theorem~\ref{thm:same_opt_forgets_less} for this simplified problem scenario.

\begin{theorem}[Same (family of) optimizer forgets less]\label{thm:same_opt_forgets_less}
    Under Assumption~\ref{asm:forgetting_loss}-\ref{asm:optimizer_choice_activation_property}, we can achieve (near) optimal learning-forgetting tradeoff when taking $\alpha_2 = \alpha_1$ and $\beta_2 \in [2,\infty]$ for the SFT optimizer $\cA_{\alpha_2,\beta_2}$, i.e., the least forgetting while achieving the same SFT performance. Formally, for any constant $C$ and $\Delta W (\cA_{\alpha,\beta})$ such that $\Lsft(\Delta W(\cA_{\alpha,\beta})) \le C$, we have
        \begin{align*}
            \Lforget\left( \Delta W(\cA_{\alpha_2,\beta_2}) \right) = \cO\left( \inf_{\alpha\in [1,\infty] , \beta \in[1,2] } \Lforget\left( \Delta W(\cA_{\alpha,\beta}) \right) \right)
        \end{align*}
    with $\alpha_2 = \alpha_1$ and $\beta_2 \ge 2$, where $\Delta W (\cA_{\alpha,\beta})$ denotes the SFT update produced by $\cA_{\alpha,\beta}$.
\end{theorem}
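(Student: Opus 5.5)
The plan is to treat every optimizer $\cA_{\alpha,\beta}$ in framework~\eqref{eq:steepest_descent} as a \emph{minimum-norm} solver for a linear SFT constraint, and then convert norms of $\Delta W$ into forgetting using Assumptions~\ref{asm:weight_activation_alignment} and \ref{asm:optimizer_choice_activation_property}.

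\textbf{Step 1 (reduce learning to a linear constraint).} By Assumption~\ref{asm:sft_loss}, $\Lsft(\Delta W)\le C$ is equivalent to $\dotprod{G}{\Delta W}\le -D$ with $D\triangleq H_0-C$. The update of $\cA_{\alpha_2,\beta_2}$ is $\Delta W = R\cdot\argmin{\Norm{V}_{\alpha_2,\beta_2}\le 1}\dotprod{G}{V}$. I will pick the smallest radius $R$ that meets the constraint. Duality then shows that this update minimizes $\Norm{\Delta W}_{\alpha_2,\beta_2}$ among all $\Delta W$ with $\dotprod{G}{\Delta W}\le -D$. In particular,
\[
\Norm{\Delta W(\cA_{\alpha_2,\beta_2})}_{\alpha_2,\beta_2}\le \Norm{\Delta W'}_{\alpha_2,\beta_2}
\]
for any competitor $\Delta W'=\Delta W(\cA_{\alpha,\beta})$ satisfying the same SFT bound.

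\textbf{Step 2 (upper bound our forgetting).} Set $\alpha_2=\alpha_1$ and write $y=\Delta W x$. Since $\beta_2\ge 2$, the $\beta=2\le\beta_2$ branch of Assumption~\ref{asm:optimizer_choice_activation_property} gives
\[
\EE\Norm{y}_2^2=\Theta\bigl(m^{\frac12-\frac1{\beta_2}}\EE\Norm{y}_{\beta_2}^2\bigr).
\]
Applying the definition of the induced norm to the right-hand side yields
\[
\Lforget(\Delta W)\lesssim m^{\frac12-\frac1{\beta_2}}\Norm{\Delta W}_{\alpha_1,\beta_2}^2\,\EE\Norm{x}_{\alpha_1}^2 .
\]

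\textbf{Step 3 (lower bound the competitor's forgetting).} Take a competitor with $\beta\le 2$, and apply Assumption~\ref{asm:weight_activation_alignment} to it with $\beta_*=2$. This gives
\[
\Lforget(\Delta W')=\Theta\bigl(\Norm{\Delta W'}_{\alpha_*,2}^2\,\EE\Norm{x}_{\alpha_*}^2\bigr).
\]
Next I move from $\alpha_*$ back to $\alpha_1$. Assumption~\ref{asm:optimizer_choice_activation_property} says $\EE\Norm{x}_{\alpha_*}^2$ exceeds $\EE\Norm{x}_{\alpha_1}^2$ by a factor $n^{(1/\alpha_*-1/\alpha_1)_+}$ up to constants. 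The elementary norm-ratio bound gives $\Norm{\Delta W'}_{\alpha_1,2}\le n^{(1/\alpha_*-1/\alpha_1)_+}\Norm{\Delta W'}_{\alpha_*,2}$, and the two factors cancel. The result is
\[
\Lforget(\Delta W')\gtrsim \Norm{\Delta W'}_{\alpha_1,2}^2\,\EE\Norm{x}_{\alpha_1}^2 ,
\]
which also shows that $\alpha_*=\alpha_1$ is the worst case for the competitor. Finally I pass from $\Norm{\cdot}_{\alpha_1,2}$ to $\Norm{\cdot}_{\alpha_1,\beta_2}$ using $\Norm{v}_{\beta_2}\ge m^{\frac1{\beta_2}-\frac12}\Norm{v}_2$.

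\textbf{Main obstacle.} The hard part is matching the dimensional factors in $m$ between Steps 2 and 3. The naive inequality $\Norm{v}_{\beta_2}\le\Norm{v}_2$ goes the wrong way, and it loses exactly the factor $m^{\frac12-\frac1{\beta_2}}$. My first attempt is to avoid $\Norm{\cdot}_{\alpha_1,\beta_2}$ altogether. Instead I would compare both updates in $\Norm{\cdot}_{\alpha_1,2}$, using the output-side part of Assumption~\ref{asm:optimizer_choice_activation_property} for the competitor: with $\beta\le 2$ its output is regularized in $\Norm{\cdot}_\beta$, so $\EE\Norm{y'}_2^2$ is inflated by $m^{\frac1\beta-\frac12}\ge 1$. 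This should at least compensate the loss in Step 2.

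If the bookkeeping still does not close for general $\beta_2$, I would first prove the clean case $\beta_2=2$, where all $m$-factors equal $1$. In that case Steps 1 to 3 combine directly into
\[
\Lforget(\Delta W(\cA_{\alpha_1,2}))\lesssim\Norm{\Delta W}_{\alpha_1,2}^2\,\EE\Norm{x}_{\alpha_1}^2\le \Norm{\Delta W'}_{\alpha_1,2}^2\,\EE\Norm{x}_{\alpha_1}^2\lesssim\Lforget(\Delta W').
\]
I would then extend to $\beta_2>2$ by the $\beta\ge\beta_2$ branch of the assumption, which makes the $\Norm{\cdot}_{\beta_2}$ and $\Norm{\cdot}_\infty$-type output norms interchangeable up to constants.
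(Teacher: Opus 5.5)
\textbf{Verdict: genuine gap.} Your argument does not establish the statement for $\beta_2>2$, which is the case carrying the content: Adam-type SFT of an Adam checkpoint is $\cA_{1,\infty}$, and only Muon-after-Muon ($\cA_{2,2}$) has $\beta_2=2$.

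Steps 1--3 combine only to $\Lforget(\Delta W(\cA_{\alpha_1,\beta_2}))\lesssim m^{\frac12-\frac1{\beta_2}}\,\Lforget(\Delta W')$. The inequality you quote at the end of Step 3 bounds $\Norm{v}_2$ from above, so it cannot feed a lower bound. The usable one is $\Norm{v}_2\ge\Norm{v}_{\beta_2}$, and that is where the mismatch with Step 2 appears. Neither repair removes the factor:
\begin{itemize}
\item \emph{Repair (i).} For a competitor with $\beta\le 2$, the exponent $2$ lies in the branch $2\ge\beta$ of Assumption~\ref{asm:optimizer_choice_activation_property}. That branch gives $\EE\Norm{y'}_\beta^2=\Theta(\EE\Norm{y'}_2^2)$, so there is no inflation. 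None is possible anyway, since $\Norm{y'}_2\le\Norm{y'}_\beta$ pointwise. Comparing in $\Norm{\cdot}_{\alpha_1,2}$ also forfeits Step 1, because $\cA_{\alpha_1,\beta_2}$ does not minimize that norm.
\item \emph{Repair (ii).} The branch $\beta\ge\beta_2$ only relates norms above $\beta_2$. Forgetting is an $\ell_2$ quantity, below $\beta_2$, which is exactly where the assumption produces the penalty.
\end{itemize}

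Your own Step 3 shows why bookkeeping of this kind cannot succeed. It never uses $\beta\le 2$, so it gives $\Lforget(\Delta W)=\Theta(\Norm{\Delta W}_{\alpha_1,2}^2\EE\Norm{x}_{\alpha_1}^2)$ for every update. Now $\cA_{\alpha_1,2}$ is itself an admissible competitor, and by Step 1 it minimizes $\Norm{\cdot}_{\alpha_1,2}$ over the SFT constraint set. So the claim for $\beta_2>2$ is equivalent to $\Norm{\Delta W(\cA_{\alpha_1,\beta_2})}_{\alpha_1,2}=\cO(\Norm{\Delta W(\cA_{\alpha_1,2})}_{\alpha_1,2})$. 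That is a property of $G$. Lemma~\ref{lem:comparing_matrix_norms} plus the min-norm characterization deliver it only up to $m^{\frac12-\frac1{\beta_2}}$. Closing it would need a genuinely new ingredient, for instance using the output side of Assumption~\ref{asm:optimizer_choice_activation_property} for both updates simultaneously to constrain $G$.

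\textbf{Comparison with the paper.} The paper's proof shares your skeleton: the KKT min-norm characterization, $\alpha_*=\alpha_1$ via Lemma~\ref{lem:comparing_matrix_norms} and the input side of Assumption~\ref{asm:optimizer_choice_activation_property}, then $\Norm{\cdot}_{\alpha_1,\beta}\ge\Norm{\cdot}_{\alpha_1,\beta_2}$ for $\beta\le\beta_2$. It applies Assumption~\ref{asm:weight_activation_alignment} at $\beta_*=\beta_2$ rather than at $2$. It gets past your obstacle only through \eqref{eq:proof_thm_lforget_equivalent}, which asserts that converting $\ell_2$ to $\ell_{\beta_2}$ costs $m^{\max\{1/\beta_2-1/2,0\}}$: nothing for $\beta_2\ge 2$, a penalty for $\beta_2<2$.

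Your Step 2 computes what the output branch actually gives with the printed exponents, namely $m^{\max\{1/2-1/\beta_2,0\}}$. The paper's factor has the sign reversed. For $\beta_2<2$ it is even incompatible with $\EE\Norm{\Delta Wx}_2^2\le\EE\Norm{\Delta Wx}_{\beta_2}^2\le\Norm{\Delta W}_{\alpha_1,\beta_2}^2\EE\Norm{x}_{\alpha_1}^2$. So the obstacle you flagged is real, and the paper does not offer a valid way around it. Reaching the stated conclusion requires adopting that factor rather than deriving it from the assumptions.

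\textbf{What your route does prove.} It proves the case $\beta_2=2$ rigorously. There it is slightly stronger than the theorem: it gives optimality against every $\cA_{\alpha,\beta}$ and never uses the output side. This needs one convention, which the paper's proof also adopts silently: read the input side of Assumption~\ref{asm:optimizer_choice_activation_property} with the squared exponent $n^{2/\alpha_1-2/\alpha}$. With the printed $n^{1/\alpha_1-1/\alpha}$, your ``factors cancel'' step leaves a residual $n^{-(1/\alpha_*-1/\alpha_1)_+}$, because the Lemma~\ref{lem:comparing_matrix_norms} factor enters squared. Under the squared convention, the Step 2 penalty becomes $m^{1-2/\beta_2}$.
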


\begin{remark}[A suggestion for the training choice of SFT]
    We show that full finetuning with optimizers in the same family as the pretraining optimizer (or more concretely, with the same $\alpha$ and $\beta \ge 2$) obtains the best learning-forgetting tradeoff. For instance, we may consider Adam, SignSGD~\citep{bernstein2018signsgd}, or Lion~\citep{chen2023symbolic} and their variants for finetuning AdamW checkpoints. 
    Also note that LoRA generally fails to achieve the best tradeoff under the settings, since its adapter structure cannot fit the optimal choice for activations regularized by any norm, as long as we pretrain the model with an optimizer derived from~\eqref{eq:steepest_descent}, which includes Adam, Muon, and their variants.
\end{remark}

\section{A Case Study on Muon Applied to Pretrain and SFT}\label{sec:muon_better_memorization}

Given that we should employ the same (family of) optimizer in SFT as in pretraining to obtain the best performance because of optimizer-model consistency, the learning-forgetting tradeoff when an optimizer is employed throughout the pretrain and SFT stages is a valid and comprehensive metric for the optimizer in training LLMs, since finetuning is usually needed to improve the model.
In this section, we provide a case study to compare Muon with AdamW in this way to examine its potential in LLM training. We employ the same settings as presented in Section~\ref{sec:experiments_same_optimizer_forgets_less} for GPT-2.

\subsection{Muon Performs Potentially Worse for Reasoning}\label{sec:muon_better_memorization_experiment_results}

When Muon and AdamW are employed throughout the pretraining and SFT stages, the learning-forgetting tradeoff is presented in Figure~\ref{fig:ckpt+optimizer_comparison}.

\begin{figure}[ht]
\centering
\vspace{-0.2cm}

\begin{tabular}{ccc}
	\includegraphics[width=0.3\linewidth]{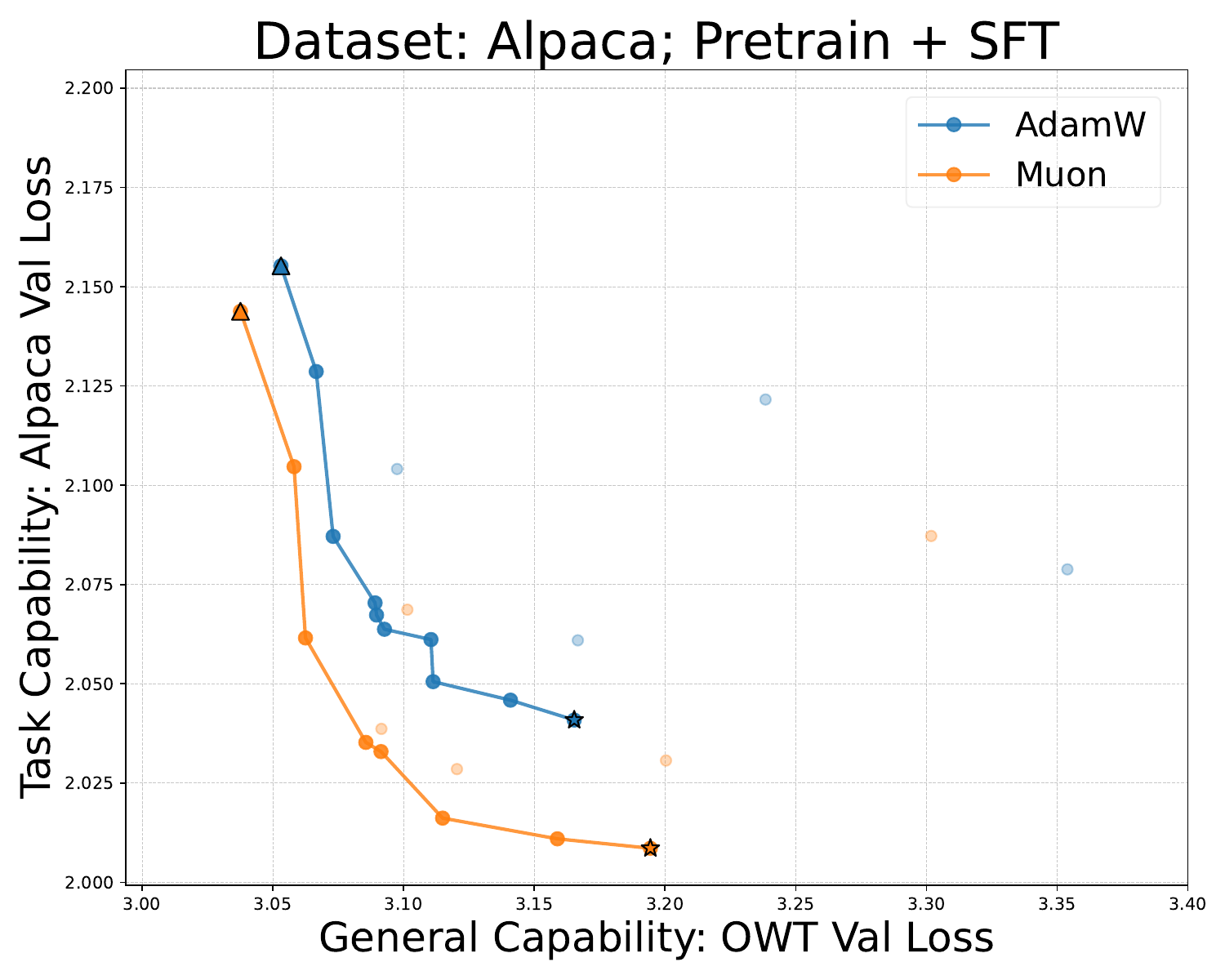}
	\!\!\!
	& \includegraphics[width=0.3\linewidth]{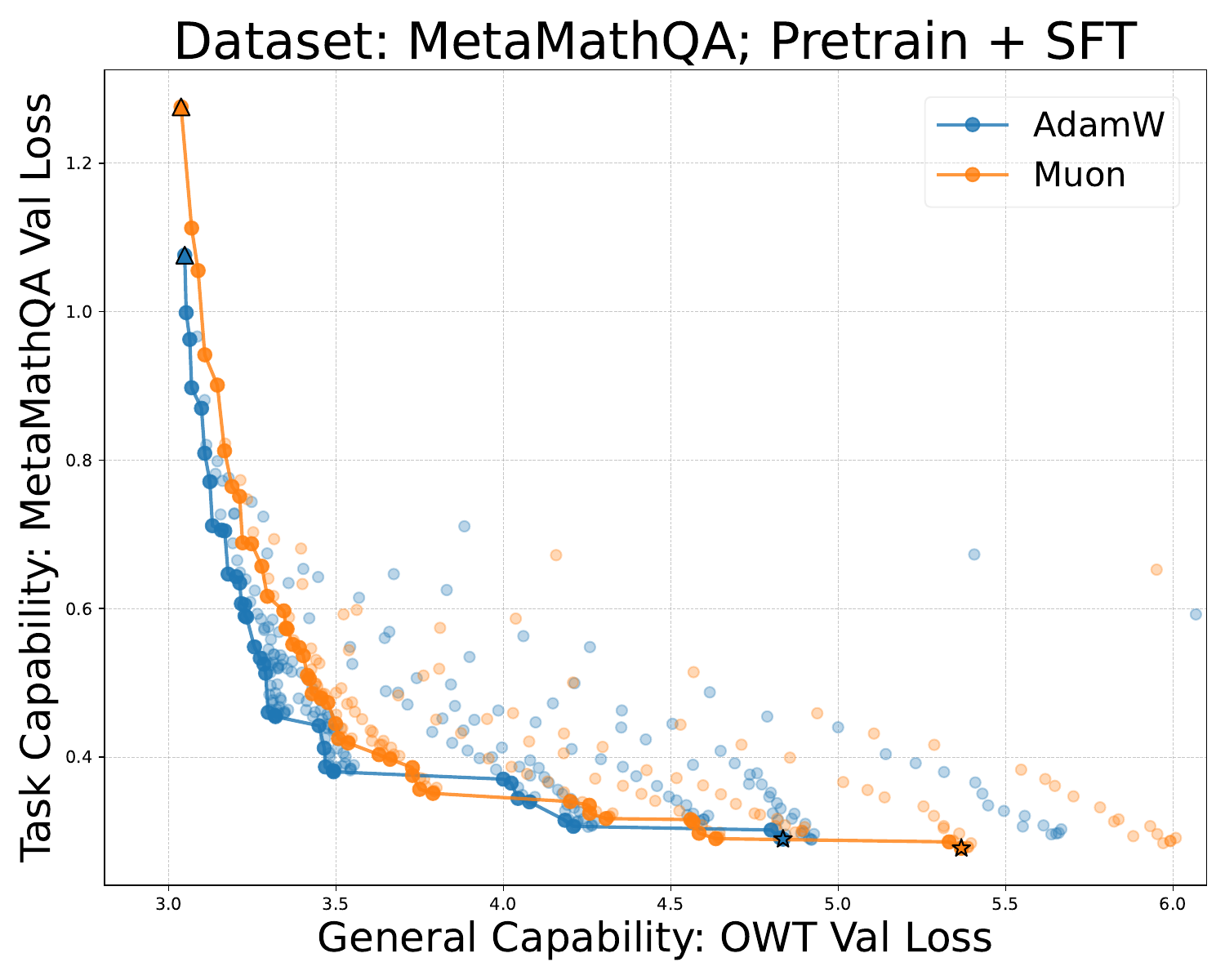} 
    \!\!\!
	& \includegraphics[width=0.3\linewidth]{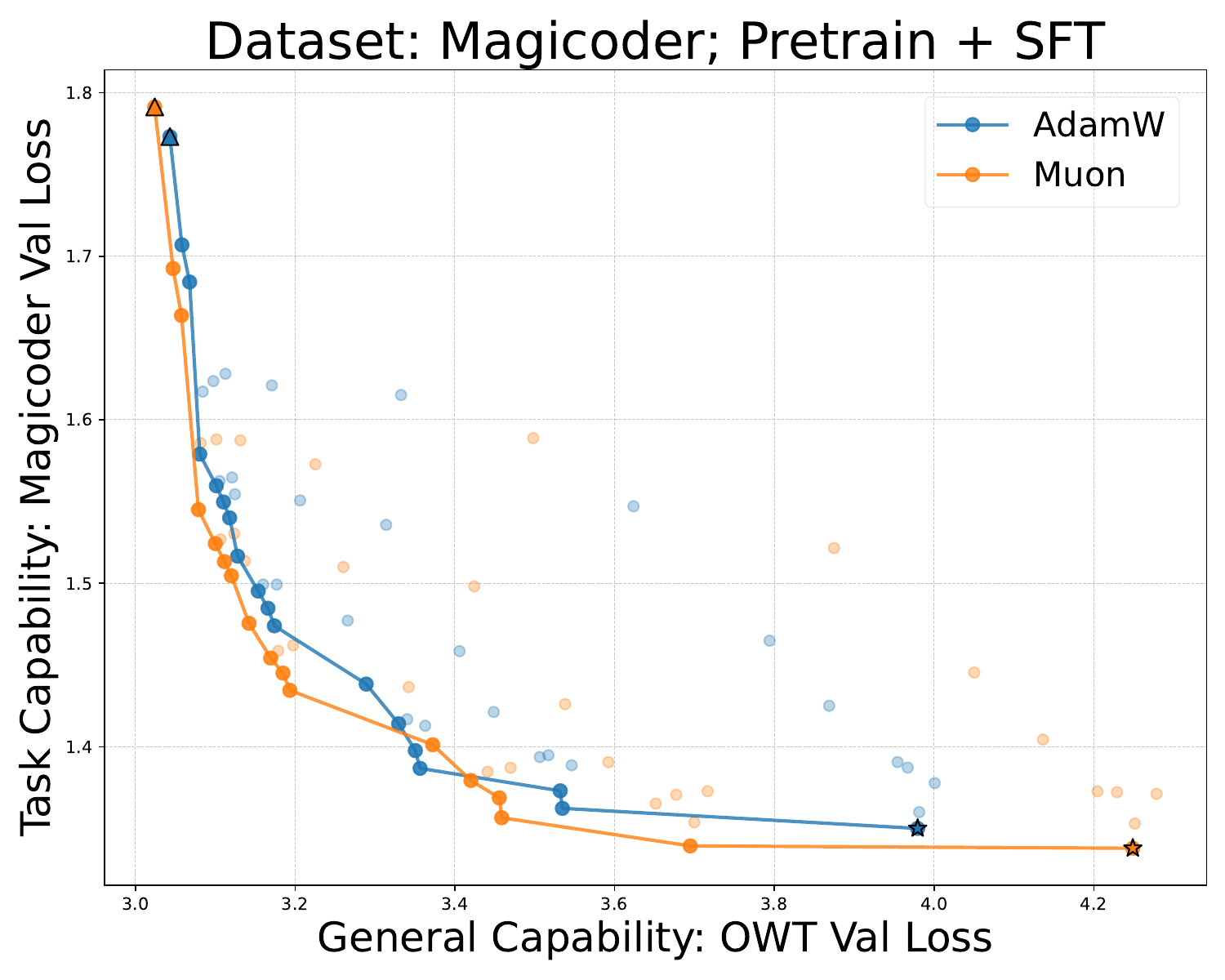} 
\end{tabular} 

\vskip-0.2cm
\caption{The comparison between Muon and AdamW when they are employed throughout the pretraining and SFT stages in Alpaca, MetaMathQA, and Magicoder tasks. } \vspace{-0.2cm}
\label{fig:ckpt+optimizer_comparison} 
\end{figure}

We can clearly observe that Muon achieves a significantly better learning-forgetting tradeoff when finetuned on Alpaca than AdamW. On math and code tasks, however, Muon is generally comparable to AdamW: slightly better in Magicoder, and slightly worse in MetaMathQA.
As demonstrated in Table~\ref{tab:pretrain_results}, Muon performs exceptionally better compared to AdamW before SFT in all three SFT tasks and the pretraining task. 
Therefore, the results in Figure~\ref{fig:ckpt+optimizer_comparison} generally imply a degradation in the performance of Muon after finetuning on code and math tasks, showing a potentially worse capability of learning reasoning ability in SFT with Muon.

\subsection{Muon Memorizes More, but May Learn Less}\label{sec:experiment_corrupted_real}
We hypothesize that \textit{Muon's worse performance on finetuning reasoning tasks is because Muon tends to memorize before patterns in the data have been fully acquired}, which can hurt learning when only a small amount of data is available, as in the SFT stage. This is consistent with Muon's performance variation in SFT, since reasoning tasks generally require learning new abilities, while instruction following tasks are more related to memorization.
Our hypothesis originates from the fact that each update of Muon is full-rank, being able to memorize more knowledge compared to the low-rank updates of AdamW and SGD~\citep{yang2023spectral}, implying that rote memorizing data is easier for Muon. When memorizing is easier, models can tend to do rote memorization instead of learning patterns to lower loss faster, leading to worse generalization and undesirable performance.

To verify our hypothesis, we conduct a synthetic experiment on a corrupted language corpus, in which we split the data into clean and corrupted sequences. The corrupted sequences are generated from the original clean sequences by randomly shuffling the order of tokens, e.g., for a complete sentence ``\textit{attention is all you need}", a random shuffle can be ``\textit{is need you all attention}". In this way, we destroy the internal patterns of language in corrupted sequences, so that only rote memorization is possible, similar to the randomization test~\citep{edgington2007randomization} applied in \citet{zhang2016understanding}. 
By mixing clean and corrupted sequences, we can assess the optimizers' tendency to memorize and learn patterns by checking how well they learn on clean and corrupted data separately.
The detailed experiment settings and the evaluation metric of exact match are in Appendix~\ref{appendix:experiment_setting_corrupted_real}.

\textbf{Models learn patterns first.} The memorization accuracy vs iteration plot is presented in Figure~\ref{fig:memorization_accuracy}. We can observe that with all three optimizers, models memorize the clean data much faster than the corrupted random data, which generally implies the existence of patterns in natural language, making it easier to learn. Also, the models start to learn corrupted data (by rote memorization) only after having some level of capacity for the clean data, which is consistent with the observation that models learn simple patterns before memorization in computer vision tasks~\citep{arpit2017closer}.

\textbf{Muon tends to memorize before fully acquiring the patterns.} 
As we can observe from Table~\ref{tab:memorization_accuracy_corrupted_real} and Figure~\ref{fig:memorization_accuracy}, Muon achieves higher memorization accuracy for the corrupted data, but lower for clean data, compared to AdamW and SGD. Also, the corrupted data memorization accuracy of Muon starts to increase much earlier than the other two optimizers, showing a tendency to start rote memorization earlier.
Also note that the train loss of Muon is still the lowest, showing the fastest speed in lowering loss, which, however, leads to rote memorization rather than learning patterns. 
The results generally support our hypothesis and partially explain the SFT performance of Muon.

\begin{figure}[t]
\centering

\begin{tabular}{ccc}
	\includegraphics[width=0.3\linewidth]{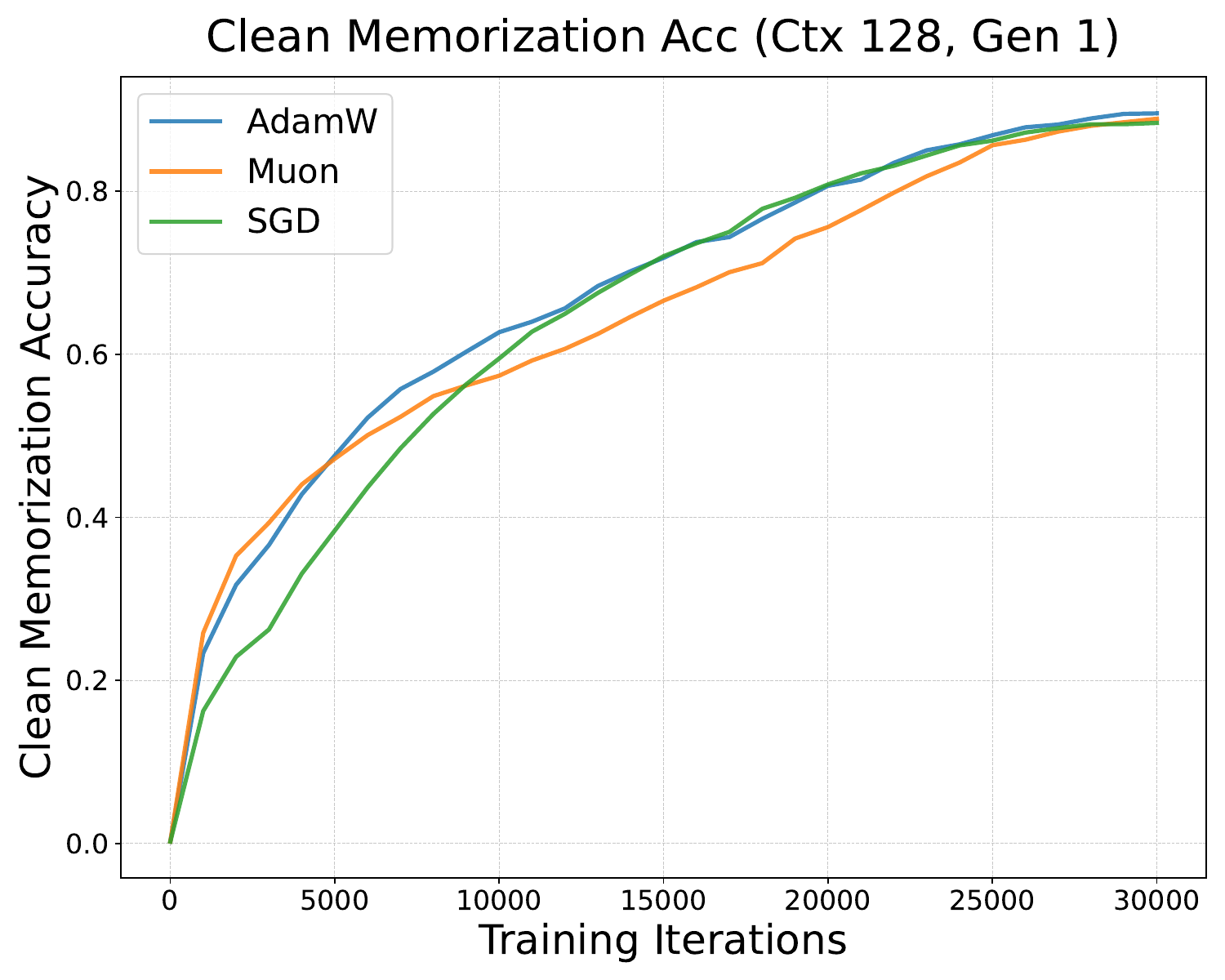}
	\!\!\!
	& \includegraphics[width=0.3\linewidth]{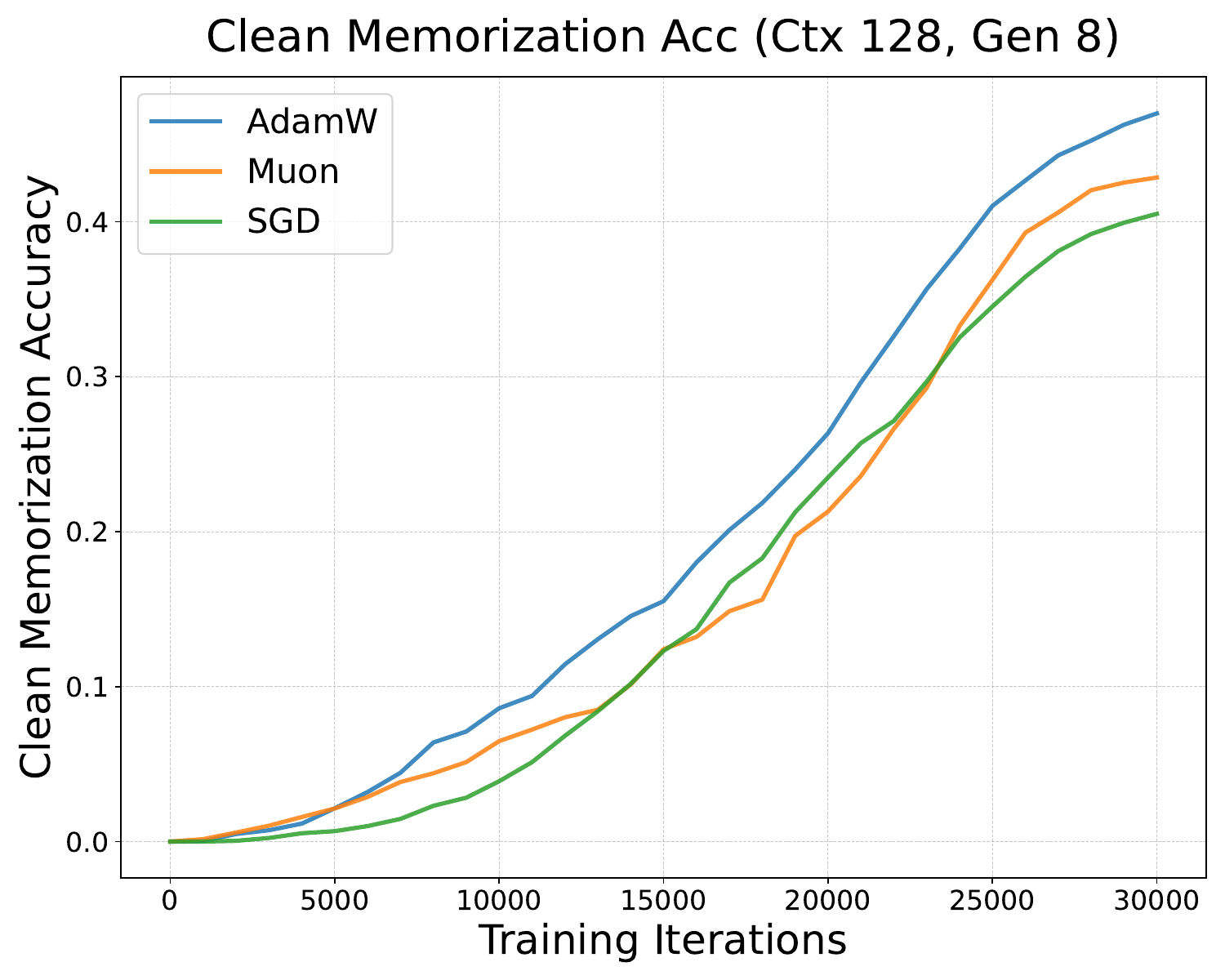} 
    \!\!\!
	& \includegraphics[width=0.3\linewidth]{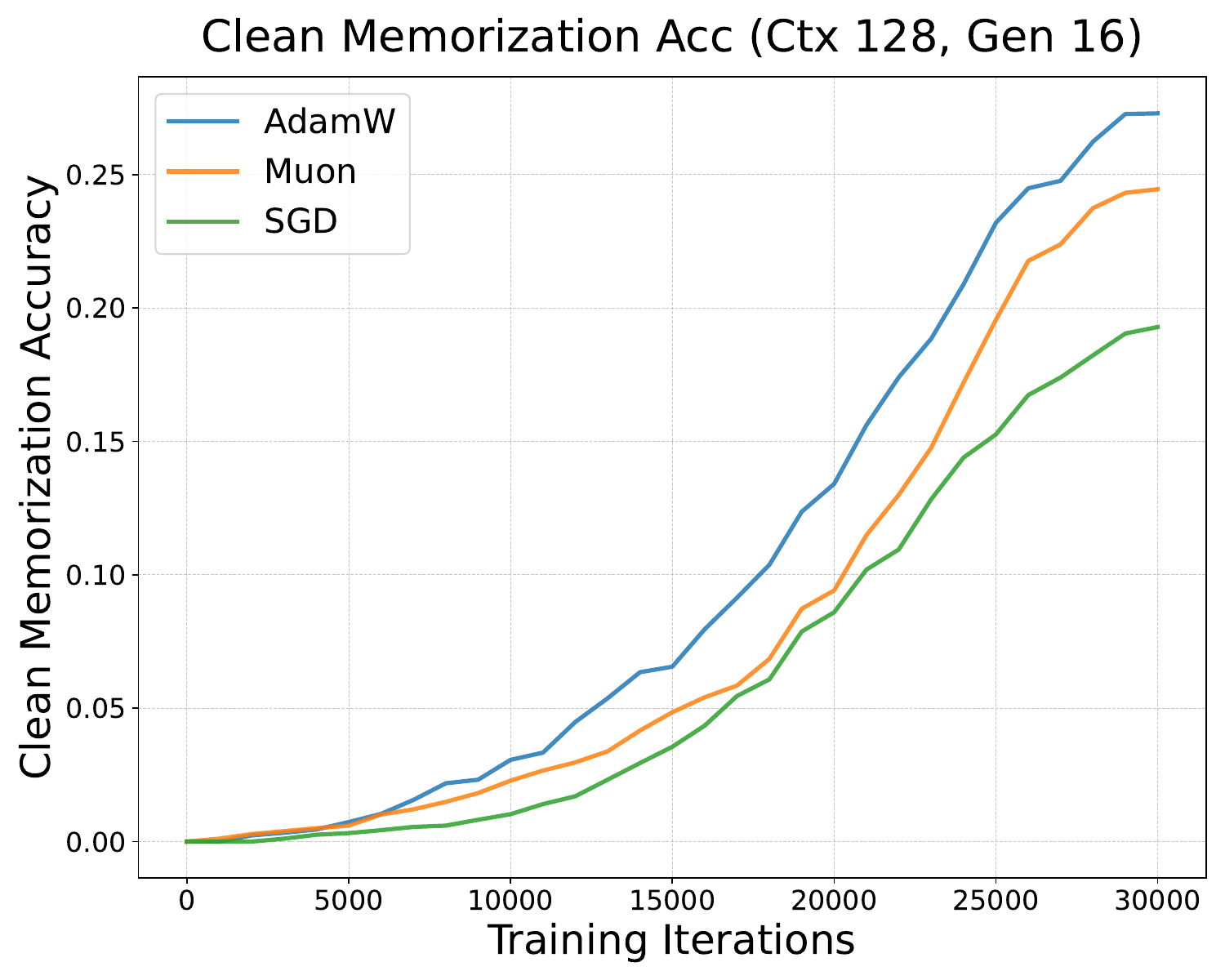} \\
    \includegraphics[width=0.3\linewidth]{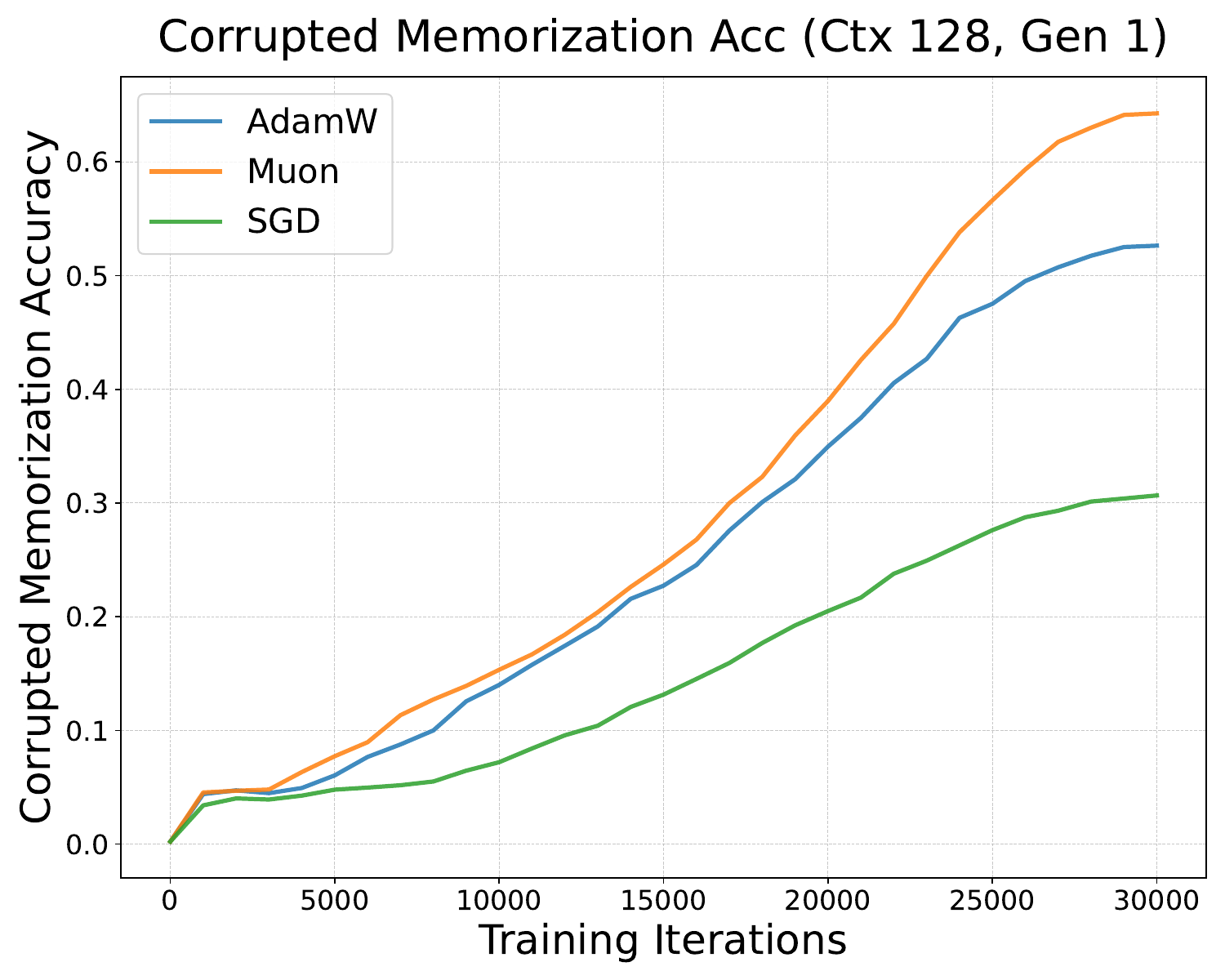}
	\!\!\!
	& \includegraphics[width=0.3\linewidth]{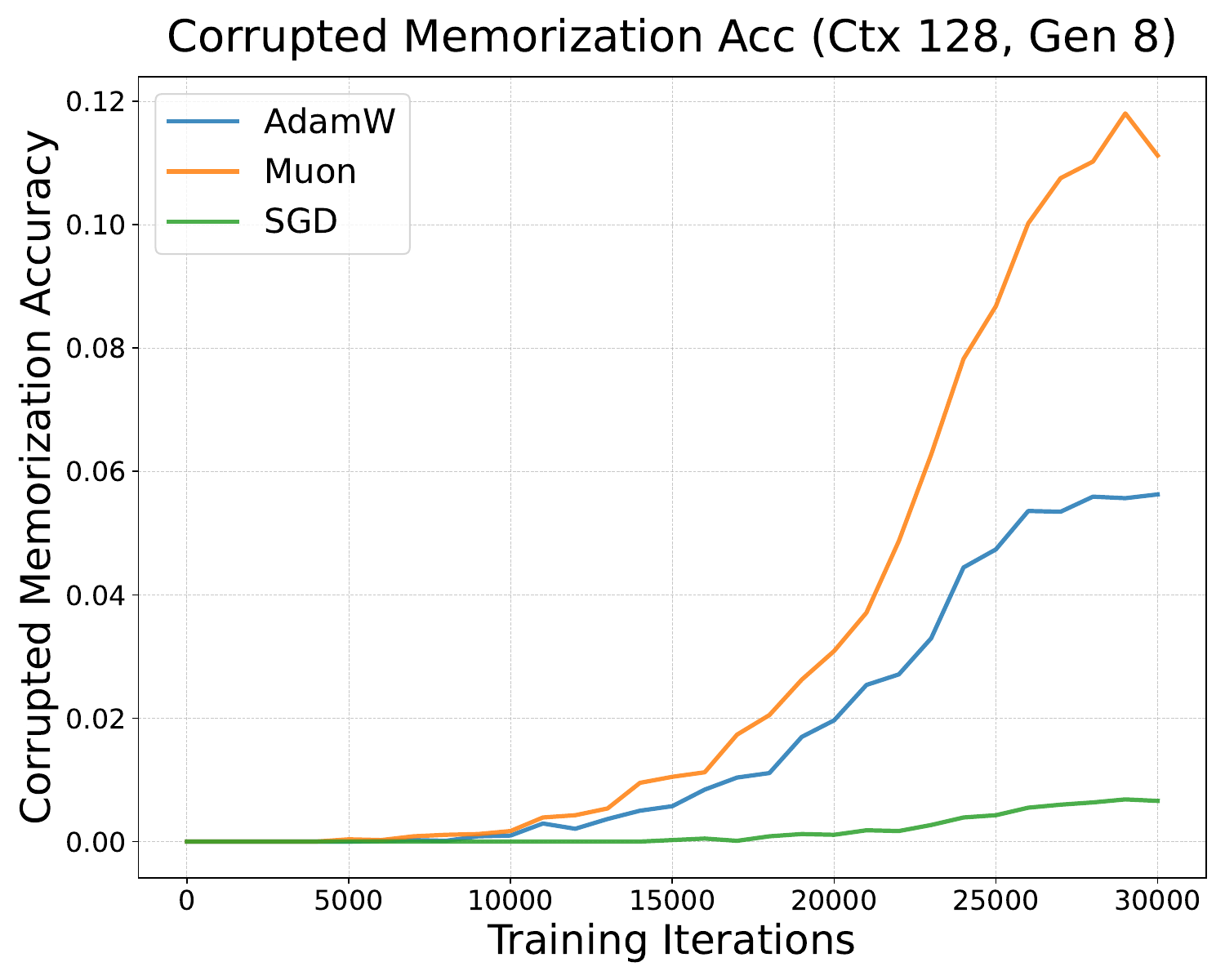} 
    \!\!\!
	& \includegraphics[width=0.3\linewidth]{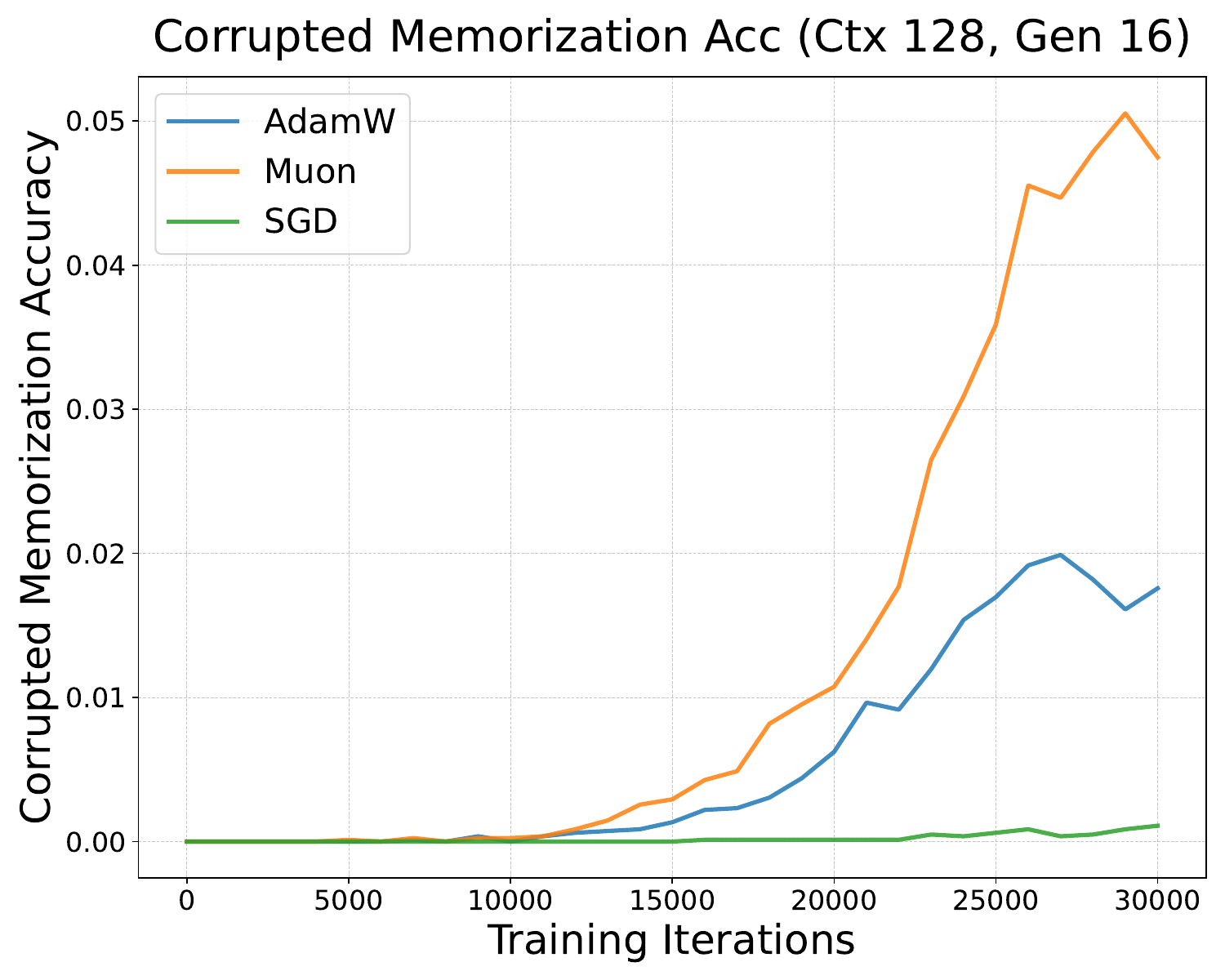}
\end{tabular} 

\vskip-0.2cm
\caption{The memorization accuracy of clean and corrupted data under different generation lengths.} \vspace{-0.5cm}
\label{fig:memorization_accuracy} 
\end{figure}

\begin{table}[ht]
    \centering
    \vspace{-0.4cm}
    \caption{The memorization accuracy and final train loss of models trained by different optimizers. We employ different input context lengths and completion lengths to test the exact match accuracy, e.g., $(128, 8)$ means input context length $128$ and generation length $8$. } \vspace{-0.1cm}
    \label{tab:memorization_accuracy_corrupted_real}
\vspace{0.2cm}
\begin{tabular}{lcccccccc}
\toprule
\multirow{2}{*}{Optimizer} & \multicolumn{3}{c}{Memorization Accuracy Clean} & \multicolumn{3}{c}{Memorization Accuracy Corrupted} 
& \multirow{2}{*}{Train Loss} 
\\
\cmidrule(lr){2-4} \cmidrule(lr){5-7} 
& $(128,1)$ & $(128,8)$ & $(128,16)$ & $(128,1)$ & $(128,8)$ & $(128,16)$ \\
\midrule

Muon & 88.9\% & 44.8\% & 25.4\% & \textbf{65.0\%} & \textbf{12.2\%} & \textbf{5.2\%} & 1.143 \\

AdamW & 90.1\% & 49.8\% & 29.7\% & 55.8\% & 7.0\% & 2.3\% & 1.440 \\

SGD & \textbf{93.4\%} & \textbf{60.3\%} & \textbf{40.8\%} & 32.9\% & 1.3\% & 0.0\% & 2.310 \\

\bottomrule
\end{tabular}

\vspace{-0.1cm}

\end{table}

\begin{remark}[A better finetuning approach for Muon would be helpful]
    For Muon pretrained models, we should apply full finetuning with Muon in order to obtain the best learning-forgetting tradeoff based on optimizer-model consistency, while Muon in SFT may potentially hurt the overall performance, which creates a dilemma. Therefore, a better finetuning approach for Muon that can give a better tradeoff for both sides would be helpful for improving the method in LLM training.
\end{remark}

\section{Related Work}\label{sec:related_work}

\textbf{Optimizers for Deep Learning.} 
The current workhorse of optimizers for training deep learning models, or more specifically, transformers~\citep{vaswani2017attention}, is undoubtedly adaptive gradient methods, including AdamW~\citep{kingma2014adam,loshchilov2017decoupled} and many variants~\citep{duchi2011adaptive,streeter2010less,tieleman2012lecture,zeiler2012adadelta,shazeer2018adafactor,reddi2019convergence,you2019large,zhang2024adam,defazio2024road,yuan2024mars,defazio2025gradients,bernstein2018signsgd,crawshaw2022robustness,chen2023symbolic,yu2026stosignsgd}, to mention a few.
Recently, optimizers aware of the matrix structure of the weights have emerged as a strong opponent to the adaptive gradient methods, showing great potential in training transformers. KFAC~\citep{martens2015optimizing} and Shampoo~\citep{gupta2018shampoo} are two pioneering works utilizing the neural network structure in optimizer design, which is further explored in more recent works~\citep{ren2021tensor,zhao2024galore,jordan2024muon,vyas2024soap,liu2025muon,an2025asgo,xie2025structured,pan2025unbiased,xie2026controlled,wen2025hyperball}, to mention a few, in either saving memory footprint or improving training performance.
The idea has become particularly popular after Muon~\citep{jordan2024muon,liu2025muon} has been applied to training real frontier models~\citep{team2026kimi,zeng2025glm,deepseekai2026deepseekv4}. 
To compare and understand the performance of optimizers, research on empirically benchmarking and tracking the performance~\citep{wen2025fantastic,semenov2025benchmarking,wang2025muon} or theoretically analyzing the convergence properties~\citep{duchi2011adaptive,liu2024adagrad,jiang2024provable,su2025isotropic} has emerged, but mainly focused on pretraining or one training setting rather than throughout the pretrain-SFT paradigm.

\textbf{Learning-Forgetting Tradeoff.}
Models tend to lose the information learned from a previous task when training on a new task. This phenomenon, termed catastrophic forgetting~\citep{mccloskey1989catastrophic,mcclelland1995there,kirkpatrick2017overcoming}, is a central problem in continual learning and multitask learning. Researchers have proposed numerous methods to find a better tradeoff between learning the new task and forgetting the old task, including introducing regularization~\citep{kirkpatrick2017overcoming,zenke2017continual,li2017learning} and replaying samples in previous tasks~\citep{lopez2017gradient,rebuffi2017icarl,rolnick2019experience}.
Studies have been conducted to examine LoRA~\citep{hu2022lora} for finding a better learning-forgetting tradeoff. \citet{biderman2024lora} claims that LoRA forgets less than full finetuning, although it also learns less, focusing on continual pretraining and SFT. However, on sequential continual learning settings, there is simultaneously evidence showing that LoRA can resist forgetting~\citep{wistuba2023continual,qiao2024learn} and LoRA may not forget less~\citep{shuttleworth2024lora} in varying settings and viewpoints. \citet{springer2025overtrained,rofin2026learning} also discuss the forgetting in SFT and draw a link to the pretraining length and SFT learning rate choice.

\textbf{Memorization and Generalization.}
Memorization and generalization of the model, and their relations, are one of the fundamental problems of machine learning. 
Traditional statistical learning theory provides viewpoints from VC dimension~\citep{vapnik1998adaptive}, Rademacher complexity~\citep{bartlett2002rademacher}, and uniform stability~\citep{bousquet2002stability}. 
\citet{zhang2016understanding} provides evidence that deep learning models can easily memorize data while generalizing well with the randomization test methodology applied to image classification tasks, showing that such learning tasks may go beyond the scope of the traditional theory. Building upon the methodology, \citet{arpit2017closer} further shows that simple patterns are learned first by the neural networks compared to noise, implying that model learning and generalization is something more than rote memorization. Recent work also extends the study to modern large foundation models~\citep{carlini2021extracting,carlini2022quantifying,tirumala2022memorization,jagielski2022measuring,schwarzschild2024rethinking,panwar2025better}, revealing an interesting but complex mechanism of the memorization and generalization of the models.

\section{Conclusions}\label{sec:discussion_conclusion}
\vspace{-0.05cm}
In this paper, we present and analyze the optimizer-model consistency phenomenon. Based on the understanding of it, we suggest that full finetuning with the same (family of) optimizer as pretraining should be employed in SFT to achieve the best learning-forgetting tradeoff and also conduct a case study on the performance of Muon throughout the pretrain-SFT paradigm.
The major limitation of the work is that the experiments are mainly conducted on small models due to the requirement of pretraining and limited computational resources, and the theory is obtained under approximations.
We will be interested in scaling up our experiments and exploring more optimizers for future work.

\bibliographystyle{plainnat}
\bibliography{sample}

\appendix

\section{Experiment Settings}\label{appendix:experiment_setting}

\subsection{Experiment Settings in Section~\ref{sec:experiments_same_optimizer_forgets_less}}\label{appendix:experiment_settings_same_optimizer_forgets_less}
The GPT-2 pretraining experiments are done with 4 NVIDIA RTX A6000 GPUs with 48 GB of memory, GPT-2 SFT experiments with 1 A6000 GPU, and Llama-2-7B finetuning experiments with 4 A6000 GPUs. All experiments are conducted on bf16 precision.

For GPT-2 experiments, we adopt a training codebase modified from the nanoGPT codebase~\citep{karpathy2022nanogpt}.
For Llama-2-7B experiments, we employ the LlamaFactory~\citep{zheng2024llamafactory} codebase and Muon fsdp2 implementation from the Dion codebase~\citep{ahn2025dion,ahn2025dion2}.

\textbf{Training Algorithms.} In this paper, we typically consider three optimizers: AdamW, Muon, and SGD. For the SFT stage, we consider both full finetuning with the three optimizers and LoRA~\citep{hu2022lora} with AdamW. Specifically, we use the RMSNorm-aligned version of Muon as suggested in~\citep{liu2025muon}. We also apply a similar update RMSNorm alignment to SGD in this paper, i.e., the update of SGD for a layer $W \in \RR^{m\times n}$ is $W \gets W - \eta \cdot 0.2 \cdot \sqrt{mn} g / \Norm{g}$, where $g$ is the momentum.

\textbf{Model \& Pretraining.}
For the model, we adopt the same architecture and parameter number as the GPT-2-small model~\citep{ouyang2022instructgpt} (approximately 123M), changing only the activation function from GELU~\citep{hendrycks2016gaussian} to SwiGLU~\citep{shazeer2020glu}. We train the models from scratch with 8B tokens on the OpenWebText dataset~\citep{gokaslan2019openWeb}. For pretraining, we adopt AdamW and Muon with learning rate 3e-3. 

\textbf{SFT.} We consider three different types of tasks for SFT: MetaMathQA~\citep{yu2023metamath} for math, Alpaca~\citep{alpaca} for instruction following, Magicoder-Evol-Instruct-110K~\citep{wei2023magicoder} for coding. 
We employ full finetuning and LoRA with $r=4,8,32,128$ and $\alpha=2r$ following the suggesion in \citet{biderman2024lora}. 
For each training algorithm, we apply a grid search for learning rates, with other hyperparameters fixed. The grid always contains the learning rate that achieves the best validation loss in the SFT task, and is listed in Appendix~\ref{appendix:detailed_experiment_settings}. 

\textbf{Evaluation.} We care about two major metrics for evaluation: the OpenWebText validation loss as a measure for forgetting, i.e., how much the models preserve the pretraining knowledge, and the loss on the new task validation loss as a measure for learning, i.e., how much the models learn the new ability (lower is better). 
For a comprehensive and fair comparison between the training algorithms, we plot the Pareto frontier of the learning-forgetting tradeoff, with $x$-axis denoting the forgetting metric and $y$-axis denoting the learning metric. 
For each training algorithm, we consider all the runs in the learning rate grid search, and for a single run, we evaluate intermediate checkpoints. Then, we gather all the data points for each training algorithm to present the Pareto frontier to find the limit of this training algorithm in learning the most and forgetting the least.
When plotting the Pareto frontier figures, we use a robust frontier selection strategy for clarity of the figures. For figures with loss as measurements, for each point $(x_i,y_i)$ in the Pareto frontier of a specific algorithm, we plot it as the frontier only when 
\{$x_i \le x_j - c$ or $y_i \le y_j - c$\}
holds for all other $(x_j,y_j)$ in this specific algorithm, where $c=0.001$, meaning that we only include the points with a significant improvement compared to others. We apply this plotting strategy to figures with accuracy as measurements similarly with $c=0.001$.

\textbf{Llama-2-7B Settings.} 
We also verify the results in finetuning Llama-2-7B with MetaMathQA, following the MATH IFT scheme in \citet{biderman2024lora}.
We follow the same logic as the GPT-2 experiment to do learning rate grid search and plot the Pareto frontier of different training algorithms, changing only the measures of forgetting and learning from loss to (benchmark-averaged) accuracy (higher is better). 

\subsubsection{Detailed Experiment Settings}\label{appendix:detailed_experiment_settings}

\textbf{GPT-2 Pretraining.} GPT-2 models pretrained on OpenWebText dataset.
\begin{itemize}
    \item \texttt{sequence length}: 1024
    \item \texttt{effective batch size}: 240 (4 GPUs $\times$ 5 gradient accumulation $\times$ 12 batch size)
    \item \texttt{token}: 8B
    \item \texttt{scheduler}: cosine~\citep{loshchilov2016sgdr}, with 10\% warmup
    \item \texttt{gradient clipping}: norm (threshold=1)
    \item \texttt{weight decay}: 0.1
    \item \texttt{learning rate}: 3e-3 for both AdamW and Muon
    
\end{itemize}

\textbf{GPT-2 Alpaca.} GPT-2 models (pretrained by AdamW and Muon) finetuned on Alpaca.
\begin{itemize}
    \item \texttt{sequence length}: 1024
    \item \texttt{effective batch size}: 128 (1 GPU $\times$ 4 gradient accumulation $\times$ 32 batch size)
    \item \texttt{iterations}: 800 (approximately 2 epochs)
    \item \texttt{evaluation interval}: every 200 iterations
    \item \texttt{scheduler}: cosine~\citep{loshchilov2016sgdr}, with 10\% warmup
    \item \texttt{gradient clipping}: norm (threshold=1)
    \item \texttt{full finetuning weight decay}: 0
    \item \texttt{full finetuning dropout}: 0.1
    \item \texttt{full finetuning learning rate grid}: \{3e-5, 5e-5, 1e-4, 3e-4, 5e-4\} for both AdamW and Muon
\end{itemize}

\textbf{GPT-2 MetaMathQA.} GPT-2 models (pretrained by AdamW and Muon) finetuned on MetaMathQA.
\begin{itemize}
    \item \texttt{sequence length}: 1024
    \item \texttt{effective batch size}: 128 (1 GPU $\times$ 4 gradient accumulation $\times$ 32 batch size)
    \item \texttt{iterations}: 6100 (approximately 2 epochs)
    \item \texttt{evaluation interval}: every 300 iterations
    \item \texttt{scheduler}: cosine~\citep{loshchilov2016sgdr}, with 10\% warmup
    \item \texttt{gradient clipping}: norm (threshold=1)
    \item \texttt{full finetuning weight decay}: 0
    \item \texttt{full finetuning dropout}: 0.1
    \item \texttt{full finetuning learning rate grid}: \{5e-5, 1e-4, 3e-4, 5e-4\} for both AdamW and Muon
    \item \texttt{LoRA dropout}: 0.1
    \item \texttt{LoRA alpha}: $2r$ for each $r=4,8,32,128$
    \item \texttt{LoRA learning rate grid}: \{5e-4,1e-3,3e-3,5e-3,1e-2\} for both AdamW and Muon
\end{itemize}

\textbf{GPT-2 Magicoder.} GPT-2 models (pretrained by AdamW and Muon) finetuned on Magicoder.
\begin{itemize}
    \item \texttt{sequence length}: 1024
    \item \texttt{effective batch size}: 128 (1 GPU $\times$ 4 gradient accumulation $\times$ 32 batch size)
    \item \texttt{iterations}: 1800 (approximately 2 epochs)
    \item \texttt{evaluation interval}: every 200 iterations
    \item \texttt{scheduler}: cosine~\citep{loshchilov2016sgdr}, with 10\% warmup
    \item \texttt{gradient clipping}: norm (threshold=1)
    \item \texttt{full finetuning weight decay}: 0
    \item \texttt{full finetuning dropout}: 0.1
    \item \texttt{full finetuning learning rate grid}: \{5e-5, 1e-4, 3e-4, 5e-4, 1e-3\} for both AdamW and Muon
\end{itemize}

\textbf{Llama-2 MetaMathQA.} Llama-2-7B-hf\footnote{\url{https://huggingface.co/meta-llama/Llama-2-7b-hf}} finetuned on MetaMathQA.
\begin{itemize}
    \item \texttt{sequence length}: 1024
    \item \texttt{effective batch size}: 128 (4 GPU $\times$ 8 gradient accumulation $\times$ 4 batch size)
    \item \texttt{iterations}: 3055 (approximately 1 epoch)
    \item \texttt{evaluation interval}: every 500 iterations
    \item \texttt{scheduler}: cosine~\citep{loshchilov2016sgdr}, with 10\% warmup
    \item \texttt{gradient clipping}: norm (threshold=1)
    \item \texttt{full finetuning weight decay}: 0
    \item \texttt{full finetuning dropout}: 0
    \item \texttt{full finetuning learning rate grid}: \{1e-5,2e-5,5e-5,1e-4\} for both AdamW and Muon
    \item \texttt{LoRA dropout}: 0
    \item \texttt{LoRA alpha}: $2r$ for each $r=16,64,256$
    \item \texttt{LoRA learning rate grid}: \{5e-5,1e-4,2e-4,5e-4\}
\end{itemize}

\textbf{Additional hyperparameter settings of optimizers.} The following hyperparameters are fixed in the experiments.
\begin{itemize}
    \item AdamW:
    \begin{itemize}
        \item $(\beta_1,\beta_2)$: (0.9,0.95)
        \item $\epsilon$: 1e-8
    \end{itemize}
    \item Muon:
    \begin{itemize}
        \item momentum: 0.9
        \item AdamW $(\beta_1,\beta_2)$: (0.9,0.95)
        \item AdamW $\epsilon$: 1e-8
    \end{itemize}
    \item SGD and other optimizers (Algorithm~\ref{alg:cA_11},\ref{alg:cA_infinf}):
    \begin{itemize}
        \item momentum: 0.9
    \end{itemize}
\end{itemize}

\subsection{Experiment Settings in Section~\ref{sec:experiment_corrupted_real}}\label{appendix:experiment_setting_corrupted_real}

\textbf{Dataset \& Training.} 
We use corrupted datasets constructed from the OpenWebText dataset~\citep{gokaslan2019openWeb}, tokenized by the standard GPT-2 tokenizer. To obtain the target dataset, we first retrieve a subset of $40$M tokens from the original dataset. Then we split the tokens into blocks of length $1024$. For each block, we do a random permutation, i.e., randomly reorder the tokens, with probability $\alpha=0.5$. We call a block corrupted if it is randomly permuted. In this way, we keep the vocabulary frequency the same, but destroy the patterns in the corrupted blocks, ensuring that models can complete these blocks only by memorization. 
We use the same model configuration and optimizers as Section~\ref{sec:experiments_same_optimizer_forgets_less} for GPT-2, and set the sequence length to $512$, with an effective batch size of $256$. We conduct a learning rate grid search in \{3e-4, 5e-4, 1e-3, 3e-3\} and select the run with the best accuracy on clean data.

\textbf{Evaluation.}
To evaluate the memorization accuracy of models, we test the exact match accuracy of the model generation compared to the original text. Specifically, we randomly sample a subset from the original training dataset to form the evaluation set. By fixing an input length and generation length pair $(a,b)$, we test whether the model can generate exactly the same sequence as the original context of length $b$ with the input of length $a$ using greedy decoding and compute the accuracy.

\subsection{Additional Optimizers Implementation}
In Figure~\ref{fig:optimizer_activation_sparsity}, we involve two additional optimizers $\cA_{1,1}$ and $\cA_{\infty, \infty}$, which are derived from the framework~\eqref{eq:steepest_descent} with norm $\Norm{\cdot}_{1,1}$ and $\Norm{\cdot}_{\infty,\infty}$. For an arbitrary matrix $Z \in \RR^{m\times n}$, it holds that
\begin{align*}
    \Norm{Z}_{1,1} = \max_{i \in [m]} \sum_{j=1}^n \Abs{Z_{i,j}} \quad \text{and} \quad \Norm{Z}_{\infty,\infty} = \max_{j \in [n]} \sum_{i=1}^m \Abs{Z_{i,j}} .
\end{align*}
We then solve~\eqref{eq:steepest_descent} for $\Norm{\cdot}_{1,1}$. Notice that
\begin{align*}
    \dotprod{Z}{G} =& \sum_{i=1}^m \sum_{j=1}^n Z_{i,j} G_{i,j} \le \sum_{i=1}^m \sum_{j=1}^n \Abs{Z_{i,j}} \cdot \max_{j\in[n]} \Abs{G_{i,j}} \le 
    \left( \max_{i \in [m]} \sum_{j=1}^n \Abs{Z_{i,j}} \right) \left( \sum_{i=1}^m \max_{j\in[n]} \Abs{G_{i,j}} \right) \\
    =& \Norm{Z}_{1,1} \cdot \left( \sum_{i=1}^m \max_{j\in[n]} \Abs{G_{i,j}} \right) ,
\end{align*}
where $Z \triangleq W_{t+1} - W_t$. We note that the inequality holds as an equality when for each $i \in [m],j \in [n]$
\begin{align*}
    \text{$\cA_{1,1}$ Update:} \quad
    Z_{i,j} = \begin{cases}
        \frac{R}{k_i}, & \text{if } \Abs{G_{i,j}} = \max_{l \in [n]} \Abs{G_{i,l}}, \\
        0, & \text{else,}
    \end{cases}
\end{align*}
where $k_i$ denotes the number of entries $G_{i,j}$ such that $ \Abs{G_{i,j}} = \max_{l \in [n]} \Abs{G_{i,l}}$ for a fixed $i \in [m]$ and $R$ is the update scale. Similarly, we can also derive the algorithm under $\Norm{\cdot}_{\infty,\infty}$.
\begin{align*}
    \text{$\cA_{\infty,\infty}$ Update:} \quad
    Z_{i,j} = \begin{cases}
        \frac{R}{k_j}, & \text{if } \Abs{G_{i,j}} = \max_{l \in [m]} \Abs{G_{l,j}}, \\
        0, & \text{else,}
    \end{cases}
\end{align*}
where $k_j$ denotes the number of entries $G_{i,j}$ such that $\Abs{G_{i,j}} = \max_{l \in [m]} \Abs{G_{l,j}}$ for a fixed $j \in [n]$ and $R$ is the update scale. 
Intuitively, the update formula means that we only update the entries with the largest gradient scale in every row for $\cA_{1,1}$ and every column for $\cA_{\infty,\infty}$ in each iteration. 
Then, by incorporating momentum and the update RMS norm alignment~\citep{liu2025muon}, we formally write the two algorithms in Algorithm~\ref{alg:cA_11} and \ref{alg:cA_infinf}. We also include our version of SGD in Algorithm~\ref{alg:sgd}.

\begin{algorithm}[ht]
   \caption{Row Max Descent ($\cA_{1,1}$)}
   \label{alg:cA_11}
\begin{algorithmic}[1]
   \STATE {\bfseries Input:} $W_0\in \RR^{m\times n}$, the number of iterations $T$, schedule $\{\eta_t\}_{t=0}^{T-1}$, momentum $\beta$
   \STATE Initialize $M_{-1}=0 \in \RR^{m\times n}$
   \FOR{$t=0$ {\bfseries to} $T-1$}
   \STATE Obtain gradient $G_t$ at $W_t$
   \STATE $M_t = \beta M_{t-1} + (1 - \beta) G_t$
   \STATE Compute $U_t$ by $[U_t]_{i,j} = \begin{cases}
        \frac{R}{k_i}, & \text{if } \Abs{[M_t]_{i,j}} = \max_{l \in [n]} \Abs{[M_t]_{i,l}}, \\
        0, & \text{else.} \end{cases}$, where $k_i$ denotes the number of $M_t$ entries such that $\Abs{[M_t]_{i,j}} = \max_{l \in [m]} \Abs{[M_t]_{i,l}}$ for a fixed $j$
   \STATE
   $W_{t+1} = W_t - \eta_t \cdot 0.2 \cdot \frac{U_t}{\Norm{U_t}_{\mathrm{RMS}}}$
   \ENDFOR
\end{algorithmic}
\end{algorithm}

\begin{algorithm}[ht]
   \caption{Column Max Descent ($\cA_{\infty,\infty}$)}
   \label{alg:cA_infinf}
\begin{algorithmic}[1]
   \STATE {\bfseries Input:} $W_0\in \RR^{m\times n}$, the number of iterations $T$, schedule $\{\eta_t\}_{t=0}^{T-1}$, momentum $\beta$
   \STATE Initialize $M_{-1}=0 \in \RR^{m\times n}$
   \FOR{$t=0$ {\bfseries to} $T-1$}
   \STATE Obtain gradient $G_t$ at $W_t$
   \STATE $M_t = \beta M_{t-1} + (1 - \beta) G_t$
   \STATE Compute $U_t$ by $[U_t]_{i,j} = \begin{cases}
        \frac{R}{k_j}, & \text{if } \Abs{[M_t]_{i,j}} = \max_{l \in [m]} \Abs{[M_t]_{l,j}}, \\
        0, & \text{else.} \end{cases}$, where $k_j$ denotes the number of $M_t$ entries such that $\Abs{[M_t]_{i,j}} = \max_{l \in [m]} \Abs{[M_t]_{l,j}}$ for a fixed $j$
   \STATE
   $W_{t+1} = W_t - \eta_t \cdot 0.2 \cdot \frac{U_t}{\Norm{U_t}_{\mathrm{RMS}}}$
   \ENDFOR
\end{algorithmic}
\end{algorithm}

\begin{algorithm}[ht]
   \caption{SGD (with update RMS norm alignment)}
   \label{alg:sgd}
\begin{algorithmic}[1]
   \STATE {\bfseries Input:} $W_0\in \RR^{m\times n}$, the number of iterations $T$, schedule $\{\eta_t\}_{t=0}^{T-1}$, momentum $\beta$
   \STATE Initialize $M_{-1}=0 \in \RR^{m\times n}$
   \FOR{$t=0$ {\bfseries to} $T-1$}
   \STATE Obtain gradient $G_t$ at $W_t$
   \STATE $M_t = \beta M_{t-1} + (1 - \beta) G_t$
   \STATE $U_t = M_t$
   \STATE
   $W_{t+1} = W_t - \eta_t \cdot 0.2 \cdot \frac{U_t}{\Norm{U_t}_{\mathrm{RMS}}}$
   \ENDFOR
\end{algorithmic}
\end{algorithm}

\section{Additional Experiment Results}\label{appendix:additional_experiment_results}

\subsection{More Results for LoRA Comparing to Full Finetuning}\label{appendix:reproduction_lora_forgets_less}

In this section, we provide some additional experiment results specifically on LoRA under the same settings as Section~\ref{sec:experiments_same_optimizer_forgets_less}.

\paragraph{Reproduction of the Observations in \cite{biderman2024lora}.}
We can reproduce the observations in \citet{biderman2024lora} with our Llama results, as presented in Figure~\ref{fig:reproduction_lora_forgets_less_llama}, where we can observe in the left figure that LoRA learns less (lower math score) and forgets less (higher general score) if we all choose the learning rates leading to the best math performance. Here, LoRA rank 64 shows a greater forgetting compared to rank 256, mainly because rank 256 diverges for lr=5e-4, and a smaller learning rate leads to less forgetting. In the right figure of Figure~\ref{fig:reproduction_lora_forgets_less_llama}, we choose a smaller learning rate instead, and can find that the conclusions are different. With a smaller learning rate, full finetuning tends to learn more and forget less compared to LoRA.

\begin{figure}[th]
\centering

\begin{tabular}{ccc}
	\includegraphics[width=0.45\linewidth]{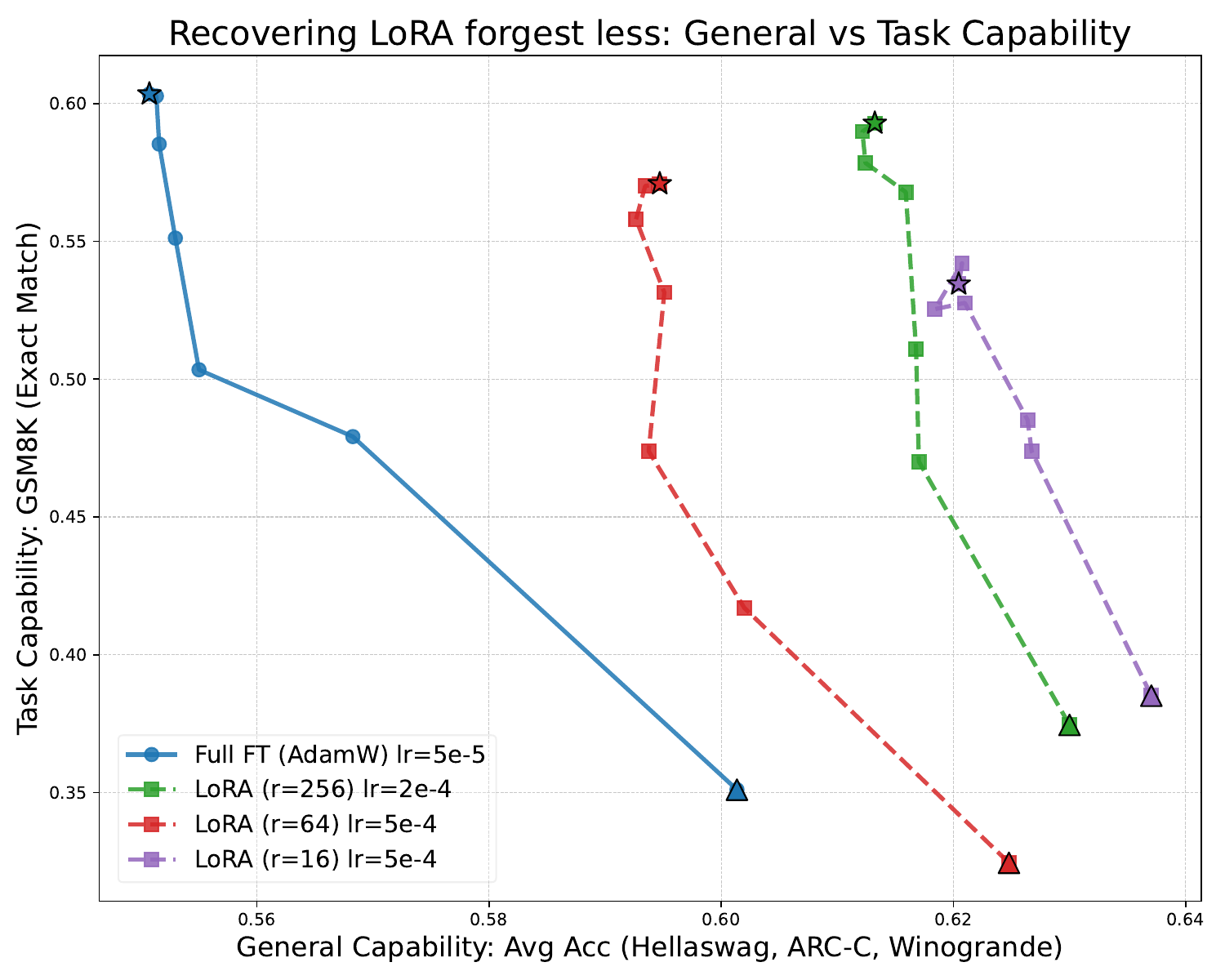}
	\!\!\!
	& \includegraphics[width=0.45\linewidth]{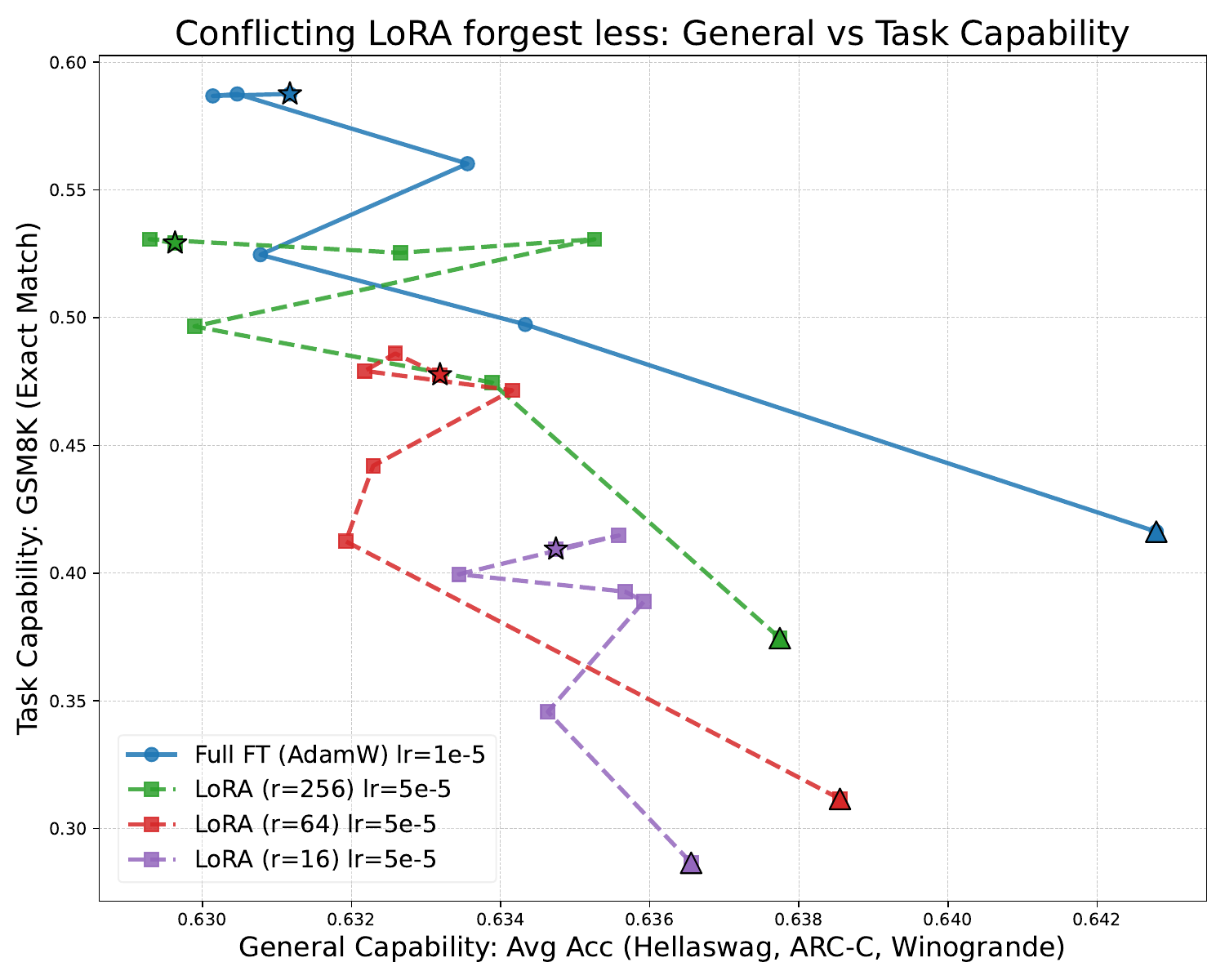} 
\end{tabular} 

\vskip-0.2cm
\caption{Reproduction of the results in \citet{biderman2024lora}. In the left figure, we choose only the learning rate leading to the best math performance for each training algorithm instead of presenting a Pareto frontier. In the right figure, we choose some smaller learning rates for comparison. }
\label{fig:reproduction_lora_forgets_less_llama} \vskip-0.15cm
\end{figure}

\paragraph{Learning Rate is Important in Learning-Forgetting Tradeoff.}
In Figure~\ref{fig:learning_rate_affects_forgetting_llama}, we show how learning rate affects the learning-forgetting tradeoff. Generally, a larger learning rate achieves a relatively higher learning score while leading to much more significant forgetting, which also aligns with the observations in \citet{rofin2026learning}, a concurrent work to us. Therefore, it is important to choose an appropriate learning rate in SFT based on the requirements of the final model.

\begin{figure}[th]
\centering
\begin{tabular}{cccc}
	\includegraphics[width=0.22\linewidth]{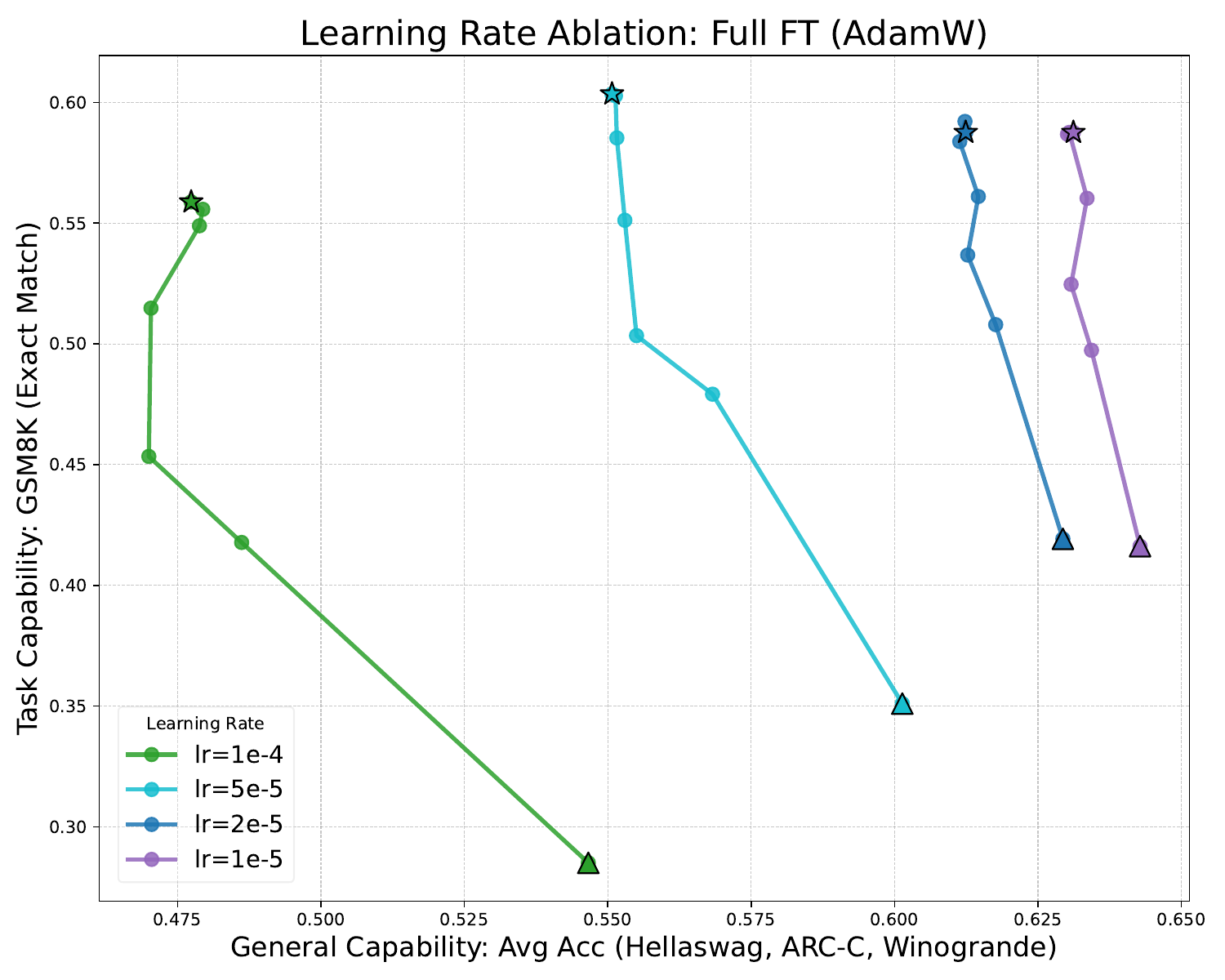}
	\!\!\!
	& \includegraphics[width=0.22\linewidth]{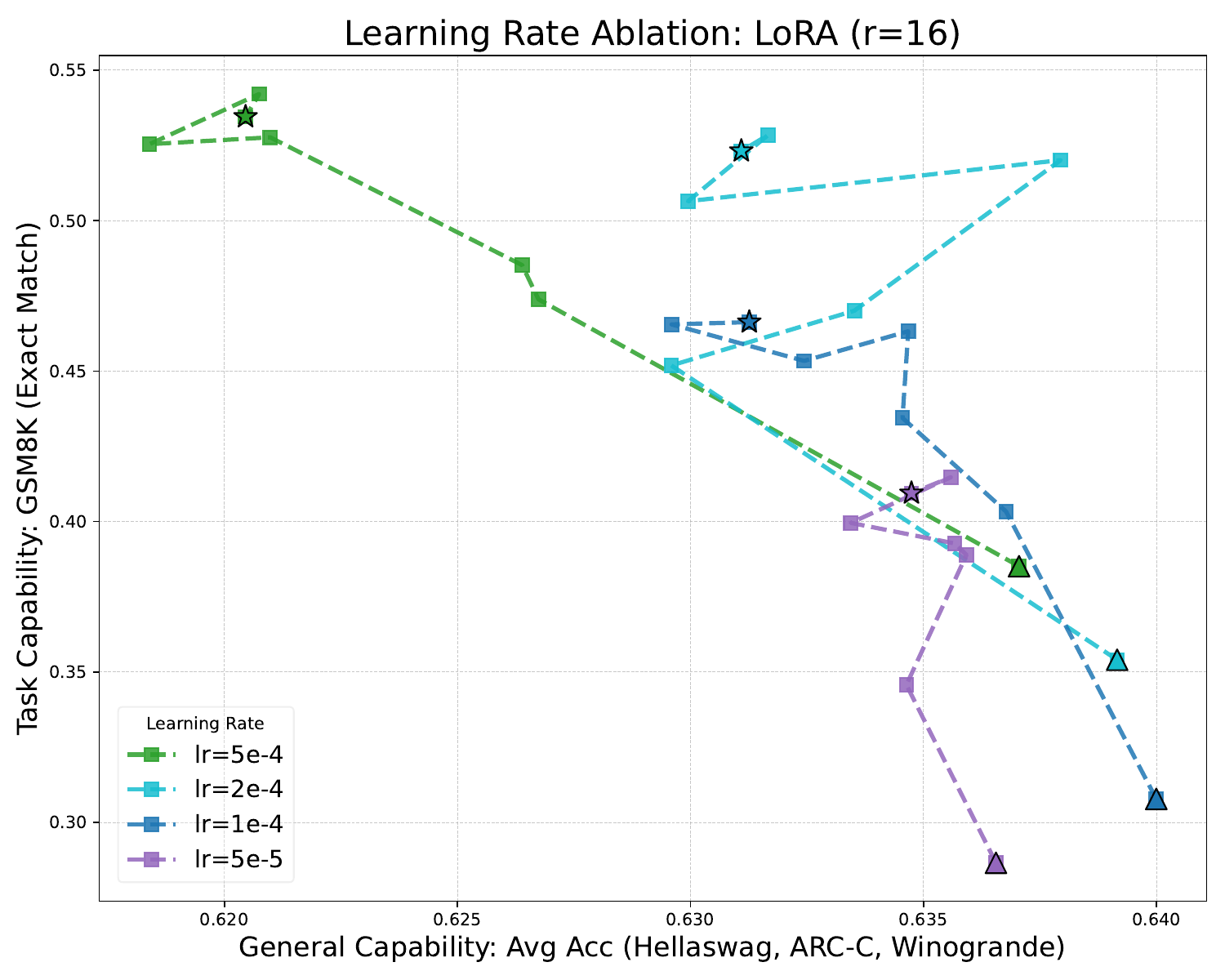} 
    \!\!\!
	& \includegraphics[width=0.22\linewidth]{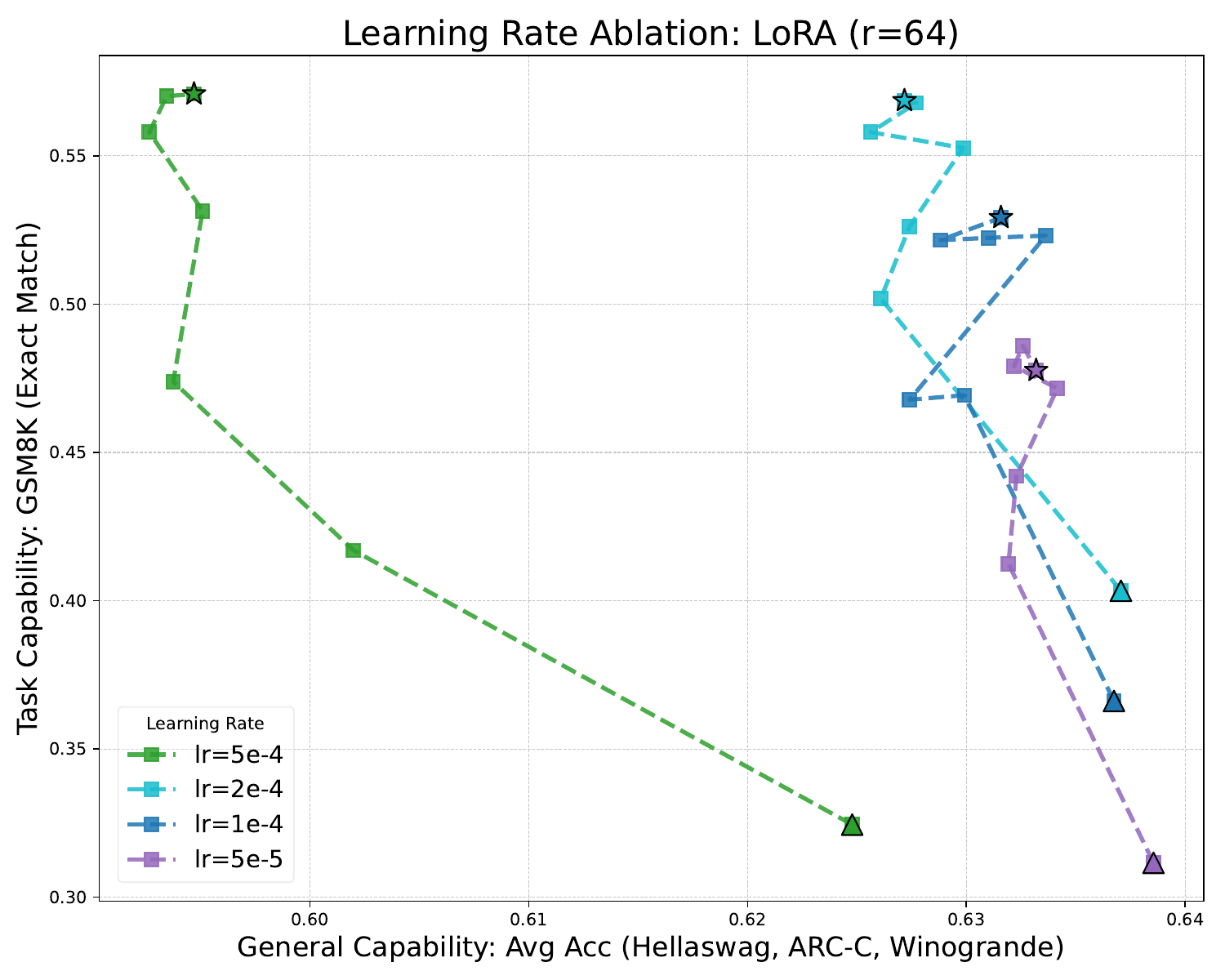}
    \!\!\!
	& \includegraphics[width=0.22\linewidth]{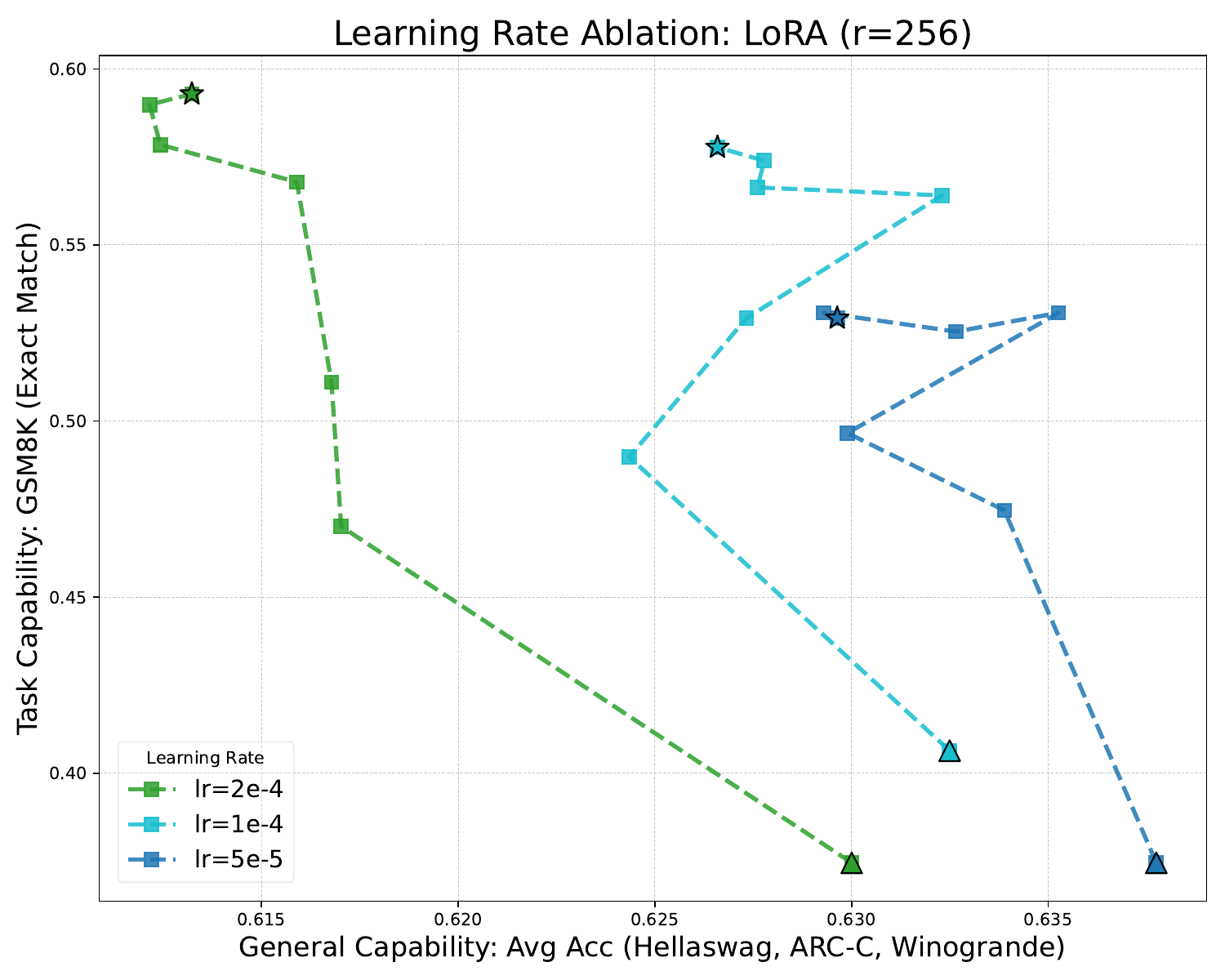} 
\end{tabular} 

\vskip-0.2cm
\caption{Learning rate affects forgetting. Note that learning rate lr=5e-4 failed for LoRA with $r=256$.}
\label{fig:learning_rate_affects_forgetting_llama} \vskip-0.15cm
\end{figure}

\subsection{More Detailed Activation Plots} 
In Figures~\ref{fig:optimizer_activation_sparsity_module_wise} and \ref{fig:optimizer_activation_sparsity_activation_split}, we present the average activation sparsity of detailed modules and specific activation splits. We can observe that the detailed sparsity plots also generally follow our observations that optimizers regularize the activations based on the corresponding matrix-induced norm $\alpha,\beta$. Interestingly, in terms of sparsity for $\cA_{1,\infty}$ (including AdamW and SignSGD), the input activations of the next layer can be quite different from the output activations of the last layer, which should connect. This can be related to the effect of the activation function (inside FFN), attention logits computation (inside attention modules), and layernorms and residual connections (between modules and layers). It would be an interesting future work to further explore the mechanism behind this phenomenon and how the activations change during training.

\begin{figure}[t]
\centering
\begin{tabular}{cccc}
	\includegraphics[width=0.225\linewidth]{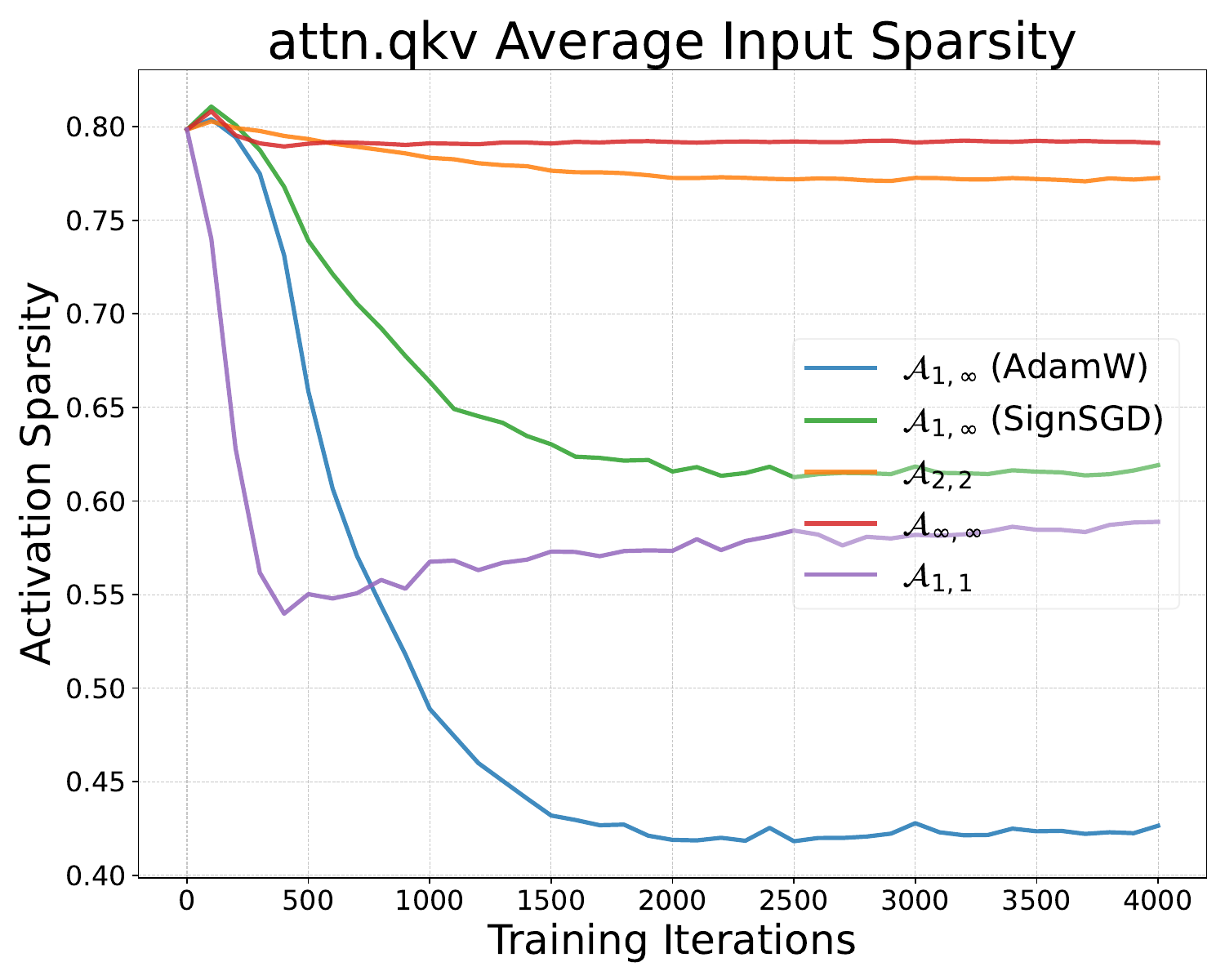}
	\!\!\!
	& \includegraphics[width=0.225\linewidth]{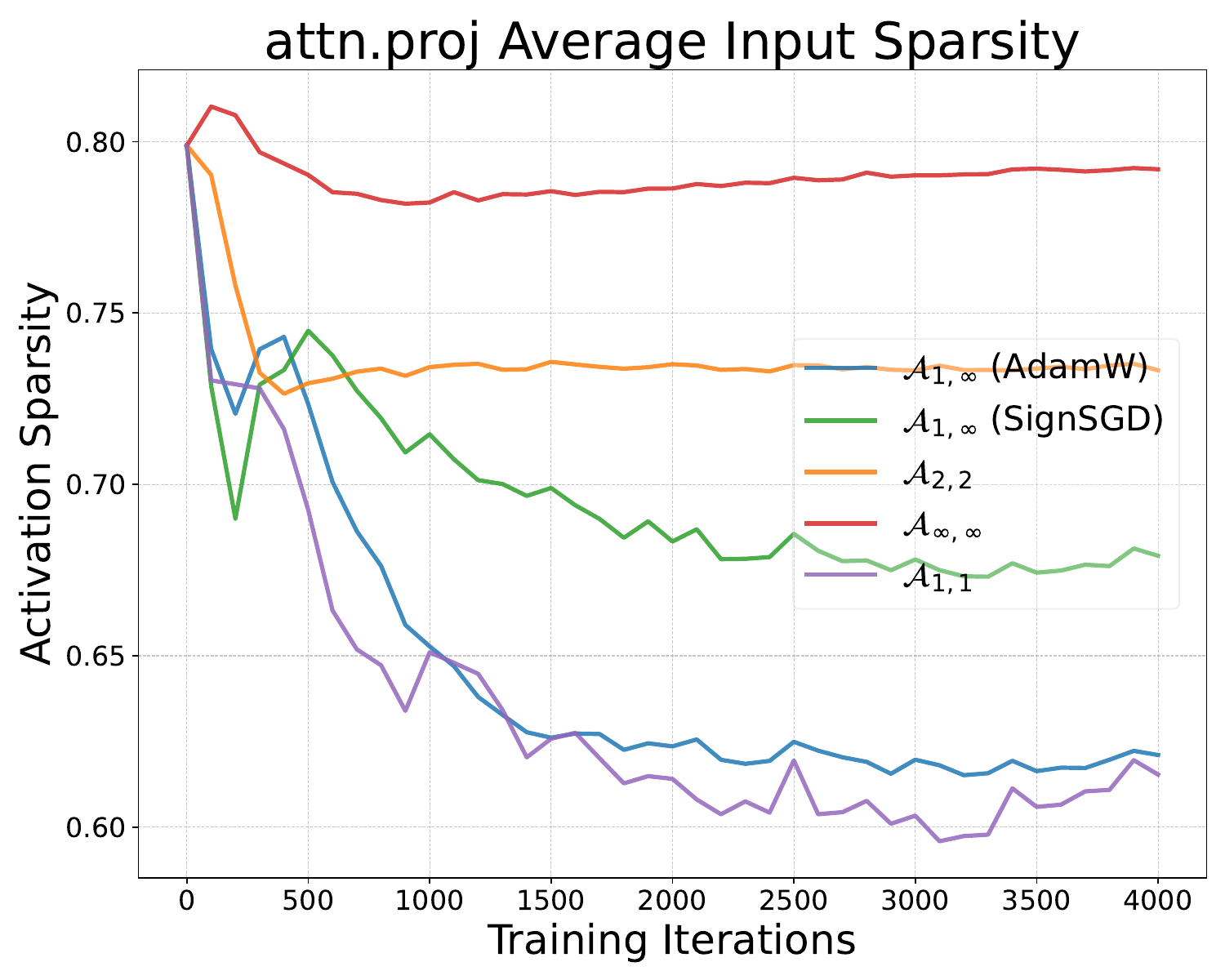} 
    \!\!\!
	& \includegraphics[width=0.225\linewidth]{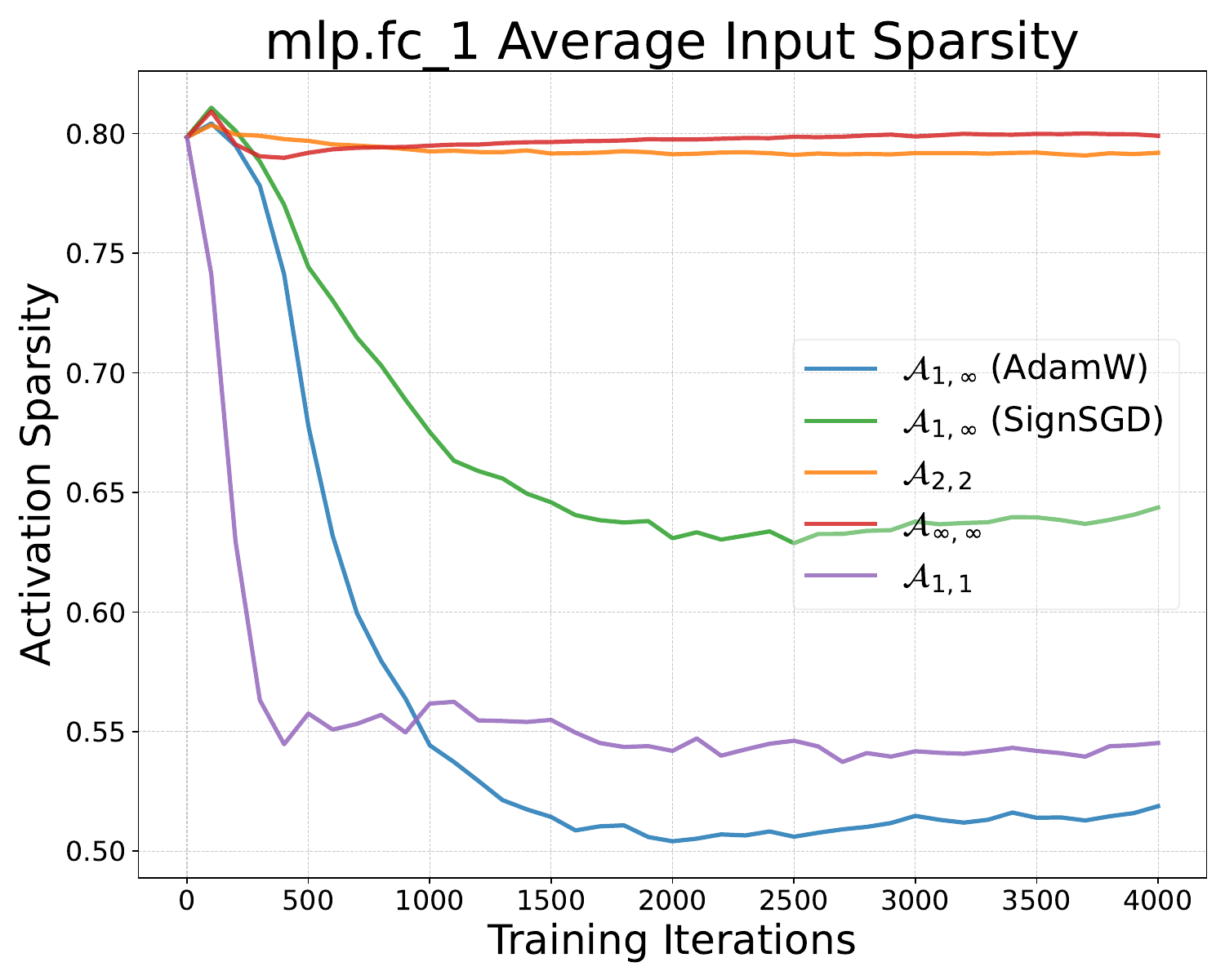}
    \!\!\!
	& \includegraphics[width=0.225\linewidth]{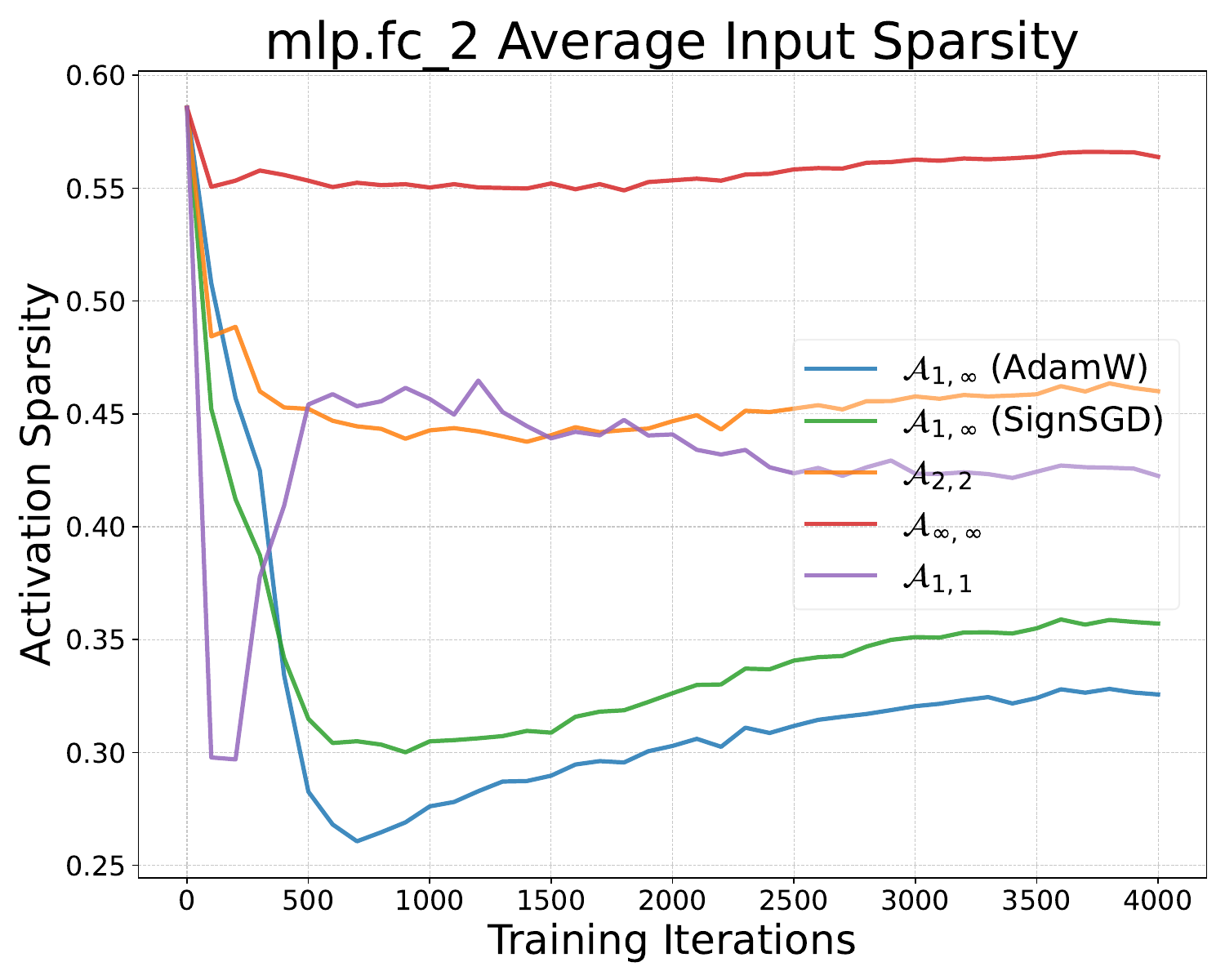} \\
    \includegraphics[width=0.225\linewidth]{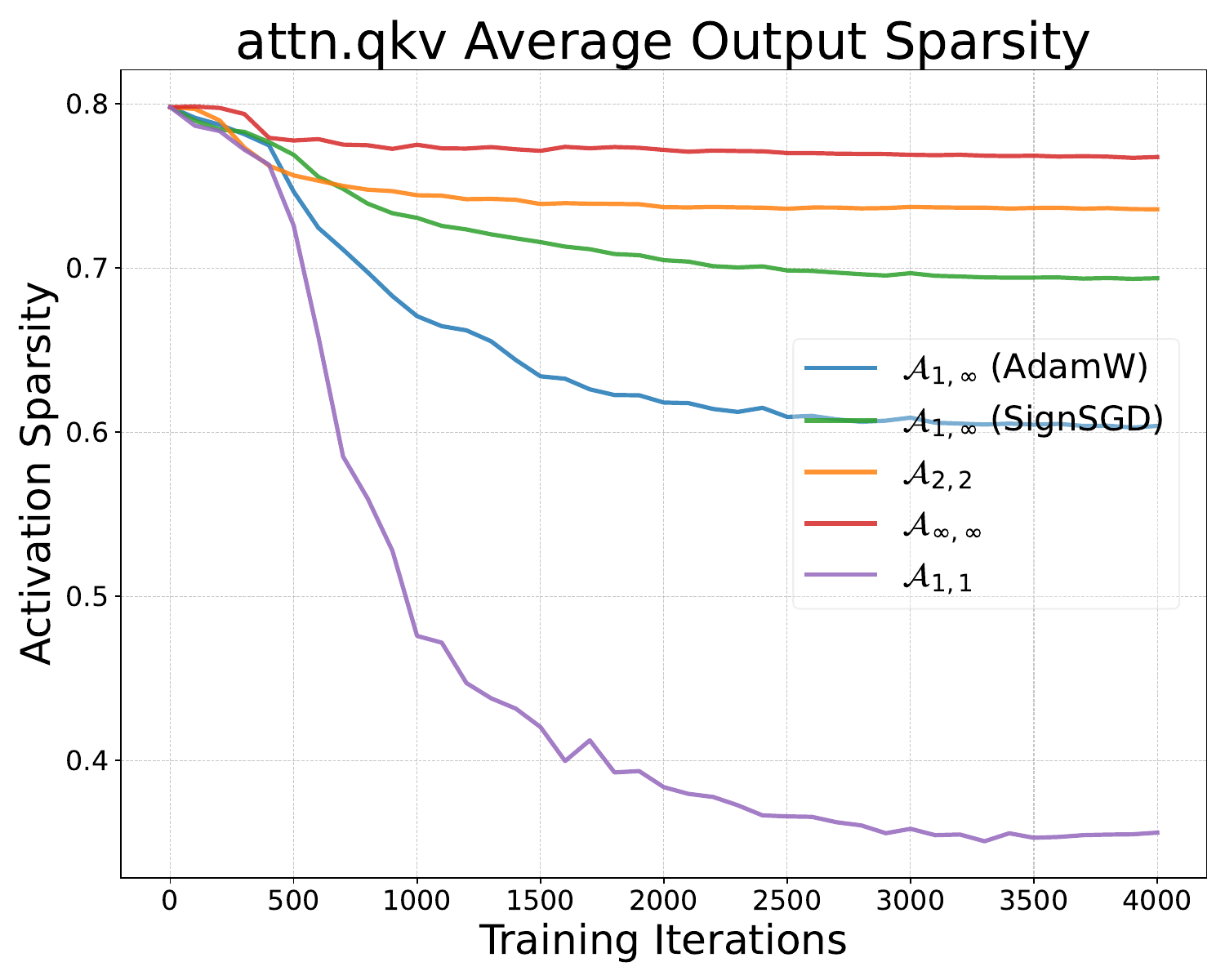}
	\!\!\!
	& \includegraphics[width=0.225\linewidth]{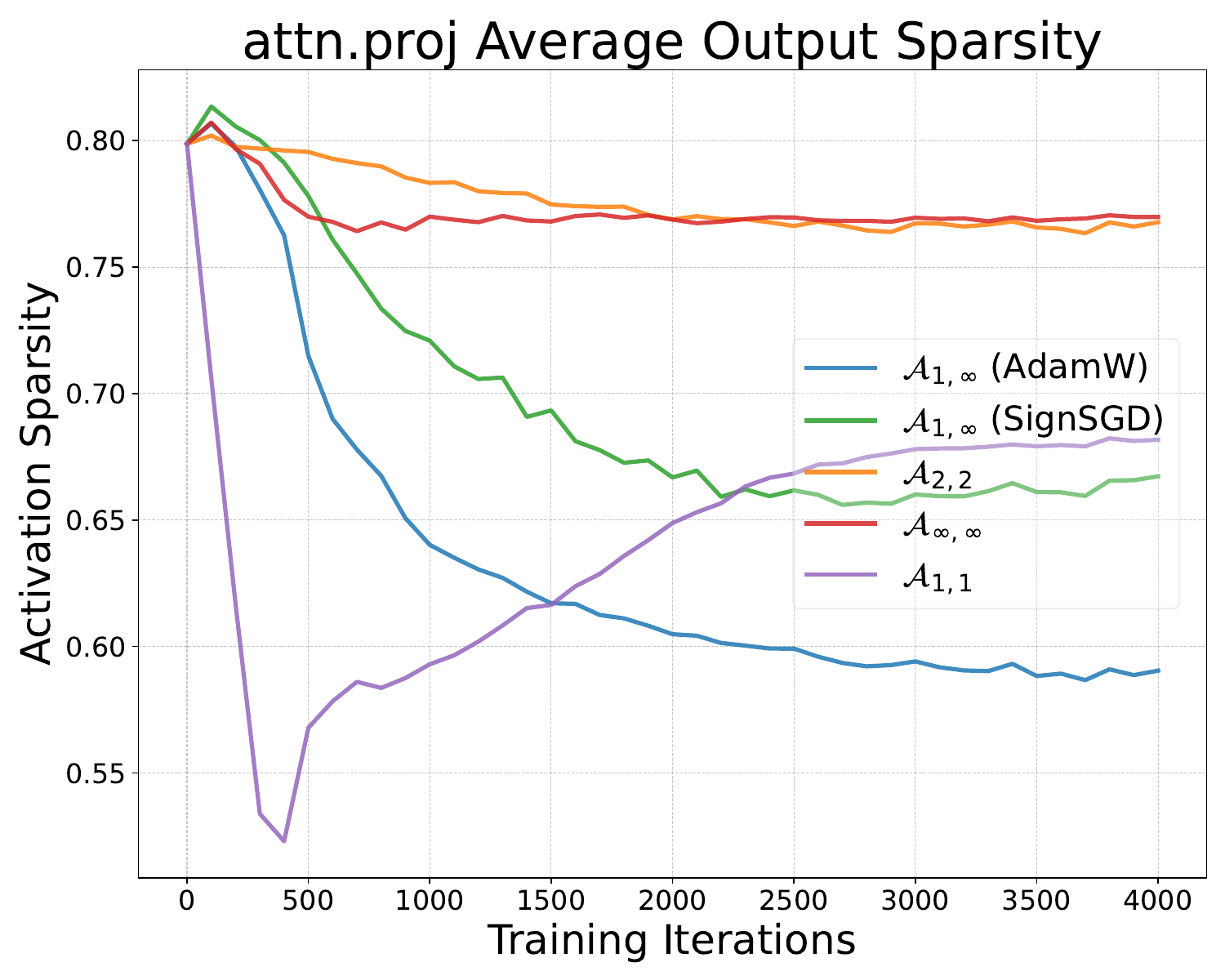} 
    \!\!\!
	& \includegraphics[width=0.225\linewidth]{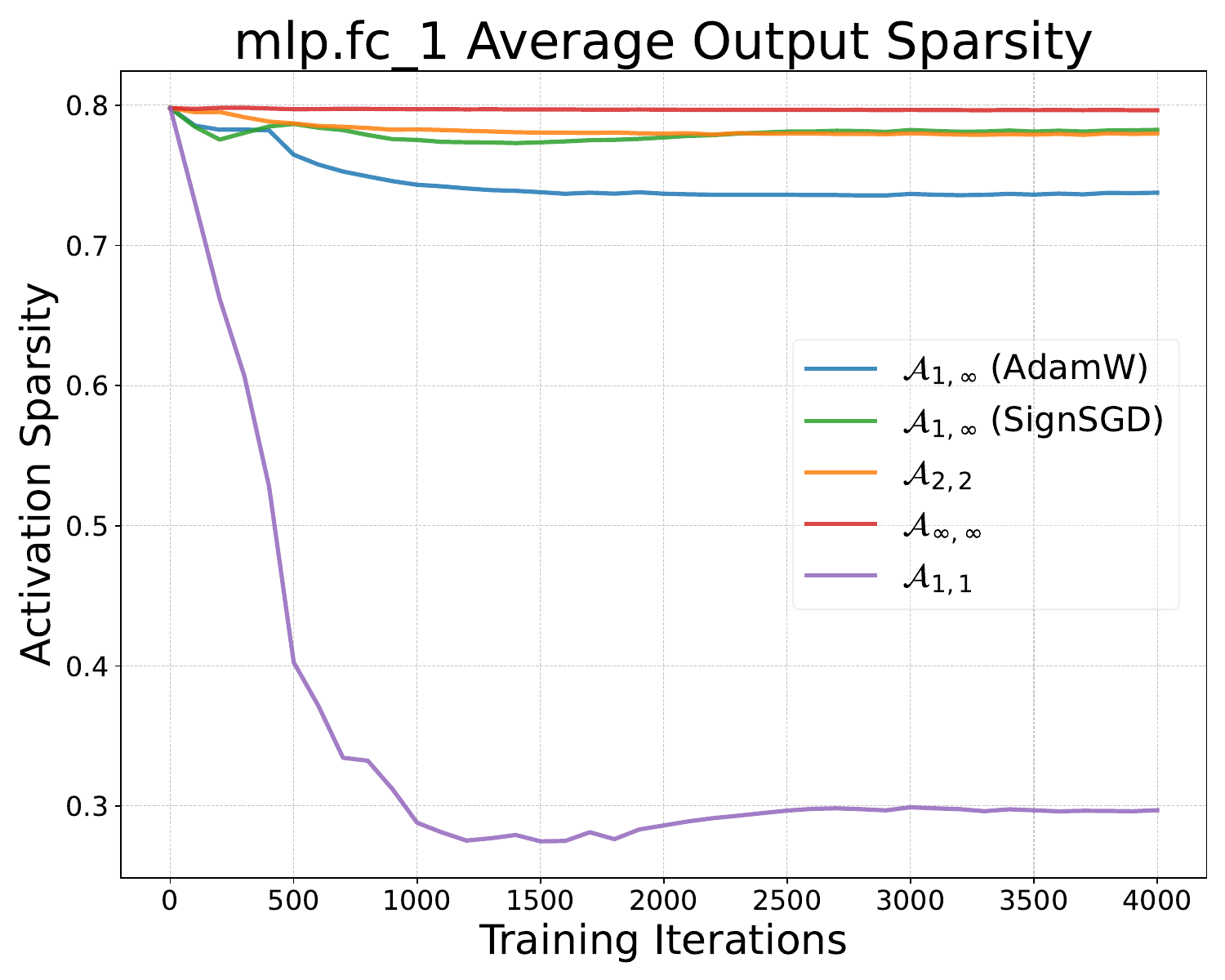}
    \!\!\!
	& \includegraphics[width=0.225\linewidth]{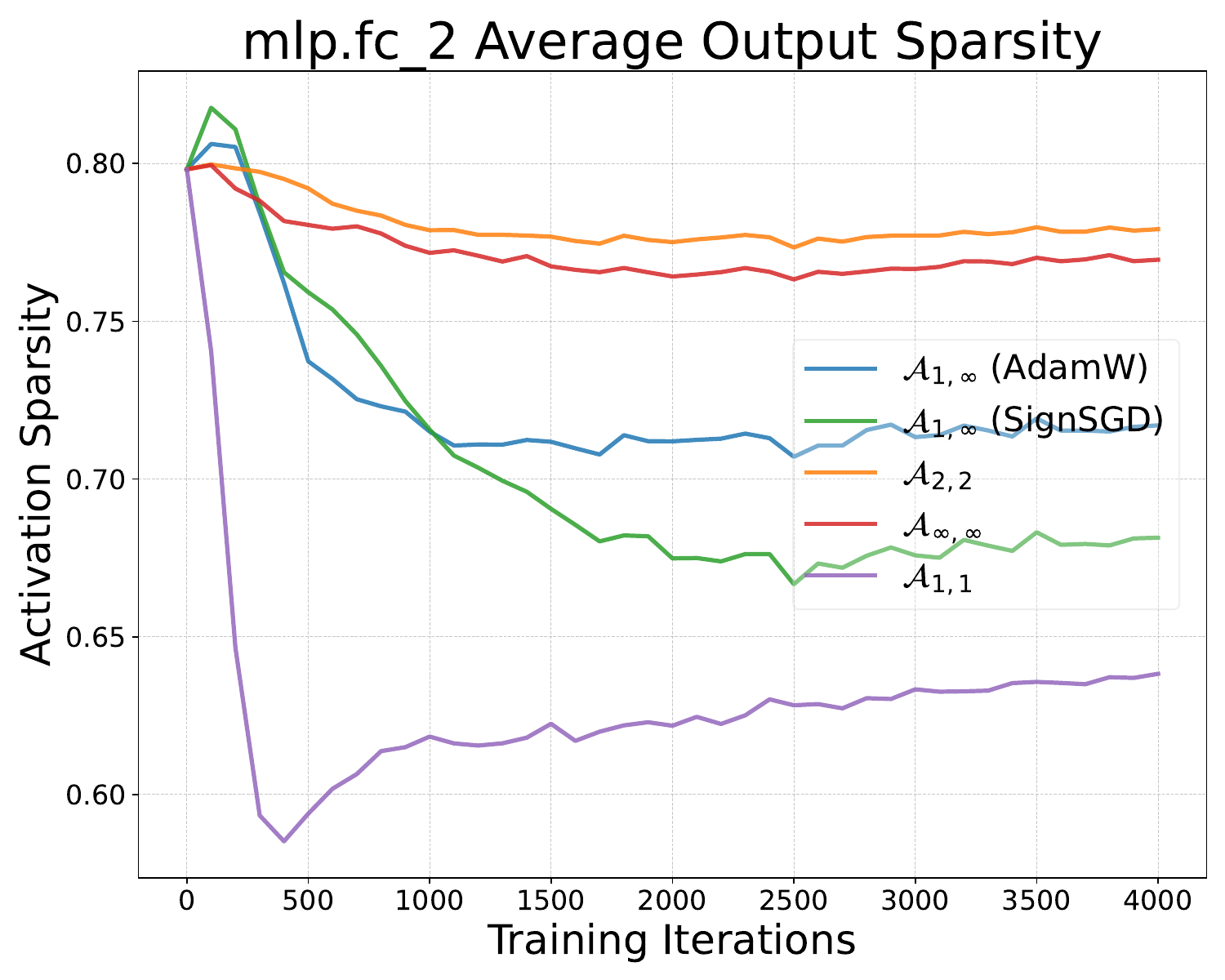}
\end{tabular} 

\vskip-0.2cm
\caption{The average activation sparsity of each linear layer under the same settings of Figure~\ref{fig:optimizer_activation_sparsity}. 
We consider 2 linear layers for attention modules: the QKV projection (attn.qkv) and attention output projection (attn.proj); 2 linear layers for FFN modules: the up projection (mlp.fc\_1) and down projection (mlp.fc\_2).}
\label{fig:optimizer_activation_sparsity_module_wise} \vskip-0.15cm
\end{figure}

\begin{figure}[ht]
\centering
\begin{tabular}{cc}
	\includegraphics[width=0.4\linewidth]{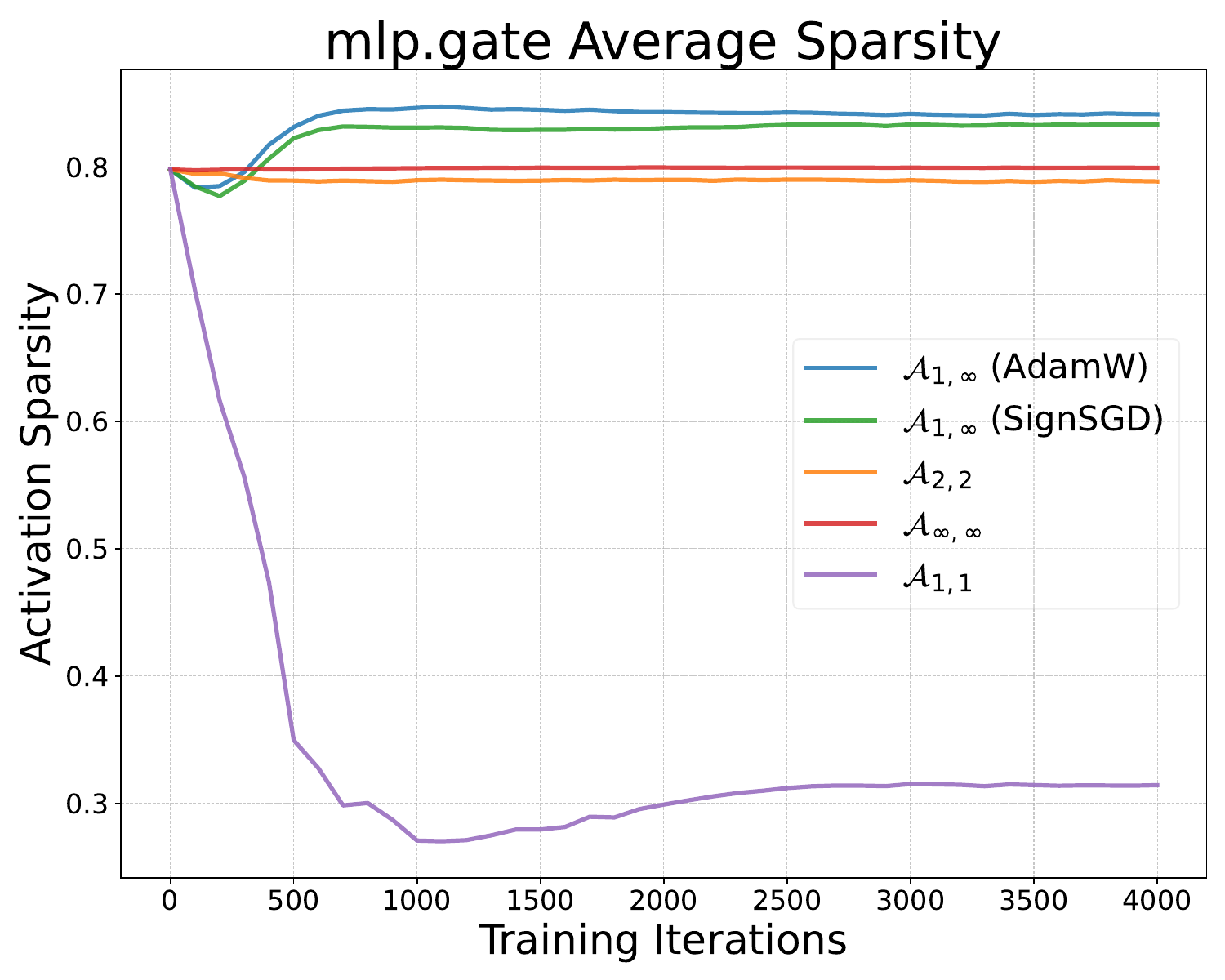}
	\!\!\!
	& \includegraphics[width=0.4\linewidth]{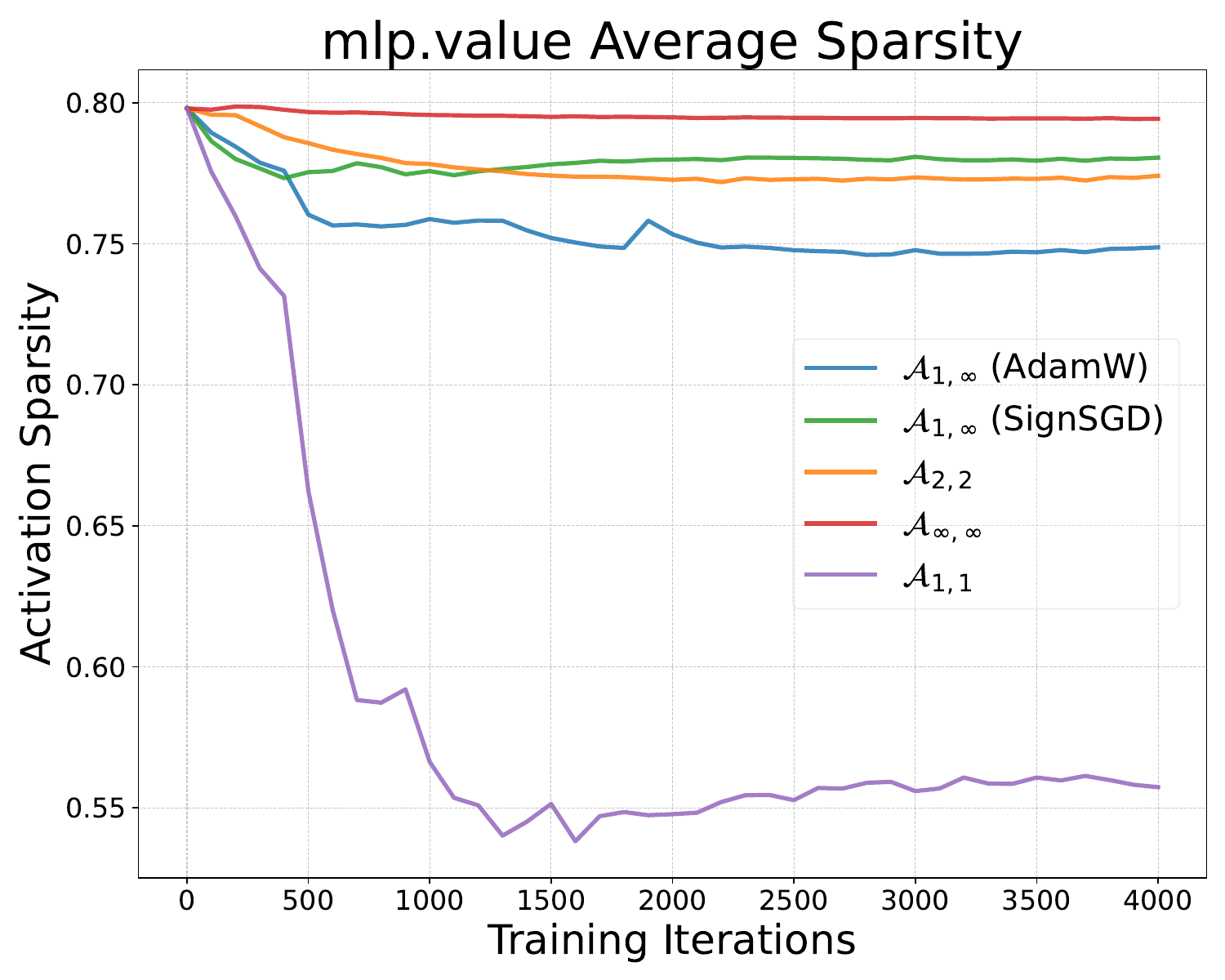} 
\end{tabular} 

\begin{tabular}{ccc}
	\includegraphics[width=0.3\linewidth]{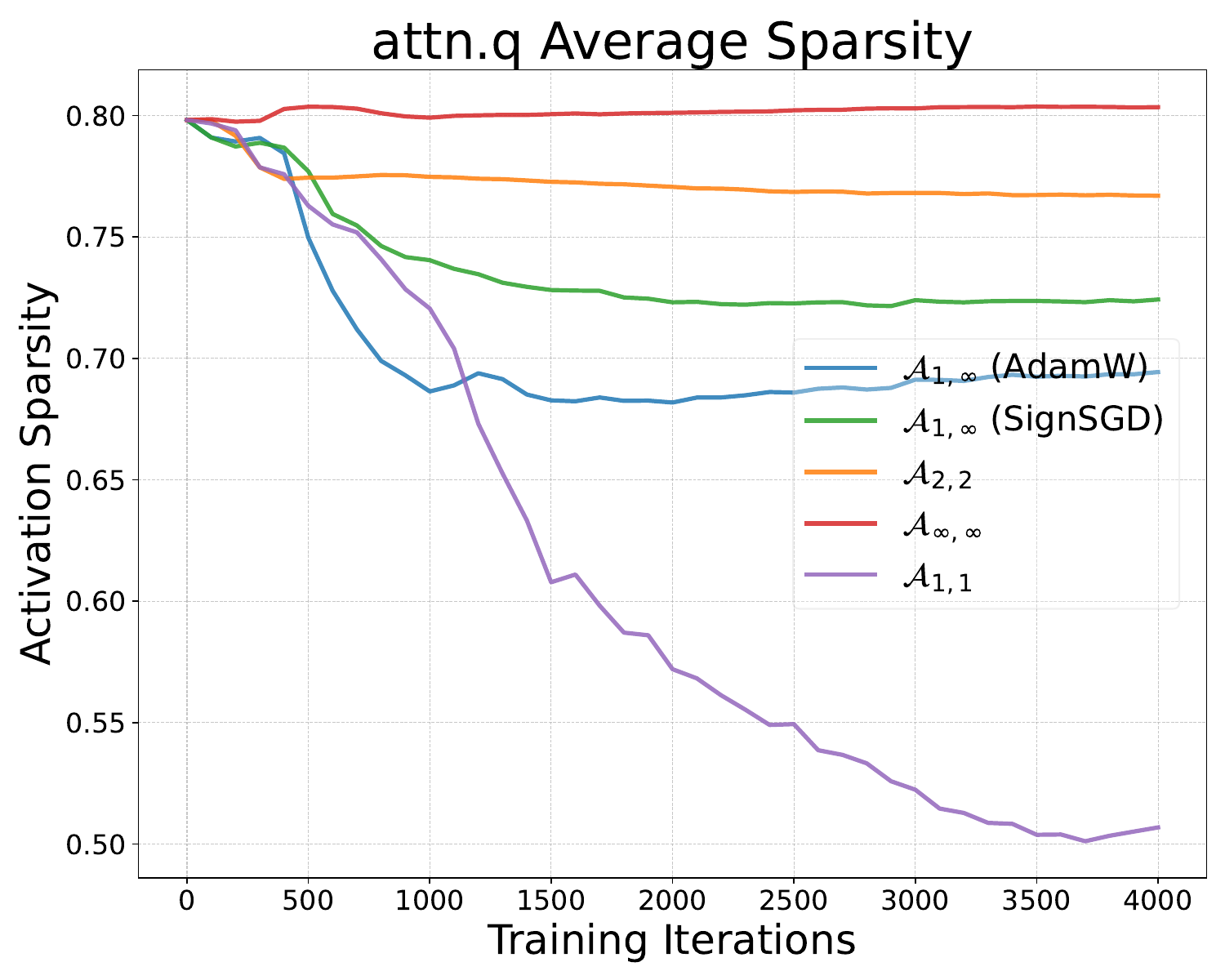}
	\!\!\!
	& \includegraphics[width=0.3\linewidth]{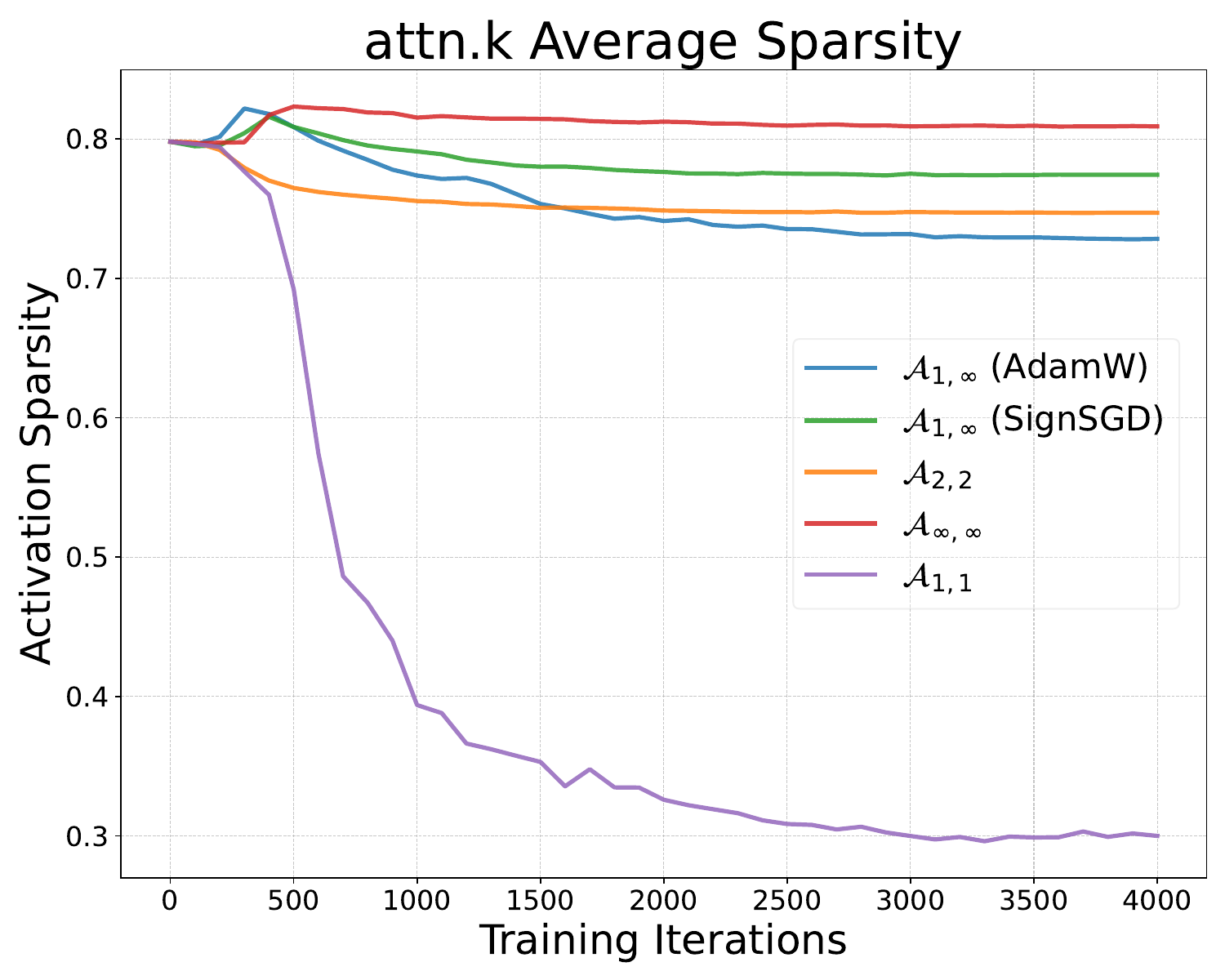} 
    \!\!\!
	& \includegraphics[width=0.3\linewidth]{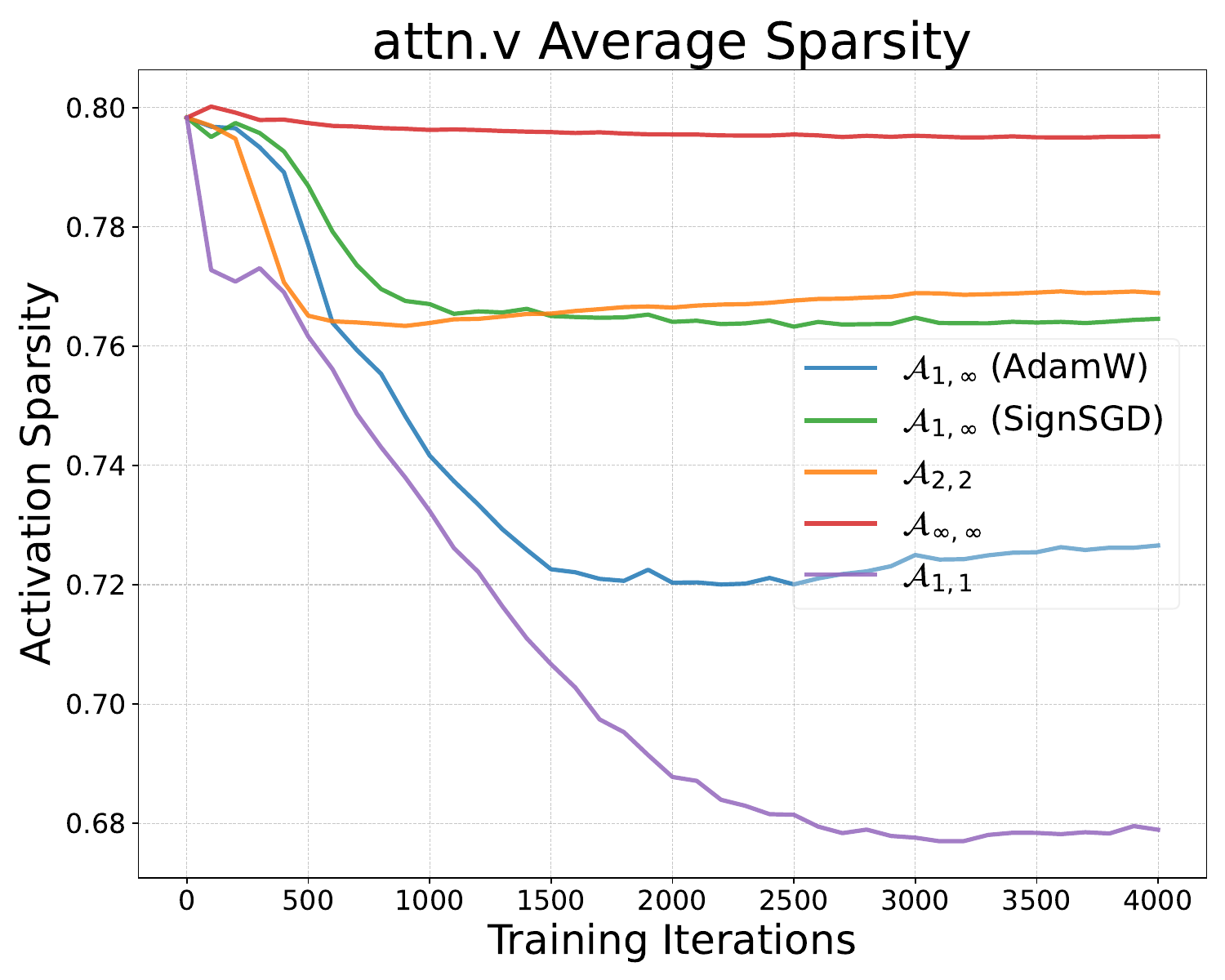} 
\end{tabular} 

\vskip-0.2cm
\caption{The average activation sparsity of some split activations under the same settings of Figure~\ref{fig:optimizer_activation_sparsity} and \ref{fig:optimizer_activation_sparsity_module_wise}, including q,k,v in attention modules and gate,value in FFN modules introduced by the SwiGLU activation function.}
\label{fig:optimizer_activation_sparsity_activation_split} \vskip-0.15cm
\end{figure}

\section{Proof of Section~\ref{sec:understanding_phenomenon}}

\subsection{Comparison between Norms}
We begin with some auxiliary lemmas for comparing different norms. Lemma~\ref{lem:comparison_vector_norms} is a standard result for vector norms, and Lemma~\ref{lem:comparing_matrix_norms} is a simple extension of it.

\begin{lemma}\label{lem:comparison_vector_norms}
    For an arbitrary vector $z \in \RR^d$ and $\alpha_1,\alpha_2 \in [1,\infty]$ such that $\alpha_1 \le \alpha_2$, it holds that
    \begin{align*}
        \Norm{z}_{\alpha_2} \le \Norm{z}_{\alpha_1} \le d^{\frac{1}{\alpha_1} - \frac{1}{\alpha_2}}\Norm{z}_{\alpha_2} .
    \end{align*}
\end{lemma}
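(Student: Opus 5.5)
The statement to prove is Lemma~\ref{lem:comparison_vector_norms}, the standard comparison of $\ell_p$ norms on $\RR^d$. I will prove the two inequalities separately, first for finite exponents and then by letting $\alpha_2 \to \infty$ (or checking directly) for the infinite case. The case $\alpha_1 = \alpha_2$ is trivial, so assume $\alpha_1 < \alpha_2$.

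\textbf{Left inequality} $\Norm{z}_{\alpha_2} \le \Norm{z}_{\alpha_1}$. By homogeneity it suffices to treat $\Norm{z}_{\alpha_1} = 1$ (the case $z=0$ is trivial). Then every coordinate satisfies $\Abs{z_i} \le 1$. For $\alpha_2 < \infty$ this gives $\Abs{z_i}^{\alpha_2} \le \Abs{z_i}^{\alpha_1}$, and summing yields
\begin{align*}
\Norm{z}_{\alpha_2}^{\alpha_2} \le \Norm{z}_{\alpha_1}^{\alpha_1} = 1 .
\end{align*}
For $\alpha_2 = \infty$ the bound is simply $\max_i \Abs{z_i} \le 1$, which already holds.

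\textbf{Right inequality} $\Norm{z}_{\alpha_1} \le d^{\frac{1}{\alpha_1} - \frac{1}{\alpha_2}}\Norm{z}_{\alpha_2}$. Note that $\alpha_1 < \alpha_2$ forces $\alpha_1 < \infty$. Apply H\"older's inequality to $\sum_i \Abs{z_i}^{\alpha_1}\cdot 1$ with conjugate exponents $p = \alpha_2/\alpha_1$ and $q = p/(p-1)$, which is valid when $\alpha_2<\infty$:
\begin{align*}
\sum_{i=1}^d \Abs{z_i}^{\alpha_1} \le \Big(\sum_{i=1}^d \Abs{z_i}^{\alpha_2}\Big)^{\alpha_1/\alpha_2} d^{1-\alpha_1/\alpha_2} .
\end{align*}
Taking the $1/\alpha_1$ power gives the claimed factor $d^{1/\alpha_1 - 1/\alpha_2}$. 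When $\alpha_2=\infty$, bound each term directly, $\Abs{z_i}^{\alpha_1} \le \Norm{z}_\infty^{\alpha_1}$, and sum to get $\Norm{z}_{\alpha_1} \le d^{1/\alpha_1}\Norm{z}_\infty$, which matches the formula with $1/\infty = 0$.

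There is no real obstacle here. The only points needing care are the endpoint conventions ($1/\infty=0$, and $\alpha_1=\infty$ forcing $\alpha_2=\infty$) and choosing the correct H\"older exponents. Both inequalities are tight: equality holds for $z = e_1$ in the left one and for $z=\mathbf{1}$ in the right one.
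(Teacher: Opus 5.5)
Your proof is correct and complete, including the endpoint cases $\alpha_1=\alpha_2$ and $\alpha_2=\infty$, and the tightness examples are right. The paper gives no proof of this lemma and simply calls it a standard result. Your argument is exactly that standard one: normalize $\Norm{z}_{\alpha_1}=1$ so that each $\Abs{z_i}\le 1$ for the left inequality, and apply H\"older with exponents $\alpha_2/\alpha_1$ and its conjugate for the right inequality.
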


\begin{lemma}\label{lem:comparing_matrix_norms}
    For an arbitrary matrix $A \in \RR^{m\times n}$ and $\alpha_1,\alpha_2 \in [1,\infty]$ such that $\alpha_1 \le \alpha_2$ and $\beta_1,\beta_2 \in [1,\infty]$ such that $\beta_1 \le \beta_2$, it holds that
    \begin{align*}
        \Norm{A}_{\alpha_1,\beta} \le \Norm{A}_{\alpha_2,\beta} \le n^{\frac{1}{\alpha_1} - \frac{1}{\alpha_2}} \Norm{A}_{\alpha_1,\beta} 
        \quad \mathrm{and} \quad
        \Norm{A}_{\alpha,\beta_2} \le \Norm{A}_{\alpha,\beta_1} \le m^{\frac{1}{\beta_1} - \frac{1}{\beta_2}} \Norm{A}_{\alpha,\beta_2} 
    \end{align*}
\end{lemma}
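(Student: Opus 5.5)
The plan is to derive both chains pointwise from Lemma~\ref{lem:comparison_vector_norms}, applied to vectors inside the ratio that defines the induced norm, and then take the maximum over $x \neq 0$. The maximum is attained and preserves pointwise inequalities, since each ratio is homogeneous of degree zero and the unit sphere is compact. No further structure of $A$ is used.

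\textbf{First chain (input exponent).} Fix $x \in \RR^n \setminus \{0\}$. Lemma~\ref{lem:comparison_vector_norms} in dimension $d=n$ gives $\Norm{x}_{\alpha_2} \le \Norm{x}_{\alpha_1} \le n^{\frac{1}{\alpha_1}-\frac{1}{\alpha_2}}\Norm{x}_{\alpha_2}$. Taking reciprocals and multiplying by $\Norm{Ax}_\beta \ge 0$ yields
\begin{align*}
n^{-\left(\frac{1}{\alpha_1}-\frac{1}{\alpha_2}\right)}\frac{\Norm{Ax}_\beta}{\Norm{x}_{\alpha_2}} \le \frac{\Norm{Ax}_\beta}{\Norm{x}_{\alpha_1}} \le \frac{\Norm{Ax}_\beta}{\Norm{x}_{\alpha_2}} .
\end{align*}
Maximizing over $x$ turns the right inequality into $\Norm{A}_{\alpha_1,\beta} \le \Norm{A}_{\alpha_2,\beta}$. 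The left inequality becomes $\Norm{A}_{\alpha_2,\beta} \le n^{\frac{1}{\alpha_1}-\frac{1}{\alpha_2}}\Norm{A}_{\alpha_1,\beta}$.

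\textbf{Second chain (output exponent).} The denominator $\Norm{x}_\alpha$ is now fixed and we compare numerators. Lemma~\ref{lem:comparison_vector_norms} in dimension $d=m$, applied to $z = Ax \in \RR^m$, gives $\Norm{Ax}_{\beta_2} \le \Norm{Ax}_{\beta_1} \le m^{\frac{1}{\beta_1}-\frac{1}{\beta_2}}\Norm{Ax}_{\beta_2}$. Dividing by $\Norm{x}_\alpha$ and maximizing over $x$ gives $\Norm{A}_{\alpha,\beta_2} \le \Norm{A}_{\alpha,\beta_1} \le m^{\frac{1}{\beta_1}-\frac{1}{\beta_2}}\Norm{A}_{\alpha,\beta_2}$.

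\textbf{Main obstacle.} There is no genuine obstacle; the only care needed is bookkeeping. First, the direction flips in the first chain because the compared norm sits in the denominator. Second, the input comparison uses the dimension $n$ while the output comparison uses $m$. Third, the endpoint cases $\alpha_i = \infty$ or $\beta_i = \infty$ are covered by reading $1/\infty = 0$, as in Lemma~\ref{lem:comparison_vector_norms}.
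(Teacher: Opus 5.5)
Your proof is correct and takes essentially the same approach as the paper: both apply Lemma~\ref{lem:comparison_vector_norms} to the vector inside the ratio $\Norm{Ax}_\beta/\Norm{x}_\alpha$ (dimension $n$ for the input exponent, $m$ for the output exponent) and then pass to the maximum. The paper chains through explicit maximizers $z_1,z_2$ and only says the output-exponent chain is ``similar''; you take the supremum of pointwise inequalities, which needs no attainment, and you write out both chains.
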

\begin{proof}
    Let us denote 
    \begin{align*}
        z_1 \in \argmax{z\neq 0} \frac{\Norm{A z}_\beta}{\Norm{z}_{\alpha_1}} \quad \mathrm{and} \quad z_1 \in \argmax{z\neq 0} \frac{\Norm{A z}_\beta}{\Norm{z}_{\alpha_2}} ,
    \end{align*}
    which implies that
    \begin{align*}
        \Norm{A z_1}_\beta = \Norm{A}_{\alpha_1, \beta} \Norm{z_1}_{\alpha_1} \quad \mathrm{and} \quad \Norm{A z_2}_\beta = \Norm{A}_{\alpha_2, \beta} \Norm{z_2}_{\alpha_2} .
    \end{align*}
    Combining with Lemma~\ref{lem:comparison_vector_norms}, we therefore have
    \begin{align*}
        \Norm{A}_{\alpha_1, \beta} = \frac{\Norm{A z_1}_\beta}{\Norm{z_1}_{\alpha_1}} \le \frac{\Norm{A z_1}_\beta}{\Norm{z_1}_{\alpha_2}} \le \frac{\Norm{A z_2}_\beta}{\Norm{z_2}_{\alpha_2}} = \Norm{A}_{\alpha_2, \beta} 
    \end{align*}
    and
    \begin{align*}
        \Norm{A}_{\alpha_2, \beta} = \frac{\Norm{A z_2}_\beta}{\Norm{z_2}_{\alpha_2}} \le n^{\frac{1}{\alpha_1} - \frac{1}{\alpha_2}} \frac{\Norm{A z_2}_\beta}{\Norm{z_2}_{\alpha_1}} \le n^{\frac{1}{\alpha_1} - \frac{1}{\alpha_2}} \frac{\Norm{A z_1}_\beta}{\Norm{z_1}_{\alpha_1}} = n^{\frac{1}{\alpha_1} - \frac{1}{\alpha_2}} \Norm{A}_{\alpha_1, \beta} ,
    \end{align*}
    which concludes the proof for the first inequality. We can prove the second inequality similarly.
\end{proof}

\subsection{Proof of Theorem~\ref{thm:same_opt_forgets_less}}
\begin{proof}
    Consider an arbitrary SFT optimizer choice $\cA_{\alpha_2, \beta_2}$ with $\alpha_2,\beta_2 \in [1,\infty)$ and $\Delta W$ is the corresponding SFT update. 
    From Assumption~\ref{asm:weight_activation_alignment}, for a fixed $\beta_* \in [1,\infty)$, there exists $\alpha_* \in [1,\infty)$ such that
    \begin{align*}
        \EE\left[ \Norm{\Delta W x}_{\beta_*}^2 \right] = \Theta \left( \Norm{\Delta W}_{\alpha_*,\beta_*}^2 \EE\left[ \Norm{x}_{\alpha_*}^2 \right] \right) .
    \end{align*}
    Since for all $\alpha,\beta \in [1,\infty]$, it always holds that
    \begin{align*}
        \Norm{\Delta W x}_{\beta} \le \Norm{\Delta W}_{\alpha,\beta} \Norm{x}_\alpha 
    \end{align*}
    based on the definition of matrix induced norms, $\alpha_*,\beta_*$ should correspond to the tightest upper bound to yield Assumption~\ref{asm:weight_activation_alignment}.
    We then discuss the best value of $\alpha$, i.e., $\alpha_*$, in this case. 
    \begin{itemize}
        \item If $\alpha \in (\alpha_1,\infty]$, based on Lemma~\ref{lem:comparing_matrix_norms} and Assumption~\ref{asm:optimizer_choice_activation_property}, it holds that
        \begin{align*}
            \Norm{\Delta W}_{\alpha_1,\beta_*} \le \Norm{\Delta W}_{\alpha,\beta_*} \quad \mathrm{and} \quad \EE\left[ \Norm{x}_{\alpha_1}^2 \right] = \Theta\left( \EE\left[ \Norm{x}_{\alpha}^2 \right] \right).
        \end{align*}
        Thus, we have
        \begin{align*}
            \Norm{\Delta W}_{\alpha_1,\beta_*}^2 \EE\left[ \Norm{x}_{\alpha_1}^2 \right] = \cO\left( \Norm{\Delta W}_{\alpha,\beta_*}^2 \EE\left[ \Norm{x}_{\alpha}^2 \right] \right)
        \end{align*}
        
        \item If $\alpha \in [1,\alpha_1)$, based on Lemma~\ref{lem:comparing_matrix_norms} and Assumption~\ref{asm:optimizer_choice_activation_property}, it holds that
        \begin{align*}
            \Norm{\Delta W}_{\alpha_1,\beta_*} \le n^{\frac{1}{\alpha} - \frac{1}{\alpha_1}} \Norm{\Delta W}_{\alpha,\beta_*} \quad \mathrm{and} \quad \EE\left[ \Norm{x}_{\alpha_1}^2 \right] = \Theta\left( n^{\frac{2}{\alpha_1} - \frac{2}{\alpha}} \EE\left[ \Norm{x}_{\alpha}^2 \right] \right).
        \end{align*}
        Thus, we have
        \begin{align*}
            \Norm{\Delta W}_{\alpha_1,\beta_*}^2 \EE\left[ \Norm{x}_{\alpha_1}^2 \right] = \cO\left( \Norm{\Delta W}_{\alpha,\beta_*}^2 \EE\left[ \Norm{x}_{\alpha}^2 \right] \right)
        \end{align*}
    \end{itemize}
    From the two cases, we conclude that we should have $\alpha_*=\alpha_1$ to yield the tightest upper bound since it holds that $\Norm{\Delta W}_{\alpha_1,\beta_*}^2 \EE\left[ \Norm{x}_{\alpha_1}^2 \right] = \cO\left( \Norm{\Delta W}_{\alpha,\beta_*}^2 \EE\left[ \Norm{x}_{\alpha}^2 \right] \right)$ for all $\alpha \in [1,\infty]$.

    Moreover, based on Assumption~\ref{asm:optimizer_choice_activation_property} on $\Delta W x$, we have
    \begin{align*}
        \EE\left[ \Norm{\Delta W x}_{2}^2 \right] 
        &= \Theta \left( m^{-\max\{ \frac{2}{\beta_2} - 1,0 \} } \EE\left[ \Norm{\Delta W x}_{\beta_2}^2 \right] \right) \\
        &= \Theta \left( m^{-\max\{ \frac{2}{\beta_2} - 1,0 \} + \max\{ \frac{2}{\beta_2} - \frac{2}{\beta_*},0 \} } \EE\left[ \Norm{\Delta W x}_{\beta_*}^2 \right] \right) .
    \end{align*}
    This implies that by choosing $\beta_* = \beta_2$ and substituting $\alpha_* = \alpha_1$, we have
    \begin{align}\label{eq:proof_thm_lforget_equivalent}
        \Lforget(\Delta W) \triangleq \EE\left[ \Norm{\Delta W x}_{2}^2 \right] = \Theta \left( m^{\max\{ \frac{1}{\beta_2} - \frac{1}{2},0 \} }  \Norm{\Delta W}_{\alpha_1,\beta_2}^2 \EE\left[ \Norm{x}_{\alpha_1}^2 \right] \right) 
    \end{align}
    for $\Delta W$ derived with any choice of $\alpha_2,\beta_2 \in [1,\infty]$. 

    Then we consider the SFT update $\Delta W$. Based on \eqref{eq:steepest_descent} and Assumption~\ref{asm:sft_loss}, the $\Delta W$ derived by SFT optimizer $\cA_{\alpha_2,\beta_2}$ holds
    \begin{align*}
        \Delta W(\cA_{\alpha_2,\beta_2}) = \argmin{\Norm{Z}_{\alpha_2,\beta_2} \le \eta} \dotprod{Z}{G} ,
    \end{align*}
    where $\eta$ denotes the update scale. 
    Taking $C \triangleq H_0+\min_{\Norm{Z}_{\alpha_2,\beta_2} \le \eta}\dotprod{Z}{G}$, this is equivalent to 
    \begin{align}\label{eq:proof_thm_sft_optimizer}
        \Delta W(\cA_{\alpha_2,\beta_2}) = \argmin{\dotprod{Z}{G} \le C-H_0} \Norm{Z}_{\alpha_2,\beta_2} = \argmin{\Lsft(Z) \le C} \Norm{Z}_{\alpha_2,\beta_2} ,
    \end{align}
    which can be straightforwardly verified by the KKT condition.
    
    With \eqref{eq:proof_thm_lforget_equivalent} and \eqref{eq:proof_thm_sft_optimizer} in hand, we are ready to prove the theorem. For brevity, we define the following function as the bound derived in \eqref{eq:proof_thm_lforget_equivalent} for any $\alpha,\beta\in [1,\infty]$:
    \begin{align*}
        g(\alpha,\beta) \triangleq m^{\max\{ \frac{1}{\beta} - \frac{1}{2},0 \} }  \Norm{\Delta W(\cA_{\alpha,\beta})}_{\alpha_1,\beta}^2 \EE\left[ \Norm{x}_{\alpha_1}^2 \right],
    \end{align*}
    where $\Delta W(\cA_{\alpha,\beta})$ denotes the SFT update if $\cA_{\alpha,\beta}$ is employed as the SFT optimizer. Thus, it holds that $\Lforget(\Delta W(\cA_{\alpha,\beta})) = \Theta\left( g(\alpha,\beta) \right)$.
    Then we have that for any $\alpha, \beta \in [1,\infty]$ pair yielding $\beta < 2$, it holds that
    $
        g(\alpha_1,\beta_2) =\cO\left( g(\alpha, \beta) \right)
    $
    with $\beta_2 \ge 2$, since we have
    \begin{align*}
        \Norm{\Delta W(\cA_{\alpha,\beta})}_{\alpha_1,\beta} \ge \Norm{\Delta W(\cA_{\alpha,\beta})}_{\alpha_1,\beta_2} \ge \min_{\dotprod{\Delta W}{G} \le C-H_0} \Norm{\Delta W}_{\alpha_1,\beta_2} = \Norm{\Delta W(\cA_{\alpha_1,\beta_2})}_{\alpha_1,\beta_2} 
    \end{align*}
    based on Lemma~\ref{lem:comparing_matrix_norms} and \eqref{eq:proof_thm_sft_optimizer}.

    When taking $\alpha_2=\alpha_1$ and $\beta_2 \ge 2$, we simultaneously have
    \begin{align*}
        g(\alpha_1,\beta_2) =\cO\left( g(\alpha, \beta) \right) \quad \text{and} \quad \Lforget(\Delta W(\cA_{\alpha,\beta})) = \Theta\left( g(\alpha,\beta) \right)
    \end{align*}
    for all $\alpha,\beta$ pair such that $\alpha\in[1,\infty]$ and $\beta \in [1,2]$. Therefore, it holds that
    \begin{align*}
        \Lforget(\Delta W(\cA_{\alpha_1,\beta_2})) = \cO\left( \inf_{\alpha\in[1,\infty],\beta \in [1,2]} \Lforget(\Delta W(\cA_{\alpha,\beta})) \right) ,
    \end{align*}
    which concludes the proof.
    
\end{proof}

\section{Licenses}\label{appendix:licenses}
The Alpaca dataset~\citep{alpaca} (\url{https://huggingface.co/datasets/tatsu-lab/alpaca}) is released under the cc-by-nc-4.0 license. The MetaMathQA~\citep{yu2023metamath} (\url{https://huggingface.co/datasets/meta-math/MetaMathQA}) and Magicoder~\citep{wei2023magicoder} (\url{https://huggingface.co/datasets/ise-uiuc/Magicoder-OSS-Instruct-75K}) datasets and the NanoGPT~\citep{karpathy2022nanogpt} (\url{https://github.com/karpathy/nanogpt}) and Dion~\citep{ahn2025dion,ahn2025dion2} (\url{https://github.com/microsoft/dion}) codebases are under the MIT license. The LlamaFactory codebase~\citep{zheng2024llamafactory} (\url{https://github.com/hiyouga/LlamaFactory}) is under the Apache-2.0 license. The Llama-2-7b-hf model~\citep{touvron2023llama} (\url{https://huggingface.co/meta-llama/Llama-2-7b-hf}) is under the llama2 license.

\end{document}